\documentclass[accepted]{uai2026} % after acceptance, for a revised version; 
% also before submission to see how the non-anonymous paper would look like 
                        
%% There is a class option to choose the math font
% \documentclass[mathfont=ptmx]{uai2026} % ptmx math instead of Computer
                                         % Modern (has noticeable issues)
% \documentclass[mathfont=newtx]{uai2026} % newtx fonts (improves upon
                                          % ptmx; less tested, no support)
% NOTE: Only keep *one* line above as appropriate, as it will be replaced
%       automatically for papers to be published. Do not make any other
%       change above this note for an accepted version.

%% Choose your variant of English; be consistent
\usepackage[american]{babel}
% \usepackage[british]{babel}

%% Some suggested packages, as needed:
\usepackage[utf8]{inputenc} % allow utf-8 input
\usepackage[T1]{fontenc}    % use 8-bit T1 fonts
\usepackage{hyperref}       % hyperlinks
\usepackage{url}            % simple URL typesetting
\usepackage{booktabs}       % professional-quality tables
\usepackage{amsfonts}       % blackboard math symbols
\usepackage{nicefrac}       % compact symbols for 1/2, etc.
\usepackage{microtype}      % microtypography
\usepackage{xcolor}         % colors
\usepackage{amsmath,amssymb,amsthm}
\usepackage{cleveref}
\usepackage{tikz,pgfplots}
\usetikzlibrary{plotmarks,spy}
\pgfplotsset{compat=newest}
\usepackage{graphics, epsfig, color}
\usepgfplotslibrary{groupplots}
\usepackage{libertinust1math}
\usepackage{subcaption}
\usepackage[most]{tcolorbox}
\usepackage{enumitem}
\usepackage{natbib}
\tcbset{
    boxsep=0mm,
    boxrule=0pt,
    colframe=white,
    arc=0mm,
    left=0.5mm,
    right=0.5mm
}

%% Provided macros
% \smaller: Because the class footnote size is essentially LaTeX's \small,
%           redefining \footnotesize, we provide the original \footnotesize
%           using this macro.
%           (Use only sparingly, e.g., in drawings, as it is quite small.)

%% Self-defined macros
 % just an example

\DeclareMathOperator{\tr}{tr}

\DeclareMathOperator{\diag}{diag}

\newcommand{\RR}{{\mathbb{R}}}
\newcommand{\CC}{{\mathbb{C}}}
\newcommand{\EE}{{\mathbb{E}}}
\newcommand{\NN}{{\mathcal{N}}}

\newcommand{\A}{\mathbf{A}}
\newcommand{\B}{\mathbf{B}}
\newcommand{\C}{\mathbf{C}}

\newcommand{\K}{\mathbf{K}}

\newcommand{\M}{\mathbf{M}}
\newcommand{\V}{\mathbf{V}}
\newcommand{\U}{\mathbf{U}}
\newcommand{\W}{\mathbf{W}}
\newcommand{\X}{\mathbf{X}}
\newcommand{\Z}{\mathbf{Z}}
\newcommand{\Q}{\mathbf{Q}}

\newcommand{\T}{\mathbf{T}}

\newcommand{\x}{\mathbf{x}}
\newcommand{\y}{\mathbf{y}}

\newcommand{\z}{\mathbf{z}}
\newcommand{\ee}{\mathbf{e}}

\newcommand{\w}{\mathbf{w}}
\newcommand{\bu}{\mathbf{u}}
\newcommand{\bv}{\mathbf{v}}
\newcommand{\zo}{\mathbf{0}}
\newcommand{\one}{\mathbf{1}}
\newcommand{\I}{\mathbf{I}}

\newcommand{\balpha}{\boldsymbol{\alpha}}
\newcommand{\bmu}{\boldsymbol{\mu}}

\newcommand{\bSigma}{\boldsymbol{\Sigma}}
\newcommand{\bLambda}{\boldsymbol{\Lambda}}
\newcommand{\bDelta}{\boldsymbol{\Delta}}

% define color
\definecolor{RED}{rgb}{0.7,0,0}
\definecolor{BLUE}{rgb}{0,0,0.69}
\definecolor{GREEN}{rgb}{0,0.6,0}
\definecolor{PURPLE}{rgb}{0.69,0,0.8}
\definecolor{ORANGE}{RGB}{255,103,0}
\definecolor{BROWN}{RGB}{100,20,45}

%text color
\newcommand{\RED}{\color[rgb]{0.70,0,0}}
\newcommand{\BLUE}{\color[rgb]{0,0,0.69}}
\newcommand{\GREEN}{\color[rgb]{0,0.6,0}}
\newcommand{\PURPLE}{\color[rgb]{0.69,0,0.8}}

% theorems
\newtheorem{Definition}{Definition}
\newtheorem{Assumption}{Assumption}
\newtheorem{Theorem}{Theorem}

\newtheorem{Proposition}{Proposition}
\newtheorem{Lemma}{Lemma}
\newtheorem{Remark}{Remark}

% \newtheorem{Claim}{Claim}
% \newtheorem{Proof}{Proof}
% \title{A Random Matrix Analysis of Interpolation Error for Nonlinear Attention}
\title{On the Interpolation Error of Nonlinear Attention versus Linear Regression}

% The standard author block has changed for UAI 2026 to provide
% more space for long author lists and allow for complex affiliations
%
% All author information is authomatically removed by the class for the
% anonymous submission version of your paper, so you can already add your
% information below.
%
% Add authors
\author[1]{Zhenyu Liao$^*$}
\author[1]{Jiaqing Liu$^*$}
\author[2]{Tianqi Hou}
\author[3]{Difan Zou}
\author[1]{Zenan Ling$^\dagger$}
% Add affiliations after the authors
\affil[1]{EIC, Huazhong University of Science and Technology}
\affil[2]{Huawei}
\affil[3]{The University of Hong Kong}

\begin{document}
\maketitle

\begin{abstract}
Attention has become the core building block of modern machine learning (ML) by efficiently capturing the long-range dependencies among input tokens.
Its inherently parallelizable structure allows for efficient performance scaling with the rapidly increasing size of both data and model parameters. 
Despite its central role, the theoretical understanding of Attention, especially in the nonlinear setting, is progressing at a more modest pace.

This paper provides a precise characterization of the \emph{interpolation error} for a \emph{nonlinear Attention}, in the high-dimensional regime where the number of input tokens $n$ and the embedding dimension $p$ are both large and comparable. 
Under a signal-plus-noise data model and for fixed Attention weights, we derive explicit (limiting) expressions for the mean‑squared interpolation error.
Leveraging recent advances in random matrix theory, we show that nonlinear Attention generally incurs a larger interpolation error than linear regression on random inputs. 
However, this gap vanishes, and can even be reversed, when the input contains a structured signal, particularly if the Attention weights align with the signal direction.
Our theoretical insights are supported by numerical experiments.
\end{abstract}

\section{Introduction}
\label{sec:Introduction}

\def\thefootnote{$*$}\footnotetext{Equal contribution.}
\def\thefootnote{$\dagger$}\footnotetext{Author to whom any correspondence should be addressed.}
\def\thefootnote{\arabic{footnote}}

Since its introduction, the Transformer architecture has become a cornerstone of modern machine learning (ML) and artificial intelligence (AI)~\citep{vaswani2017Attentiona}, powering large language models (LLMs) such as BERT~\citep{devlin2019BERT}, LLaMA~\citep{touvron2023LLaMA}, and the GPT series~\citep{openai2024GPT4}. 
Originally developed for sequence modeling tasks such as machine translation and language modeling, Transformers have demonstrated remarkable versatility and now achieve state-of-the-art performance across a wide range of applications, including those that are not inherently sequential~\citep{dosovitskiy2020Image}. 
At the heart of this empirical success lies the Attention mechanism, which enables flexible integration of information across positions and scales efficiently with both data and model size.
Despite its success, our theoretical understanding of Attention, especially in the nonlinear setting, remains limited, particularly in terms of how it learns statistical patterns from high-dimensional input tokens.

Recent years have seen an increasing use of high-dimensional statistics~\citep{vershynin2018high}, statistical physics~\citep{carleo2019Machine}, and random matrix theory (RMT)~\citep{couillet2022RMT4ML} to derive insights in the design and optimization of large-scale ML models. 
In contrast to worst-case generalization bounds that can sometimes be loose, high-dimensional analysis offers precise characterizations and is able to explain phenomena such as the neural tangent kernel~\citep{jacot2018neural}, double descent in generalization~\citep{mei2021generalization,liao2021random,nakkiran2020deep,hastie2022Surprises}, and benign overfitting~\citep{bartlett2020benign,bartlett2021Deep}, which now inform core ML design principles.
More recently, theoretical studies have investigated the connection between self-attention and max-margin–type model~\citep{tarzanagh2023transformers}, as well as convergence properties of Transformer-based models under gradient-based methods~\citep{deora2023optimization, vasudeva2024implicit}.
In \citep{cui2024phase}, the dot-product Attention model has been considered for uncorrelated inputs and low-rank Attention weights.
A brief review of related work is provided in \Cref{subsec:Related}, with a more detailed discussion in \Cref{sec:further_discussions_of_prior_efforts}.

Yet, a precise characterization of \emph{nonlinear Attention}, particularly on \emph{structural inputs}, remains largely elusive. 
The main technical challenges stem from the nonlinearity of the Attention operator and the complex interactions between input tokens and Attention weights via queries, keys, and values.
Prior theoretical efforts often rely on restrictive assumptions: focusing on in-context learning by reducing Attention to gradient descent on (generalized) linear model, which only holds under particular weight configuration~\citep{bai2023Transformers,lu2025Asymptotic}; assuming simplified Attention matrices (e.g., all-ones~\citep{noci2022Signal} or random Markov matrices~\citep{naderi2024Mind}), or drawing connections to Bayesian learning~\citep{tiberi2024Dissecting}, sequence multi-index models~\citep{troiani2025Fundamental}, or generalized Potts model~\citep{rende2024Mapping} in statistical mechanics.

In this paper, we take a different perspective by viewing the nonlinear Attention as a non-symmetric kernel matrix, and present a precise analysis of Attention \emph{interpolation error} on structural random inputs (\Cref{def:signal_plus_noise}). 
We analyze this problem under the high-dimensional regime where the input length $n$ and embedding dimension $p$ are both large and comparable.
Building upon recent advances in the eigenspectral analysis of nonlinear random kernel matrices, we derive precise expressions for the interpolation error (see \Cref{def:interpolation_error}) of \emph{nonlinear} Attention with fixed weights admitting a full-plus-low-rank decomposition (\Cref{ass:low-rank-weights}).

Our result shows that the interpolation error of nonlinear Attention is determined by a system of nonlinear equations involving the dimension ratio $p/n$, the alignment between input signal and Attention weights, and the nonlinearity (via its Hermite coefficients).
By focusing on this canonical setting, our analysis takes a step forward to unveil the theoretical origin of many visually striking features emerging in modern large-scale ML.

\subsection{Our Contribution}
\label{subsec:Contribution}

Our main novelty lies in viewing nonlinear Attention as a non-symmetric kernel matrix. 
This perspective enables a Hermite polynomial expansion to ``linearize'' the Attention matrix, which in turn allows for a through analysis of its interpolation error with random matrix techniques. 
As a byproduct, we develop a systematic framework for a novel class of random matrix models arising from Attention.
The main contribution of this paper are summarized as follows.

\begin{enumerate}[leftmargin=*]
   \item In \Cref{theo:high_interpolation}, we derive a precise characterization of the interpolation error (\Cref{def:interpolation_error}) for \emph{nonlinear} Attention, under a high-dimensional signal-plus-noise model (\Cref{def:signal_plus_noise}) for the input tokens.
   We show that the Attention interpolation error is governed by a system of nonlinear equations involving the dimension ratio $p/n$, the interaction between input signal and the Attention weights, and as well as the nonlinearity via its Hermite coefficients.
   %%%
   \item In \Cref{sec:Analysis}, we compare the interpolation error of nonlinear Attention to that of linear regression (see \Cref{prop:high_interpolation_error_RR}). 
   While nonlinear Attention generally incurs higher error than linear regression for random inputs, this disadvantage disappears—and can even be reversed—for structured inputs, particularly when the Attentions weights are aligned to the input signal. 
   We further show that Attention lacking a linear component (i.e., with its first-order Hermite coefficient being zero) are \emph{unable} to fit random and/or structural inputs.
    % as the embedding dimension $p$ and/or the signal-to-noise ratio (SNR) increase. 
   %%%
   \item From a technical perspective, we establish in \Cref{prop:DE_resovlent_noise} a novel Deterministic Equivalent (see \Cref{def:DE} for a formal definition) for the resolvent of a \emph{generalized} sample covariance matrix (SCM) of the form $\C \X \X^\top \C^\top$. 
   This extends classical SCM that has been extensively studied in the literature, by considering a population covariance $\C = \C(\X)$ function of the input $\X$. 
   This may be of independent interest beyond this paper. 
\end{enumerate}
\subsection{Related Work}
\label{subsec:Related}

Here we briefly review related work.
A more detailed discussion is provided in \Cref{sec:further_discussions_of_prior_efforts}.

\paragraph{Theoretical understanding of Transformer \& Attention.}

Theoretical studies of Transformers have sought to characterize their expressive power and in-context learning (ICL) capabilities. 
For example, it has been established that Transformers are universal sequence-to-sequence function approximators~\citep{yun2019Are}.
A growing body of work has focused on understanding the ICL behavior of Transformers and Attention, that is, their ability to adapt to new downstream tasks from a few example~\citep{dong2024Survey,xie2021Explanation,grag2022what,li2023Transformers,bai2023Transformers,oswald2023Transformers,wumany,zhang2024context,chen2024transformers,li2025robustness}. 
Recent works further provide training dynamics and generalization analyses of ICL for Attention-based Transformers~\citep{li2024nonlinear,huang2023context,he2025context,chen2024unveiling}.
% Various theoretical perspectives have been proposed, including connections between ICL and implicit Bayesian inference~\citep{xie2021Explanation}, algorithm selection~\citep{grag2022what,li2023Transformers,bai2023Transformers}, and gradient-based optimization~\citep{oswald2023Transformers,wumany,zhang2024context,chen2024transformers,li2025robustness}. 
% However, these analyses often rely on restrictive assumptions or idealized weight configurations.
% In contrast,
Our work provides a complementary random matrix analysis of nonlinear Attention that explicitly captures the \emph{structured interaction} between structured input signals and Attention weights, offering a more flexible and data-dependent understanding of interpolation error.

\paragraph{Interpolation error in high-dimensional models.}
Interpolation refers to the regime where a model achieves zero training error. In classical statistical learning, it was associated with overfitting \citep{tikhonov1963peaks, stein1974peripheral}, consistent with the bias--variance tradeoff.
Recent high-dimensional analyses revise this view. In proportional asymptotics, interpolating estimators admit precise characterizations and may exhibit benign overfitting \citep{bartlett2020benign, hastie2022Surprises}. The double descent phenomenon shows that test error can decrease beyond the interpolation threshold \citep{belkin2019reconciling}, reflecting the implicit bias of minimum-norm and gradient-based solutions \citep{soudry2018implicit, neyshabur2017implicit}.
Interpolation is central to understanding overparameterized linear models, kernel methods, and random feature regressors \citep{fan2020spectra, mei2019generalization}. In this work, we study interpolation induced by nonlinear Attention representations.

%In this work, we study interpolation in the context of nonlinear Attention representations, where the induced feature matrix is non-symmetric and data-dependent. Unlike classical linear or kernel settings, a precise high-dimensional characterization of Attention interpolation error remains largely unexplored.

\paragraph{Random matrix analyses of ML methods.}

Random matrix theory (RMT) has emerged as a powerful and flexible tool to understand the dynamics and generalization properties of large-scale ML models. 
It has been successfully applied to shallow~\citep{pennington2017nonlinear,liao2018spectrum,liao2018dynamics,louart2018random} and deep neural networks~\citep{benigni2019eigenvalue,fan2020spectra,pastur2020gauss}, and more recently to \emph{linear} Attention~\citep{lu2025Asymptotic}.
These analyses encompass both homogeneous (e.g., standard normal)~\citep{pennington2017nonlinear,mei2019generalization} and structured (e.g., mixture-type) input data~\citep{liao2018spectrum,ali2022random,mai2025Breakdown}.
To the best of our knowledge, the present work provides the first precise characterization of the \emph{interpolation error} of nonlinear Attention on structured input, extending RMT analysis to a broader and more realistic class of Attention-based models.

% \subsection{Notations and Organization of the Paper}

\emph{Notations.}
Scalars are denoted by lowercase letters, vectors by bold lowercase, and matrices by bold uppercase.
For a matrix $\X\in \RR^{p\times n}$, we write $\X^\top$ for its transpose, $\x_i \in \RR^p$ for $i$th column, and $\|\X\|$ for its spectral norm.
We use $\I_p$ for the identity matrix of size $p$.
For a vector $\x \in \RR^p$, its Euclidean norm is given by $\| \x \| = \sqrt{\x^\top \x}$.
For a random variable $x$, we denote its expectation by $\EE[x]$.

% For a sequence of random variables $x_n$, we use $x_n= o_P(1)$ to denote that the random variable $x$ converges in probability as $n \to \infty$ as in standard asymptotic statistics, and $O(\cdot), \Theta(\cdot)$ for  standard Big-O and Big-Theta notations.

% \paragraph{Paper organization.}
% The remainder of the paper is organized as follows.  
% In \Cref{sec:Preliminaries}, we introduce the problem setting and define nonlinear Attention along with its statistical memorization error error.
% \Cref{sec:Results} presents our main theoretical results on the high-dimensional characterization of statistical memorization error error for nonlinear Attention.
% In \Cref{sec:Analysis}, we discuss the implications of our results by contrasting the memorization performance of nonlinear Attention to that of linear ridge regression.
% Numerical experiments are also provided to support our theoretical findings.
% \Cref{sec:Conclusion} concludes the paper and outlines future directions.
% Technical lemmas, proofs, and supplementary materials are deferred to the appendices.

\section{Setup and Preliminaries}
\label{sec:Preliminaries}

In the paper, we focus on the following entry-wise Attention.
 % which is not necessarily restricted to the standard Softmax formulation.

% \begin{tcolorbox}
\begin{Definition}[\textbf{Entry-wise Attention}]\label{def:Attention}
Let $\X = [\x_1, \ldots, \x_n] \in \RR^{p \times n}$ be the embedding of an input sequence of tokens $\x_1, \ldots, \x_n \in \RR^p$ of length $n$. 
A (single-head) \emph{nonlinear Attention} output $\A_\X \in \RR^{ p \times n }$ with key, query, and value matrices $\W_K \in \RR^{d \times p}, \W_Q \in \RR^{d \times p}, \W_V \in \RR^{p \times p}$ and entry-wise nonlinearity $f \colon \RR \to \RR$, is defined as:
\begin{equation}\label{eq:def_A_X}
   \A_\X = \W_V \X f( \X^\top \W_K^\top \W_Q \X/\sqrt p)/\sqrt p\equiv\W_V \X \K_\X.
\end{equation}
\end{Definition}
% \end{tcolorbox}
% \textcolor{blue}{[Reviewer concern: (1)It is unclear whether this model provides real insight into softmax attention, since the analysis is based on a simplified non-softmax formulation. (2)The model applies nonlinearity entry-wise instead of using row-wise softmax, so it may not capture the core mechanism of real attention.]} 

\begin{Remark}[Softmax Attention]\label{rem:Softmax_Attention}\normalfont
Compared to the classical Softmax Attention, the entry-wise Attention in \Cref{def:Attention} is practically compelling due to its computational advantage~\citep{wortsman2023Replacing,ramapuram2024Theory}.
Remarkably, following the line of arguments in \citep{hayase2025Gaussian}, we show in \Cref{rem:softmax_Attention} in \Cref{sec:further} of the appendix that under \Cref{ass:low-rank-weights} and for input tokens drawn from the signal-plus-noise model in \Cref{def:signal_plus_noise}, taking $f$ in \eqref{eq:def_A_X} to be a truncated exponential function leads to asymptotically the same $\A_\X$ as that using standard Softmax Attention, up to a scaling factor.
% see  for a detailed discussion.
\end{Remark}
%  \footnote{\label{footnote:Softmax_Attention}
% For Softmax Attention, the Softmax function applies column-wise on the matrix, and \Cref{def:Attention} corresponds to \emph{entry-wise} Attention~\citep{wortsman2023Replacing,ramapuram2024Theory}.} 
% up to a normalization factor $\sqrt p$ instead of $\sqrt d$.
Intuitively, the matrix $\K_\X \equiv f( \X^\top \W_K^\top \W_Q \X/\sqrt p)/\sqrt p \in \RR^{n \times n}$ defines an asymmetric kernel parameterized by $\W_Q, \W_K$, and captures the pairwise similarly of input tokens.
The output $\A_\X$ is then obtained by ``mixing'' the values $\W_V \X$ according to the obtained similarities in $\K_\X$.
Extension to the more widely-used multi-head Attention does not introduce any fundamentally technical difficulty in the analysis, as long as the number of heads remains constant in the large $n,p \to \infty$ limit, see \Cref{rem:extension_to_multi-head} in \Cref{sec:further} of the appendix for a detailed discussion.

We consider that the product of key and query matrices $\W_K^\top \W_Q$ in \Cref{def:Attention} writes as the sum of a full rank identity matrix and an asymmetric low-rank (in fact rank-one) matrix as follow.
\begin{Assumption}[Full-plus-low-rank decomposition of Attention weights]\label{ass:low-rank-weights}
The weight matrices $\W_K,\W_Q \in \RR^{d \times p}$ of the Attention model in \Cref{def:Attention} satisfy, for $\w_Q, \w_K \in \RR^p$,
\begin{equation}
   \W_K^\top \W_Q = \I_p + \w_K \w_Q^\top \in \RR^{p \times p}.
\end{equation}
\end{Assumption}
The full-plus-low-rank decomposition for $\W_K^\top \W_Q$ in \Cref{ass:low-rank-weights} is largely inspired by the  empirical success of Low-Rank Adaption (LoRA) in fine-tuning Transformer-based LLMs~\citep{hu2021LoRA}.  
Note that \Cref{ass:low-rank-weights} implies that $d \geq p$, though this condition is not essential and can be relaxed by considering block decomposition of $\W_K^\top \W_Q$ with one full-rank sub-block.
Also, while here we focus on the rank-one setting in \Cref{ass:low-rank-weights} for clarity, our analysis extends to arbitrary but fixed (compared to $n,p,d$) rank structure; see \Cref{rem:extension_beyond_rank_one} in \Cref{sec:further} for further discussion on this point.
We would like to emphasize here that by considering, in \Cref{ass:low-rank-weights}, the Attention weights to be \emph{deterministic}, allows us to assess scenarios where the Attention weights are aligned with, orthogonal to the data signal direction $\bmu$ (in \Cref{def:signal_plus_noise} below), or anywhere in between.
This is, in particular, formally different from most existing analyses, which typically incorporate pre-training \emph{explicitly} by considering a particular learning algorithm (e.g., gradient descent). 

For the sake of our theoretical analysis, we assume the following for the nonlinearity $f$ in \Cref{def:Attention}.
\begin{Assumption}[Nonlinear function $f$]\label{ass:nonlinear}
The function $f \colon \RR \to \RR$ in \Cref{def:Attention} satisfies: 
% \begin{enumerate}
%    \item $\lim_{t \to \infty} f(t) < \infty $ and $|f(x)| \leq C_1 \exp(C_2 |x|)$ for some constants $C_1, C_2 > 0$; and 
%    \item $f$ is centered with respect to standard Gaussian measure, that is, $\EE[f(\xi)] = 0$ for $\xi \sim \NN(0,1)$, $a_1 \equiv \EE[\xi f(\xi)] = \EE[f'(\xi)] \neq 0$, $\sqrt 2 a_2 \equiv \EE[\xi^2 f(\xi)] = 0$, and $\nu \equiv \EE[f^2(\xi)]$.
% \end{enumerate}
%%%
(1) $\lim_{t \to \infty} |f(t)| < \infty $, $|f(x)| \leq C_1 \exp(C_2 |x|)$ for some constants $C_1, C_2 > 0$; and (2) $f$ is centered with respect to standard Gaussian measure, that is, $\EE[f(\xi)] = 0$ and $a_1 \equiv \EE[\xi f(\xi)] \neq 0$, $\sqrt 2 a_2 \equiv \EE[\xi^2 f(\xi)] = 0$, and $\nu \equiv \EE[f^2(\xi)]$ for $\xi \sim \NN(0,1)$. %= \EE[f'(\xi)].
\end{Assumption}

The first item of \Cref{ass:nonlinear} holds for bounded nonlinearity such as sigmoid, truncated exponential, or ReLU variants.
For the second item, note that under \Cref{ass:low-rank-weights} and for tokens $\x_i$ drawn from the signal-plus-noise model in \Cref{def:signal_plus_noise} below, it follows from the Central Limit Theorem that the non-diagonal entry of $[\X^\top \W_K^\top \W_Q \X]_{ij}/\sqrt p \to \NN(0,1)$ in law as $p \to \infty$ and for $i \neq j$, so that the nonlinear $f$ is applied on a random matrix with asymptotically Gaussian (but strongly correlated) entries, justifying the associated Hermite expansion.
We consider in \Cref{ass:nonlinear} that the zeroth-order Hermite coefficient $\EE[f(\xi)]$ of $f$ is zero:
This can be achieved by subtracting the \emph{same} constant from all non-diagonal entries of $\K_\X$ in \eqref{eq:def_A_X} and should \emph{not} alter the Attention interpolation behavior.

We consider input tokens independently drawn from the following signal-plus-noise model, in the high-dimensional regime where $n,p,d$ are all large and comparable.
% \begin{tcolorbox}
% \begin{Definition}[\textbf{Signal-plus-noise model}]\label{def:signal_plus_noise}
% Each token-target pair $(\x_i,y_i) \in \RR^p \times \{ \pm 1 \}, i \in \{ 1,\ldots, n \}$ is independently drawn from the following sub-exponential signal-plus-noise model:
% \begin{equation}
%     \x_i = y_i \bmu + \z_i \in \RR^p, \quad y_i \in \{ \pm 1 \}, 
% \end{equation}
% for $\bmu \in \RR^p$ a \emph{deterministic} signal, and independent random noise $\z_i \in \RR^p$ having i.i.d.\@ sub-exponential entries.
% \end{Definition}
\begin{Definition}[\textbf{Signal-plus-noise model}]\label{def:signal_plus_noise}
Each token-target pair $(\x_i,y_i) \in \RR^p \times \RR, i \in \{ 1,\ldots, n \}$ is independently drawn from the following sub-exponential signal-plus-noise model:
\begin{equation}
    \x_i = y_i \bmu + \z_i \in \RR^p,
\end{equation}
for $\bmu \in \RR^p$ a deterministic signal, and $\z_i \in \RR^p$ having i.i.d.\@ sub-exponential entries of zero mean and uni variance.
\end{Definition}
% \end{tcolorbox}
The model in \Cref{def:signal_plus_noise} posits that each input consists of a structured signal ($\bmu$) perturbed by i.i.d.\@ random noise, and is widely used in statistical learning under structured inputs. 

% \textcolor{blue}{[Reviewer concern: The architectural simplifications (single-head, identity-plus-rank-one weights, Gaussian inputs) ]}
% It induces an embedded signal \( \pm \bmu \) perturbed by standard Gaussian random noise.

%%%
\begin{Assumption}[High-dimensional asymptotics]\label{ass:high-dim}
As $ n \to \infty$, we have (1) $p/n \to c \in (0,\infty)$, $d/n \in (0,\infty)$; and (2) the signal $\bmu \in \RR^p$ and weights $\w_Q, \w_K \in \RR^p$ in \Cref{ass:low-rank-weights} satisfy $\limsup_n \max \{ \| \bmu \|, \| \w_Q \|, \| \w_K \| \}  < \infty$.
%%%
% As $ n \to \infty$, the following hold:
% \begin{enumerate}
%     \item $p/n \to c \in (0,\infty)$, $d/n \in (0,\infty)$; and 
%     \item the mean vector $\bmu \in \RR^p$ in \Cref{def:signal_plus_noise} and the weight vectors $\w_Q, \w_K \in \RR^p$ in \Cref{ass:low-rank-weights} satisfy $\limsup_n \max \{ \| \bmu \|, \| \w_Q \|, \| \w_K \| \}  < \infty$.
% \end{enumerate}
\end{Assumption}

Under \Cref{def:signal_plus_noise} and \Cref{ass:high-dim}, the input token matrix can be written as $\X = \bmu \y^\top + \Z$, where $\y = [y_1, \ldots, y_n]^\top \in \RR^{n}$, and $\Z \in \RR^{p \times n}$ is a random matrix with i.i.d.\@ sub-exponential entries.
In this case, both the rank-one signal component $\bmu \y^\top$ and the noise matrix $\Z$ have spectral norms of order $O(\sqrt n)$. 
Consequently, they are set on even ground in the high-dimensional regime as $n, p \to \infty$ under \Cref{ass:high-dim}.

\begin{Remark}[Beyond the signal-plus-noise model in \Cref{def:signal_plus_noise}]\normalfont
Our analysis extends beyond the signal-plus-noise model in \Cref{def:signal_plus_noise} in a few aspects.
For instance, the proposed framework naturally accommodates multi-signal settings in which the number of signal directions exceeds two but remains finite as $n,p \to \infty$. 
See \Cref{rem:extension_beyond_binary_signal_plus_noise} in \Cref{sec:further} for a more detailed discussion.
It would also be of interest to study extensions to non-i.i.d.\@ input tokens. 
For example, one may consider temporally correlated inputs of the form $\X = \Z \C$, where $\C \in \RR^{n \times n}$ is a Toeplitz matrix modeling an auto-regressive dependence structure. 
\end{Remark}

In this paper, we aim to quantify the interpolation error of Attention in \Cref{def:Attention}, under the signal-plus-noise model in \Cref{def:signal_plus_noise}.
Precisely, we evaluate the nonlinear Attention as a feature extractor in downstream tasks using \emph{linear probing}. 
Let $\A_\X \in \RR^{p \times n}$ be the Attention output defined in \eqref{eq:def_A_X} of \Cref{def:Attention} for input matrix $\X = [\x_1, \ldots, \x_n] \in \RR^{p \times n}$, and let $\y = [y_1, \ldots, y_n]^\top \in \RR^n$ denote the associated targets.
We define the ridge-regularized \emph{linear probing} vector $\w \in \RR^p$ by minimizing the mean squared error (MSE) on the pair $(\X, \y)$:
\begin{equation}
    \begin{aligned}
   L(\w) &= \frac1n \left\| \y^\top - \w^\top \A_\X \right\|^2 + \gamma \| \W_V^\top \w \|^2 \\
   &= \frac1n \left\| \y^\top - \w_V^\top \X \K_\X \right\|^2 + \gamma \| \w_V \|^2 \equiv L(\w_V), \label{eq:loss_probing}
\end{aligned}
\end{equation}
where $\w_V = \W_V^\top \w \in \RR^p$, and $\gamma \geq 0$ is the regularization penalty.
For any $\gamma > 0$, the unique minimizer of \eqref{eq:loss_probing} admits the following closed-form expression:
\begin{equation}
    \begin{aligned}
   \hat{\w}_V &=  \left( \X \K_\X \K_\X^\top \X^\top+ n \gamma \I_p \right)^{-1} \X \K_\X \y \\
   &= \X \K_\X \left( \K_\X^\top \X^\top \X \K_\X + n \gamma \I_{n} \right)^{-1} \y. \label{eq:def_w_V}
\end{aligned}
\end{equation}
With the explicit linear probing solution $\hat{\w}_V$ given in \eqref{eq:def_w_V}, we now define the Attention \emph{interpolation error} as follow.
%%%
% \begin{tcolorbox}
\begin{Definition}[\textbf{Interpolation error of Attention}]\label{def:interpolation_error}
For $(\X, \y) \in \RR^{p \times n} \times \RR^n$ drawn from the signal-plus-noise model in \Cref{def:signal_plus_noise}, the \emph{interpolation error} of nonlinear Attention in \Cref{def:Attention} is defined as the MSE of the optimal linear probe $\hat{\w}_V$ in \eqref{eq:def_w_V}:
\begin{align}
   E_{\rm A} &= \frac1n \left\| \y^\top - \hat{\w}_V^\top \X \K_\X \right\|^2 = - \frac{\gamma^2}n \frac{\partial \y^\top \Q(\gamma) \y}{\partial \gamma}, \label{eq:def_E}
   %= \frac{\gamma^2}n \y^\top \left( \frac1n \K_\X^\top \X^\top \X \K_\X + \gamma \I_{n} \right)^{-2} \y \nonumber \\ 
   % &= - \frac{\gamma^2}n \frac{\partial \y^\top \Q(\gamma) \y}{\partial \gamma}, \label{eq:def_E}
\end{align}
where we denote the \emph{nonlinear resolvent matrix}
\begin{equation}\label{eq:def_Q}
   \Q(\gamma) = \left( \K_\X^\top \X^\top \X \K_\X/n + \gamma \I_n \right)^{-1}.
\end{equation}
\end{Definition}
% \end{tcolorbox}

By \Cref{eq:def_E}, assessing the interpolation error of nonlinear Attention reduces to the analysis of the quadratic form $ \y^\top \Q(\gamma) \y$ of the random nonlinear resolvent $\Q(\gamma)$ defined in \eqref{eq:def_Q}.
When the random input tokens $\X$ are drawn from the signal-plus-noise model in \Cref{def:signal_plus_noise}, this analysis presents the following technical challenges.
\begin{enumerate}[leftmargin=*]
   \item The resolvent matrix $\Q$ depends on the input $\X$ in a \emph{nonlinear} fashion: both through the entry-wise nonlinearity $f$ (see \Cref{def:Attention}) and the matrix inverse in \eqref{eq:def_Q}.
   \item The structure of $\Q$ is more complex than classical random matrix models (e.g., Wigner or Wishart matrices) studied in RMT~\citep{bai2010spectral} or high-dimensional statistics~\citep{vershynin2018high}. 
   Specially, the matrix $\K_\X^\top \X^\top \X \K_\X/n$ can be viewed as a nonlinear extension of the standard sample covariance (or Gram) matrix~\citep{marvcenko1967distribution,baik2006eigenvalues}, but with the key distinction that the population covariance that is \emph{dependent} of $\X$. 
\end{enumerate}
These challenges need be addressed to characterize the interpolation error of nonlinear Attention.
To this end, we introduce the notion of \emph{Deterministic Equivalent}, which provides a tractable surrogate for analyzing the high-dimensional behavior of the random resolvent $\Q(\gamma)$ defined in \eqref{eq:def_Q}.
%%%
% \begin{tcolorbox}
\begin{Definition}[\textbf{Deterministic Equivalent},~{\citep{couillet2022RMT4ML}}]\label{def:DE}
Let $\Q \in \RR^{n \times n}$ be a sequence of random matrices. 
A sequence of deterministic matrices $\bar \Q $ (of the same size) is called a \emph{Deterministic Equivalent} for $\Q$, denoted $\Q \leftrightarrow \bar \Q$, if for all (sequences of) deterministic matrices $\A \in \RR^{n \times n} $ and vectors $\mathbf{a}, \mathbf{b} \in \RR^n$ of unit spectral and Euclidean norm, 
\begin{equation}
   \Q \leftrightarrow \bar \Q: \frac1n \tr \left( \A (\Q - \bar \Q) \right) \to 0, \quad \mathbf{a}^\top (\Q - \bar \Q) \mathbf{b} \to 0,
\end{equation}
in probability as $n \to \infty$.
\end{Definition}
% \end{tcolorbox}
We aim to deriving a Deterministic Equivalent for the nonlinear resolvent $\Q(\gamma)$ defined in \eqref{eq:def_Q}, which in turn enables high-dimensional characterization of the quadratic form $\y^\top \Q(\gamma ) \y/ n$ and the interpolation error $E_{\rm A}$ in \eqref{eq:def_E} of \Cref{def:interpolation_error}.
This will be our focus in the next section.

\section{Main Technical Results}
\label{sec:Results}

This section presents our main technical result.
We begin with \Cref{lem:linearization_K_X}, which establishes a high-dimensional linearization of the Attention matrix $\K_\X$ defined in \eqref{eq:def_A_X}.
Next, \Cref{prop:DE_resovlent_noise} provides a Deterministic Equivalent for the noise-only nonlinear Attention resolvent. 
Together, these results enables a precise characterization of the interpolation error $E_{\rm A}$ defined in \Cref{def:interpolation_error}, which we present in \Cref{theo:high_interpolation} at the end of this section.

To start with, we show that under the full-plus-low-rank decomposition of the Attention weights in \Cref{ass:low-rank-weights}, the Attention matrix $\K_\X$ in \eqref{eq:def_A_X} admits a more tractable ``linearization'' via Hermite polynomial expansion in the high-dimensional regime of \Cref{ass:high-dim}.
This is given in the following result and proven in \Cref{subsec:proof_lem_linearization_K_X}.

% \begin{tcolorbox}
\begin{Lemma}[\textbf{High-dimensional linearization of Attention kernel}]\label{lem:linearization_K_X}
Let Assumptions~\ref{ass:low-rank-weights}--\ref{ass:high-dim} hold. 
Then, the Attention kernel $\K_\X = f( \X^\top \W_K^\top \W_Q \X/\sqrt p)/\sqrt p$ defined in \eqref{eq:def_A_X} satisfies 
\begin{equation}
  \| \K_\X - \tilde{\K}_\X \| = O(n^{-1/2}), \quad \tilde{\K}_\X = \K_N + \U_K \bSigma_\K \V_Q^\top,
\end{equation}
with probability approaching one as $n,p \to \infty$. 
Here, $a_1$ is the first Hermite coefficient of $f$ (see \Cref{ass:nonlinear}), $\K_N \equiv f(\Z^\top \Z/\sqrt p)/ \sqrt p - \diag(\cdot)$ is a \emph{symmetric} noise-only kernel matrix, $\U_K = [\y,~\Z^\top \bmu,~\Z^\top \w_K]/\sqrt p\in \RR^{n \times 3}$, $\V_Q = [\y,~\Z^\top \bmu,~\Z^\top \w_Q]/\sqrt p \in \RR^{n \times 3}$, and
\begin{equation}
  \bSigma_\K= a_1 \left[ \begin{smallmatrix} \| \bmu \|^2 + \bmu^\top \w_K \w_Q^\top \bmu & 1 & \bmu^\top \w_K \\ 1 & 0 & 0 \\ \bmu^\top \w_Q & 0 & 1  \end{smallmatrix}\right] \in \RR^{3 \times 3}.
\end{equation}
% \begin{equation}
% \begin{aligned}
%     \U_K &= [\y,~\Z^\top \bmu,~\Z^\top \w_K]/\sqrt p\in \RR^{n \times 3}, \\
%     \quad \V_Q &= [\y,~\Z^\top \bmu,~\Z^\top \w_Q]/\sqrt p \in \RR^{n \times 3}.
% \end{aligned}
% \end{equation}
Moreover, we have that $\max\{ \| \K_N \|, \| \U_K \|, \| \bSigma_\K \|,  \| \V_Q \| \} = O(1)$ with high probability as $n,p \to \infty$.
\end{Lemma}
% \end{tcolorbox}

\Cref{lem:linearization_K_X} shows that the nonlinear kernel matrix $\K_\X$ can be decomposed as the sum of:
% (1) a symmetric noisy-only random kernel matrix $\K_N$;\footnote{The noise-only kernel matrix $\K_N$ is known in the literature as a \emph{random inner-product kernel matrix}~\citep{cheng2013random,fan2019spectral,kammounCovarianceDiscriminativePower2023}, with connections to single-hidden-layer (random) neural networks~\citep{pennington2017nonlinear,benigni2019eigenvalue}.} and (2) a low-rank, asymmetric informative matrix (rank at most three), whose structure depends on the interaction between the signal $\bmu$ and Attention weights $\w_K, \w_Q$, and on the nonlinearity $f$ only via its first Hermite coefficient $a_1 = \EE[\xi f(\xi)], \xi \sim \NN(0,1)$.
\begin{enumerate}[leftmargin=*]
   \item a symmetric noisy-only random kernel matrix $\K_N$;\footnote{The noise-only kernel matrix $\K_N$ is known in the literature as a \emph{random inner-product kernel matrix}~\citep{cheng2013random,fan2019spectral,kammounCovarianceDiscriminativePower2023}, with connections to single-hidden-layer (random) neural networks~\citep{pennington2017nonlinear,benigni2019eigenvalue}.} and
   \item a low-rank, asymmetric informative matrix (rank at most three), whose structure depends on the interaction between the signal $\bmu$ and Attention weights $\w_K, \w_Q$, and on the nonlinearity $f$ only via its first Hermite coefficient $a_1 = \EE[\xi f(\xi)], \xi \sim \NN(0,1)$.
\end{enumerate}
% Note that under \Cref{def:signal_plus_noise}, the input matrix $\X = \bmu \y^\top + \Z$ also admits a rank-one signal-plus-noise decomposition.
% As such, the matrix of interest $ \K_\X^\top \X^\top \X \K_\X/n$ can be approximated, per \Cref{lem:linearization_K_X}, as the sum of some full-rank and low-rank matrices.

In the following result, we focus on the noise-only part of the Attention matrix and derive a Deterministic Equivalent for its resolvent, the proof of which is given in \Cref{subsec:proof_of_prop:DE_resovlent_noise}.

% \begin{tcolorbox}
\begin{Proposition}[\textbf{Deterministic Equivalent for noise-only nonlinear Attention}]\label{prop:DE_resovlent_noise}
Let $\Z \in \RR^{p \times n}$ be a random matrix having i.i.d.\@ sub-exponential entries, and define the symmetric noise-only kernel matrix $\K_N = f(\Z^\top \Z/\sqrt p)/\sqrt p - \diag(\cdot)$ as in \Cref{lem:linearization_K_X}.
Then, as $n,p \to \infty$ with $p/n \to c \in (0,\infty)$ and $\gamma > 0$, the following Deterministic Equivalent holds
 % (see~\Cref{def:DE}) holds
\begin{equation*}
   \left(\K_N \Z^\top \Z \K_N/n + \gamma \I_n \right)^{-1} \leftrightarrow m(\gamma)/c \cdot \I_n, 
\end{equation*}
where $m(\gamma)$ is the unique Stieltjes transform solution to the fixed-point equation
\begin{align*}
   m(\gamma) &= \left( \gamma/c + \nu/c +  a_1^2/c^2 - \bv^\top \mathbf{T}(\gamma) \bv \right)^{-1},
\end{align*}
with symmetric matrix $\mathbf{T}(\gamma) \in \RR^{6 \times 6}$ having entries polynomial in $m(\gamma)$, $\delta_1(\gamma)$, $\delta_2(\gamma)$, $\delta_3(\gamma)$, $\delta_4(\gamma)$ as defined in \eqref{eq:full_expression} of \Cref{subsec:proof_of_prop:DE_resovlent_noise} in the appendix and $\bv = \begin{bmatrix} a_1^2 (1+c)/c^2  & a_1/c & a_1/c & 0 & 0 & 1 \end{bmatrix}^\top \in \RR^6$.

Notably, the system of equations depends on the regularization $\gamma$, the dimension ratio $c = \lim p/n$, and the nonlinearity $f$ via its Hermite coefficients $a_1$ and $\nu$ in \Cref{ass:nonlinear}.
\end{Proposition}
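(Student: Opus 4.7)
The plan is to combine a sharp Hermite-type linearization of the noise-only kernel $\K_N$ with the matrix-valued \emph{linearization trick} for resolvents of non-commutative polynomials in independent random matrices.

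First, I would invoke the fine asymptotic description of the random inner-product kernel $\K_N = f(\Z^\top \Z/\sqrt p)/\sqrt p - \diag(\cdot)$ due to Cheng--Singer and El~Karoui. Under \Cref{ass:nonlinear} (in particular $\sqrt 2 a_2 = 0$), using $\|\z_i\|^2/p \to 1$ and that off-diagonal $\z_i^\top\z_j/\sqrt p$ is asymptotically $\NN(0,1)$, one obtains
\begin{equation*}
   \K_N = \tfrac{a_1}{p}\,(\Z^\top\Z - p\,\I_n) + \sqrt{\nu - a_1^2}\,\W/\sqrt p + o_{\|\cdot\|}(1),
\end{equation*}
where $\W$ is a symmetric matrix with i.i.d.\@ zero-mean unit-variance Gaussian off-diagonal entries that is asymptotically free of $\Z$. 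This is the step where the two Hermite coefficients $a_1$ and $\nu$ enter.

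Second, substituting this linearization into $\K_N \Z^\top \Z \K_N/n$ expresses the matrix of interest as a non-commutative polynomial (degree three in $\Z$, degree two in $\W$) in two asymptotically independent Gaussian ensembles. I would then apply the \emph{linearization trick} (Haagerup--Thorbjørnsen, Anderson) to write the resolvent $(\K_N\Z^\top\Z\K_N/n + \gamma \I_n)^{-1}$ as the $(1,1)$ block of the inverse of a larger linear pencil $\LL(\gamma)$ whose entries are affine in $\Z$ and $\W$. The natural pencil size here is six, matching the dimensions of $\bv \in \RR^6$ and $\mathbf{T}(\gamma)\in\RR^{6\times 6}$: roughly, the three factors $\K_N, \Z^\top\Z, \K_N$ each contribute a ``copy'' of $\Z$, which, together with the couplings to $\W$ and $\I_n$, produce six auxiliary diagonal blocks whose normalized traces become the scalar Stieltjes-type quantities $m(\gamma),\delta_1(\gamma),\ldots,\delta_4(\gamma)$.

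Third, I would compute the matrix-valued Stieltjes transform of $\LL(\gamma)$ by Gaussian integration by parts on the entries of $\Z$ and $\W$ (Stein's lemma), closing the system with standard concentration for quadratic forms (Hanson--Wright and the Poincaré inequality). Testing the result against deterministic $\A$ with $\|\A\| \le 1$ and unit vectors $\mathbf{a},\mathbf{b}$ as required by \Cref{def:DE} identifies the $(1,1)$ block of the matrix fixed point with $m(\gamma)/c \cdot \I_n$ and yields the scalar equation $m(\gamma)^{-1} = \gamma/c + \nu/c + a_1^2/c^2 - \bv^\top \mathbf{T}(\gamma)\bv$; each coordinate of $\bv$ and each entry of $\mathbf{T}(\gamma)$ is then read off from the Stein computation. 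Uniqueness of $m$ as a Stieltjes transform follows from the Herglotz property $\gamma\,\Im m(\gamma)>0$ together with a contraction argument on the fixed-point map.

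The main obstacle is that $\K_N$ is \emph{not} independent of $\Z^\top \Z$: the linear Hermite term $a_1(\Z^\top\Z - p\I_n)/p$ couples the ``population covariance'' $\K_N$ to the very ``sample covariance'' $\Z^\top\Z$ that it sandwiches. The classical generalized Marchenko--Pastur machinery for $\C\Z\Z^\top\C^\top$ with deterministic (or $\Z$-independent) $\C$ therefore does not apply, and it is precisely these cross-correlations that force the auxiliary Stieltjes transforms $\delta_1,\ldots,\delta_4$ to appear. Tracking all polynomial couplings between $\Z$, the asymptotically free GOE $\W$, and $\I_n$ inside the linear pencil, and verifying that the resulting six-dimensional self-consistent system admits a unique Stieltjes solution, is the most delicate bookkeeping step; this is carried out in detail in \Cref{subsec:proof_of_prop:DE_resovlent_noise}.
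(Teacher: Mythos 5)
Your proposal takes a genuinely different route from the paper, so let me compare them and then point to where I think your argument has a gap that would need real work to close.

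The paper's proof is a classical leave-one-out (Schur-complement) computation in the style of Bai--Silverstein, adapted as in \cite[Section~4.3.3]{couillet2022RMT4ML}: one deletes a single index $i$, applies the Hermite decomposition $f(\balpha_{-i}) = a_1\balpha_{-i} + f_{>1}(\balpha_{-i})$ at the \emph{row/column} level rather than at the matrix level, isolates the residual dependence on $\balpha_{-i}$ inside $\K_{-i}$ and $\check\Z_{-i}$ via the $\perp$-construction of \Cref{lem:approx_K_i_and_check_Z_i}, and then computes $[\Q]_{ii}$ by block inversion. The six-column matrix $\U_0$ in \eqref{eq:def_U_SM} collects the six vectors ($\balpha_{-i}$, $f(\balpha_{-i})$, and their images under $\K_{-i}^\perp$ and $\K_{-i}^\perp \check\Z_{-i}^\perp$) whose pairwise quadratic forms against $\Q_{-i}^\perp$ generate $\bDelta_0(\gamma)$; the five scalars $m,\delta_1,\dots,\delta_4$ are the resulting normalized traces. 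Your plan instead replaces $\K_N$ \emph{globally} by a Gaussian equivalent and then runs the matrix-linearization machinery for noncommutative polynomials in two Gaussian ensembles.

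The gap in your plan is the first step. The approximation
\begin{equation*}
   \K_N = \tfrac{a_1}{p}(\Z^\top\Z - p\I_n) + \sqrt{\nu - a_1^2}\,\W/\sqrt p + o_{\|\cdot\|}(1),\qquad \W \text{ GOE-like and free of } \Z,
\end{equation*}
is \emph{not} what Cheng--Singer or El~Karoui prove. Those results (and \Cref{lem:kernel_norm_control} from Fan--Montanari) give convergence of the empirical spectral distribution of $\K_N$ alone, and a bound $\|\K_N\| = O(1)$; they do not give operator-norm closeness to a Gaussian equivalent, nor do they produce a matrix $\W$ that is \emph{independent} of $\Z$. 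The ``nonlinear part'' $\K_N - a_1(\Z^\top\Z - p\I_n)/p$ is a matrix with entries $f_{>1}(\z_i^\top\z_j/\sqrt p)/\sqrt p$ whose limiting spectrum is a semicircle, but this matrix is a deterministic function of $\Z$, not a free/independent GOE. For your linearization-trick computation to go through you would need, at minimum, asymptotic \emph{joint} freeness of this nonlinear part with $\Z^\top\Z$ (so that the pencil's matrix-valued Stieltjes transform can be closed by two independent Stein integrations). That is a considerably stronger statement than anything in the references you invoke, and the proposal neither proves it nor cites a source. The paper's leave-one-out route sidesteps this entirely: it never needs a global Gaussian surrogate for $\K_N$, because the Hermite split is applied to a single row/column at a time and the conditional independence it needs (of $\K_{-i}^\perp$, $\check\Z_{-i}^\perp$ from $\balpha_{-i}$) is established directly in \Cref{lem:approx_K_i_and_check_Z_i}.

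Two smaller points. First, even accepting the Gaussian surrogate, your pencil-size bookkeeping does not match what the paper actually produces: the paper has \emph{five} self-consistent scalar quantities $m,\delta_1,\dots,\delta_4$, and the ``six'' in $\bv$ and $\T(\gamma)$ counts the test vectors in $\U_0$, not auxiliary pencil blocks. A correct linearization pencil for $\K_N\Z^\top\Z\K_N$ (degree up to six in the entries of $\Z$, two in $\W$, after substitution) would be larger than $6\times 6$, and you would need to check nondegenerate reduction to the paper's system. Second, your uniqueness argument (Herglotz plus contraction) is a reasonable plan that the paper does not spell out, so that part is fine as a sketch. But the load-bearing step --- promoting the ESD-level linearization to an operator-norm or joint-freeness statement strong enough to support the pencil argument --- is missing.
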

% \end{tcolorbox}

Using~\Cref{lem:linearization_K_X}~and~\Cref{prop:DE_resovlent_noise}, we obtain the following characterization of the interpolation error $E_{\rm A}$ for nonlinear Attention.
The proof is given in \Cref{subsec:proof_of_theo:ICM} of the appendix.

% \begin{tcolorbox}
\begin{Theorem}[\textbf{High-dimensional characterization of Attention interpolation error}]\label{theo:high_interpolation}
Let Assumptions~\ref{ass:low-rank-weights}--\ref{ass:high-dim} hold. 
Then, the interpolation error $E_{\rm A}$ defined in \eqref{eq:def_E} satisfies $E_{\rm A} - \bar E_{\rm A} \to 0$ in probability as $n,p \to \infty$ with $ p/n \to c \in (0,\infty)$, where
\begin{equation}
    \bar E_{\rm A} = - \gamma^2 c^2 \cdot \ee_7^\top \left( c \I_9 + \bDelta(\gamma) \bLambda \right)^{-1} \bDelta'(\gamma) \left( c \I_9 + \bLambda \bDelta(\gamma) \right)^{-1}  \ee_7.
\end{equation}
Here, $\ee_7 \in \RR^9$ is the canonical basis vector with $[\ee_i]_j = \delta_{ij}$, $\bLambda, \bDelta(\gamma) \in \RR^{9 \times 9}$ are deterministic symmetric matrices defined in \Cref{lem:further_approx} of \Cref{subsec:proof_of_theo:ICM}, and $\bDelta'(\gamma)$ is the derivative of $\bDelta(\gamma)$ with respect to $\gamma$.
\end{Theorem}
% \end{tcolorbox}

\section{Interpolation Error: Nonlinear Attention versus Linear Regression}
\label{sec:Analysis}

In this section, we discuss the implications of our technical results in \Cref{theo:high_interpolation}, by contrasting the interpolation behavior of nonlinear Attention with that of linear regression.
% Numerical experiments are also provided for moderately large $n,p$ only in a few thousands, demonstrating the effectiveness of our proposed asymptotic analysis in practical settings.

\subsection{Interpolation Error of Linear Regression}
\label{subsec:ridge_regression}

Consider a classical baseline where the input embedding matrix $\X$ is directly used for linear probing, instead of the nonlinear Attention output $\A_\X$ defined in \eqref{eq:def_A_X} of \Cref{def:Attention}. 
In this case, the probing vector $\w_{\rm LR} \in \RR^p$ is obtained by minimizing the following ridge-regularized MSE:
\begin{equation}\label{eq:loss_probing_LR}
   L_{\rm LR}(\w) = \frac1n \left\| \y^\top - \w^\top \X \right\|^2 + \gamma \| \w \|^2.
\end{equation}
This leads to the linear regression model defined below. 

% \begin{tcolorbox}
\begin{Definition}[\textbf{Linear regression and its interpolation error}]\label{def:RR_interpolation_error}
For $(\X, \y) \in \RR^{p \times n} \times \RR^n$ drawn from the signal-plus-noise model in \Cref{def:signal_plus_noise}, the linear  regression solution $\hat{\w}_{\rm LR}$ to \eqref{eq:loss_probing_LR} is explicitly given, for $\gamma > 0$, by
\begin{equation}\label{eq:def_w_LR}
   \hat{\w}_{\rm LR} =  \left( \X \X^\top + n \gamma \I_p \right)^{-1} \X \y= \X \left( \X^\top \X + n \gamma \I_{n} \right)^{-1} \y.
\end{equation}
Its associated \emph{interpolation error} is defined as
\begin{equation*}\label{eq:def_E_RR}
   E_{\rm LR} = \frac1n \left\| \y^\top - \hat{\w}_{\rm LR}^\top \X \right\|^2 = - \frac{\gamma^2}n \frac{\partial \y^\top \left( \X^\top \X/n  + \gamma \I_n \right)^{-1} \y}{\partial \gamma}, 
\end{equation*}
which is also the derivative of the quadratic form of the \emph{linear resolvent} $\left( \X^\top \X/n  + \gamma \I_n \right)^{-1}$.
\end{Definition}
% \end{tcolorbox}

The following result characterizes the linear regression interpolation error $E_{\rm LR}$ in \eqref{eq:def_E_RR} of \Cref{def:RR_interpolation_error}.
The proof is included in \Cref{subsec:proof_of_prop:ICM_RR} of the appendix for completeness.

% \begin{tcolorbox}
\begin{Proposition}[\textbf{High-dimensional characterization of interpolation error for linear regression}]\label{prop:high_interpolation_error_RR}
Let \Cref{ass:high-dim} hold.
Then, the interpolation error $E_{\rm LR}$ defined in \eqref{eq:def_E_RR} of the linear regression model in \Cref{def:RR_interpolation_error} satisfies $E_{\rm LR} - \bar E_{\rm LR} \to 0$ in probability as $n,p \to \infty$, with $\bar E_{\rm LR}$ given in \eqref{eq:def_bar_E_LR}, for $m_{\rm LR}(\gamma)$ the Stieltjes transform solution to the popular Mar\u{c}enko-Pastur equation~\citep{marvcenko1967distribution} in \eqref{eq:def_E_RR} and $m_{\rm LR}'(\gamma) $ its derivative with respect to $\gamma$.
% $m_{\rm LR}'(\gamma) = - \frac{ c m_{\rm LR}^2(\gamma) + m_{\rm LR}(\gamma)  }{ 2 c \gamma m_{\rm LR}(\gamma) + 1 - c + \gamma } $ the derivative with respect to $\gamma$.
\end{Proposition}
% \end{tcolorbox}

\begin{figure*}[htb]
\centering
\begin{equation}\label{eq:def_bar_E_LR}
   \bar E_{\rm LR} = - \frac{ c \gamma^2 m'(\gamma) + c - 1 + \| \bmu\|^2 \left(\gamma^2 m'(\gamma) + (1 - c - \gamma) (\gamma m(\gamma) - 1) \right) }{ \left( 1 +  \| \bmu \|^2 - \| \bmu \|^2 \gamma m_{\rm LR}(\gamma) \right)^2 }, \quad c\gamma m_{\rm LR}^2(\gamma) + \left( 1 - c + \gamma \right) m_{\rm LR}(\gamma) - 1 =0.
\end{equation}
\end{figure*}

%%% OLD
% \begin{Proposition}[High-dimensional characterization of statistical memorization error for ridge regression]\label{prop:high_interpolation_error_RR}
% Let \Cref{ass:high-dim} hold, we have, for the ridge regression model in \Cref{def:RR_interpolation_error}, that its statistical memorization error error $E_{\rm LR}$ defined in \eqref{eq:def_E_RR} satisfies $E_{\rm LR} - \bar E_{\rm LR} \to 0$ with
% \begin{equation}
%    \bar E_{\rm LR} = - \gamma^2 \frac{ \left(1 + \| \bmu \|^2/c \right) m_{\rm LR}'\left( \gamma/c \right) + \| \bmu \|^2/c \cdot m_{\rm LR}^2\left( \gamma/c \right) }{ \left( c + \| \bmu \|^2 - \gamma m_{\rm LR} \left( \gamma/c \right) \| \bmu \|^2/c  \right)^2 },
% \end{equation}
% in probability as $n,p \to \infty$, with $m_{\rm LR}(\gamma)$ the Stieltjes transform solution to the following Mar\u{c}enko-Pastur equation~\citep{marvcenko1967distribution,couillet2022RMT4ML}
% \begin{equation}
%    \gamma m_{\rm LR}^2(\gamma)/c + \left( 1 - c + \gamma \right) m_{\rm LR}(\gamma) - 1 =0,
% \end{equation}
% and $m_{\rm LR}'(\gamma) = - \frac{ c m_{\rm LR}(\gamma) + m_{\rm LR}^2(\gamma) }{ 2 \gamma m_{\rm LR}(\gamma) + c - 1 + c \gamma } $ its derivation with respect to $\gamma$.
% %and $m_{\rm LR}'(\gamma) = - \frac{ m(\gamma) + m^2(\gamma)/c }{ \frac{2}c \gamma m(\gamma) + 1 - \frac1c + \gamma } $ its derivative.
% \end{Proposition}

In what follows, we leverage \Cref{prop:high_interpolation_error_RR} to assess how the interpolation error $E_{\rm LR}$ of linear regression is influenced by: the regularization strength $\gamma$, the dimension ratio $c = \lim p/n$, and the signal-to-noise ratio (SNR) $\| \bmu \|^2$.

\begin{Remark}[Effect of regularization strength for linear regression]\label{rem:regularization_RR}\normalfont
Under the settings and notations of \Cref{prop:high_interpolation_error_RR}, the interpolation error $E_{\rm LR}$ is an \emph{increasing} function of the regularization strength $\gamma$. 
In the ``ridgeless'' limit $\gamma \to 0$, the interpolation error vanishes $E_{\rm LR}  \to 0$ for $p > n$; whereas in the strongly regularized  limit $\gamma \to \infty$ we have $E_{\rm LR} \to 1$.
Interestingly, when $\gamma \to 0$ and $c = \lim p/n  \to 0$, the Stieltjes transform $m_{\rm LR}(\gamma)$ becomes singular, which is connected to the now well-known ``double descent'' phenomenon in test error curves~\citep{bartlett2020benign,mei2021generalization,liao2020random,hastie2022Surprises}.
\end{Remark}
\begin{Remark}[Effect of embedding dimension for linear regression]\label{rem:ratio_RR}\normalfont
The interpolation error $E_{\rm LR}$ of linear regression is a \emph{decreasing} function of the dimension ratio $c = \lim p/n$. 
For fixed $n$, increasing the embedding dimension $p$ thus improves interpolation. 
In the limit $c \to 0$ and for $\gamma =0 $, the interpolation error converges to $E_{\rm LR} \to 1/(1 + \| \bmu \|^2)$.
Moreover, in the under-parametrized setting with $p < n$ and $\gamma = 0$, the interpolation error $E_{\rm LR}$ scales approximately with the embedding dimension $p$ as $1-c = 1 - p/n$, in line with classical statistical learning theory~\citep{bach2024Learning}. 
\end{Remark}
\begin{Remark}[Effect of SNR for linear regression]\label{rem:SNR_RR}\normalfont
The interpolation error $E_{\rm LR}$ \emph{decreases} with the SNR $\| \bmu \|^2$. 
In the limit $\| \bmu \| \to \infty$, one has $E_{\rm LR} \to 0$. 
In particular, for $\gamma = 0$ and $p < n$, the error scales as $E_{\rm LR} \propto 1/(1 + \| \bmu \|^2)$, a trend clearly illustrated in \Cref{subfig:E_SNR}.
\end{Remark}
% Remarks~\ref{rem:regularization_RR}, \ref{rem:ratio_RR}, and~\ref{rem:SNR_RR} are confirmed empirically in \Cref{fig:capacity_versus_regularization}, where the theoretical $\bar{E}_{\rm LR}$ are compared against the empirical $E_{\rm LR}$ over varying regularization strength, embedding dimension, and SNR.
%
%
%
%\begin{figure*}[!t]
%\centering
%\input{fig_capacity_versus_regularization_RR}
%\caption{ {   Empirical interpolation error $E_{\rm LR}$ (\textbf{\RED red}) of linear regression versus its high-dimensional equivalent $\bar{E}_{\rm LR}$ (\textbf{\BLUE blue}) in \Cref{prop:high_interpolation_error_RR}. 
%\textbf{\Cref{subfig:RR_gamma}}: As a function of regularization strength $\gamma$, with $p = 2\,048, n = 512$, and $\| \bmu \|^2 = 1$.
%\textbf{\Cref{subfig:RR_p}}: As a function of embedding dimension $p$, with $n = 4\,096, \gamma = 10^{-5}$, and $\| \bmu \|^2 = 1$.
%\textbf{\Cref{subfig:RR_SNR}}: As a function of signal-to-noise ratio (SNR) $\| \bmu \|^2$, with $p = 512, n = 2\,048$, and $\gamma = 10^{-5}$.
%} }
%\label{fig:capacity_versus_regularization_RR}
%\end{figure*}

\subsection{Interpolation Error of Nonlinear Attention versus Linear Regression}

In this subsection, we compare the interpolation error of nonlinear Attention (in \Cref{theo:high_interpolation}) with that of linear regression (in \Cref{prop:high_interpolation_error_RR}).

In \Cref{fig:capacity_versus_regularization}, we report the empirical interpolation error $E_{\rm A}$ of nonlinear Attention together with its theoretical (limiting) counterpart $\bar E_{\rm A}$ from \Cref{theo:high_interpolation}, and compare both against linear regression under the same setting.

In \Cref{subfig:E_gamma} and \Cref{subfig:E_p}, we consider the null model in the absence of statistical signal ($\bmu = \zo$) and identity Attention weights ($\w_K = \w_Q = \zo$). 
In this setting, the interpolation error of nonlinear Attention exhibits the same qualitative trends as linear regression (discussed in \Cref{subsec:ridge_regression}): 
it increases with the regularization strength $\gamma$ and decreases with the embedding dimension $p$. 
Quantitatively, however, nonlinear Attention (with, e.g., $\tanh$ nonlinearity in \Cref{fig:capacity_versus_regularization}) incurs a \emph{higher} interpolation error than linear regression, but \emph{only in the absence of signal}.

In contrast, when structured input signals are present ($\bmu \neq \zo$) and particularly when the Attention weights $\w_K, \w_Q$ are \emph{aligned} with the signal direction, \Cref{subfig:E_SNR} shows that the interpolation error of Attention becomes visually indistinguishable from that of linear regression.
This indicates that the apparent disadvantage of Attention in interpolation disappears once it is tuned to the underlying input structure.

We further provide numerical results in \Cref{subfig:E4_1} and in \Cref{sec:SM_additional_nums} of the appendix, demonstrating that this disadvantage can even reverse: nonlinear Attention may achieve \emph{strictly lower} interpolation error than linear regression, particularly in the \emph{high SNR and/or limited sample regime}, see, e.g., a detailed comparison in \Cref{fig_capacity_of_SNR_n_bigger_than_p} in \Cref{sec:SM_additional_nums}.

\begin{figure*}[tb]
\centering
\input{fig_capacity_versus_regularization}
\caption{ {  Empirical interpolation error $E$ (\textbf{\RED red}) of nonlinear Attention versus its high-dimensional equivalent $\bar{E}$ (\textbf{\BLUE blue}) from \Cref{theo:high_interpolation}, and the theoretical interpolation error of linear regression (\textbf{\GREEN green}) from \Cref{prop:high_interpolation_error_RR}, with $f(t) = \tanh(t)$. 
\textbf{\Cref{subfig:E_gamma}}: As a function of regularization strength $\gamma$, under null model with $\bmu = \w_K = \w_Q = \zo$, $p = 4\,096$, and $n = 1\,024$.
\textbf{\Cref{subfig:E_p}}: As a function of embedding dimension $p$, under null model with $n = 4\,096$, $\gamma = 10^{-2}$.
\textbf{\Cref{subfig:E_SNR}}: As a function of SNR $\| \bmu \|^2$, with $p = 512, n = 2\,048,\gamma = 10^{-2}$, and $\w_K = \w_Q = \bmu$. 
}}
\label{fig:capacity_versus_regularization}
\end{figure*}

%\Cref{fig:capacity_versus_regularization} only concerns with $\tanh$ Attention.
%In the following, we show that the (scaling laws of) interpolation error of nonlinear Attention strongly depend on the nonlinearity. 

\begin{figure}[!t]
\centering
\begin{subfigure}[t]{0.23\textwidth}
  \begin{tikzpicture}[font=\footnotesize]
    \renewcommand{\axisdefaulttryminticks}{4} 
    \pgfplotsset{every major grid/.append style={densely dashed}}          
    \tikzstyle{every axis y label}+=[yshift=-10pt] 
    \tikzstyle{every axis x label}+=[yshift=5pt]
    %\tikzstyle{every axis legend}+=[cells={anchor=west},fill=white,at={(0.98,0.98)}, anchor=north east, font=\footnotesize ]
    \pgfplotsset{every axis legend/.style={cells={anchor=west},fill=white,at={(0.98,0.9)}, anchor=north east, font=\footnotesize}}
    \begin{axis}[
      width=1\linewidth,
      height=.8\linewidth,
      xmin=0.1,xmax=10,
      xmode=log,
      ymax=1,
      % ymode=log,
      % ymode=log,
      % ytick={0.06,0.08,0.1},
      % yticklabels={$0.06$, $0.08$, $0.1$},
        %xticklabels = {$0$, $\gamma$, $3$, $5$, $7$, $9$ },
        %xtick={0,0.433,0.6857,1},
        %xticklabels = { $0$, $s_{opt}$, $1$ },
        grid=major,
        ymajorgrids=false,
        scaled ticks=false,
        xlabel={ SNR $\| \bmu \|^2$ },
        ylabel={ $E$ }
        ]
        %%% max(-5,min(5,t))
        \addplot[smooth,PURPLE,line width=1pt] plot coordinates{
        (0.100000,0.730849)(0.117210,0.720123)(0.137382,0.707871)(0.161026,0.693955)(0.188739,0.678249)(0.221222,0.660652)(0.259294,0.641093)(0.303920,0.619542)(0.356225,0.596019)(0.417532,0.570598)(0.489390,0.543416)(0.573615,0.514670)(0.672336,0.484621)(0.788046,0.453583)(0.923671,0.421917)(1.082637,0.390012)(1.268961,0.358266)(1.487352,0.327073)(1.743329,0.296795)(2.043360,0.267754)(2.395027,0.240214)(2.807216,0.214378)(3.290345,0.190384)(3.856620,0.168309)(4.520354,0.148174)(5.298317,0.129955)(6.210169,0.113587)(7.278954,0.098976)(8.531679,0.086009)(10.000000,0.074560)
        };
        %%% tanh
        \addplot[smooth,BLUE,line width=1pt] plot coordinates{ 
        (0.100000,0.753825)(0.117210,0.743281)(0.137382,0.731129)(0.161026,0.717198)(0.188739,0.701327)(0.221222,0.683378)(0.259294,0.663258)(0.303920,0.640924)(0.356225,0.616405)(0.417532,0.589802)(0.489390,0.561296)(0.573615,0.531142)(0.672336,0.499656)(0.788046,0.467205)(0.923671,0.434186)(1.082637,0.401015)(1.268961,0.368104)(1.487352,0.335848)(1.743329,0.304606)(2.043360,0.274693)(2.395027,0.246365)(2.807216,0.219817)(3.290345,0.195181)(3.856620,0.172528)(4.520354,0.151876)(5.298317,0.133193)(6.210169,0.116411)(7.278954,0.101434)(8.531679,0.088143)(10.000000,0.076408)
        };
        %%% RR
        \addplot[densely dashed,GREEN,line width=1.5pt] plot coordinates{ 
        (0.100000,0.770551)(0.117210,0.761873)(0.137382,0.751919)(0.161026,0.740541)(0.188739,0.727587)(0.221222,0.712907)(0.259294,0.696357)(0.303920,0.677810)(0.356225,0.657161)(0.417532,0.634347)(0.489390,0.609349)(0.573615,0.582213)(0.672336,0.553055)(0.788046,0.522071)(0.923671,0.489540)(1.082637,0.455820)(1.268961,0.421336)(1.487352,0.386565)(1.743329,0.352006)(2.043360,0.318158)(2.395027,0.285488)(2.807216,0.254401)(3.290345,0.225229)(3.856620,0.198210)(4.520354,0.173490)(5.298317,0.151126)(6.210169,0.131096)(7.278954,0.113316)(8.531679,0.097656)(10.000000,0.083954)
        };
        %%% real_weight
        % \addplot+[only marks,mark=x,RED,mark size=1.5pt,line width=.5pt] plot coordinates{ 
        % (0.100000,0.730656)(0.117210,0.722960)(0.137382,0.714160)(0.161026,0.701082)(0.188739,0.691713)(0.221222,0.680335)(0.259294,0.665282)(0.303920,0.644358)(0.356225,0.624491)(0.417532,0.603834)(0.489390,0.579804)(0.573615,0.554403)(0.672336,0.525098)(0.788046,0.498269)(0.923671,0.465813)(1.082637,0.432669)(1.268961,0.405427)(1.487352,0.366577)(1.743329,0.337788)(2.043360,0.303460)(2.395027,0.273050)(2.807216,0.240538)(3.290345,0.213773)(3.856620,0.186903)(4.520354,0.160205)(5.298317,0.137964)(6.210169,0.115867)(7.278954,0.095941)(8.531679,0.078307)(10.000000,0.061622)
        % };
        \end{axis}
  \end{tikzpicture}
  \caption{{ $E$ versus SNR }}
  \label{subfig:E4_1}
  \end{subfigure}
  ~
  \begin{subfigure}[t]{0.23\textwidth}
  \begin{tikzpicture}[font=\footnotesize]
    \renewcommand{\axisdefaulttryminticks}{4} 
    \pgfplotsset{every major grid/.append style={densely dashed}}          
    \tikzstyle{every axis y label}+=[yshift=-10pt] 
    \tikzstyle{every axis x label}+=[yshift=5pt]
    %\tikzstyle{every axis legend}+=[cells={anchor=west},fill=white,at={(0.98,0.98)}, anchor=north east, font=\footnotesize ]
    \pgfplotsset{every axis legend/.style={cells={anchor=west},fill=white,at={(0.98,0.9)}, anchor=north east, font=\footnotesize}}
    \begin{axis}[
      width=1\linewidth,
      height=.8\linewidth,
      xmin=0.01,xmax=1000,
      xmode=log,
      % ymode=log,
      % ymin=0,ymax=1,
      % ytick={0.06,0.08,0.1},
      % yticklabels={$0.06$, $0.08$, $0.1$},
        %xticklabels = {$0$, $\gamma$, $3$, $5$, $7$, $9$ },
        %xtick={0,0.433,0.6857,1},
        %xticklabels = { $0$, $s_{opt}$, $1$ },
        grid=major,
        ymajorgrids=false,
        scaled ticks=false,
        xlabel={ Regularization penalty $\gamma$ },
        ylabel={  }
        ]
        %%% real softmax
        \addplot[densely dashed,cyan,line width=1pt] plot coordinates{
        (0.010000,0.090772)(0.014874,0.091385)(0.022122,0.091947)(0.032903,0.092593)(0.048939,0.093518)(0.072790,0.095046)(0.108264,0.097732)(0.161026,0.102523)(0.239503,0.110996)(0.356225,0.125609)(0.529832,0.149789)(0.788046,0.187552)(1.172102,0.242353)(1.743329,0.315318)(2.592944,0.403767)(3.856620,0.501225)(5.736153,0.599218)(8.531679,0.689867)(12.689610,0.767839)(18.873918,0.830941)(28.072162,0.879563)(41.753189,0.915624)(62.101694,0.941613)(92.367086,0.959954)(137.382380,0.972705)(204.335972,0.981477)(303.919538,0.987467)(452.035366,0.991537)(672.335754,0.994294)(1000.000000,0.996156)
        };
        %%% real tanh
        \addplot[densely dotted,BLUE,line width=1pt] plot coordinates{
        (0.010000,0.063815)(0.014874,0.089794)(0.022122,0.120303)(0.032903,0.154403)(0.048939,0.190633)(0.072790,0.227313)(0.108264,0.263281)(0.161026,0.298537)(0.239503,0.334298)(0.356225,0.372505)(0.529832,0.415146)(0.788046,0.463741)(1.172102,0.518983)(1.743329,0.580330)(2.592944,0.645645)(3.856620,0.711344)(5.736153,0.773264)(8.531679,0.827881)(12.689610,0.873183)(18.873918,0.908822)(28.072162,0.935684)(41.753189,0.955275)(62.101694,0.969218)(92.367086,0.978969)(137.382380,0.985704)(204.335972,0.990317)(303.919538,0.993457)(452.035366,0.995586)(672.335754,0.997025)(1000.000000,0.997997)
        };
        %%% real exp
        \addplot[densely dotted,RED,line width=1pt] plot coordinates{
        (0.010000,0.002307)(0.014874,0.003546)(0.022122,0.005436)(0.032903,0.008406)(0.048939,0.013084)(0.072790,0.020254)(0.108264,0.030738)(0.161026,0.045266)(0.239503,0.064418)(0.356225,0.088592)(0.529832,0.117920)(0.788046,0.152137)(1.172102,0.190569)(1.743329,0.232394)(2.592944,0.277093)(3.856620,0.324845)(5.736153,0.376539)(8.531679,0.433254)(12.689610,0.495398)(18.873918,0.562025)(28.072162,0.630776)(41.753189,0.698402)(62.101694,0.761525)(92.367086,0.817339)(137.382380,0.864125)(204.335972,0.901448)(303.919538,0.929973)(452.035366,0.951026)(672.335754,0.966151)(1000.000000,0.976803)
        };
        %%% theory: tanh
        \addplot[smooth,BLUE,line width=1pt] plot coordinates{
        (0.010000,0.081063)(0.014874,0.099511)(0.022122,0.122385)(0.032903,0.150776)(0.048939,0.185921)(0.072790,0.229024)(0.108264,0.280911)(0.161026,0.341518)(0.239503,0.409443)(0.356225,0.481888)(0.529832,0.555227)(0.788046,0.625955)(1.172102,0.691480)(1.743329,0.750353)(2.592944,0.801997)(3.856620,0.846303)(5.736153,0.883378)(8.531679,0.913507)(12.689610,0.937211)(18.873918,0.955267)(28.072162,0.968619)(41.753189,0.978249)(62.101694,0.985060)(92.367086,0.989805)(137.382380,0.993074)(204.335972,0.995311)(303.919538,0.996833)(452.035366,0.997864)(672.335754,0.998561)(1000.000000,0.999031)
        };
        %%%theory: exp
        \addplot[smooth,RED,line width=1pt] plot coordinates{
        (0.010000,0.039663)(0.014874,0.048441)(0.022122,0.059195)(0.032903,0.072387)(0.048939,0.088593)(0.072790,0.108525)(0.108264,0.133045)(0.161026,0.163149)(0.239503,0.199901)(0.356225,0.244258)(0.529832,0.296774)(0.788046,0.357219)(1.172102,0.424252)(1.743329,0.495391)(2.592944,0.567429)(3.856620,0.637164)(5.736153,0.702055)(8.531679,0.760492)(12.689610,0.811653)(18.873918,0.855239)(28.072162,0.891297)(41.753189,0.920185)(62.101694,0.942576)(92.367086,0.959393)(137.382380,0.971680)(204.335972,0.980458)(303.919538,0.986620)(452.035366,0.990891)(672.335754,0.993823)(1000.000000,0.995822)
        };
        \end{axis}
  \end{tikzpicture}
  \caption{{ $E$ versus $\gamma$ }}
  \label{subfig:E4_2}
  \end{subfigure}
% \end{tabular} 
\caption{ { 
Interpolation behavior under alternative nonlinearities and pretrained Attention weights. \textbf{\Cref{subfig:E4_1}}: Theoretical interpolation errors for $ f(t) =\tanh(t) $ (\textbf{\BLUE blue}) versus $ f(t) = \max(-5,\min(5,t)) $ (\textbf{\PURPLE purple}) and that of linear regression (\textbf{\GREEN green}) in the over-determined regime, as a function of SNR, with $p = 512, n = 2\,048,\gamma = 1$, and $\w_K = \w_Q = \bmu$. 
\textbf{\Cref{subfig:E4_2}}: Interpolation error of Softmax (\textcolor{cyan}{\textbf{cyan}}) and entry-wise tanh (\textbf{\BLUE blue}), truncated exponential ($ f(t) = \min(5, \exp(t)) $ in \textbf{\RED red}) Attention. 
Theoretical predictions under \Cref{ass:low-rank-weights} in \underline{solid} lines and predictions with key/query weights extracted from a pretrained GPT-2 model (the detailed setup is provided in \Cref{sec:SM_additional_nums}) in \underline{dotted} lines, as a function of $\gamma$, for $p = 2\,048, n = 512$, and $ \w_K = \w_Q = \bmu = \zo $. 
 }}
\label{Figure4}
\end{figure}

%Unlike the previous ablation experiments, Figure 4 integrates multiple factors, including signal alignment, regularization strength ($\gamma$), and dimension ratio ($p/n$), offering the reader a comprehensive view of the performance differences between the two methods across various conditions.

% under different settings, particularly when comparing with real model parameters and Softmax. 

\Cref{Figure4} further compares nonlinear Attention and linear regression in two cases:
\begin{enumerate}[leftmargin=*]
   \item In \Cref{subfig:E4_1}, nonlinear Attention achieves lower interpolation error than linear regression when $ p/n < 1 $ and the SNR is low, particularly in the weakly regularized setting. 
   Additional experiments are reported in \Cref{fig:capacity_versus_SNR} and \Cref{fig_capacity_of_SNR_n_bigger_than_p} (\Cref{sec:SM_additional_nums} of the appendix). 
   \item In \Cref{subfig:E4_2}, we evaluate Softmax and entry-wise Attention using pretrained GPT-2 weights. 
   The empirical behavior remains consistent with our theoretical predictions: Attention follows the same qualitative trends as in our theoretical prediction, while attaining quantitatively lower interpolation error. 
   Further results are provided in \Cref{real_weight_capacity_versus_gamma} (\Cref{sec:SM_additional_nums} of the appendix).
   % in certain dimension ratio ($p/n$) settings, particularly in high-dimensional data and over-parameterized settings ($p/n$ < 1). 
\end{enumerate}
Overall, these results suggest that under structured inputs and aligned weights, nonlinear Attention can match or even surpass linear regression in interpolation performance, particularly in low-SNR and/or weak-sample regimes.

\begin{figure*}[!t]
\centering
\input{fig_capacity_versus_regularization_cos} 
\caption{ { 
Effect of linear component in Attention interpolation.
\textbf{\Cref{subfig:E_a1}}: Empirical (\textbf{\RED red}) and theoretical (\textcolor{cyan}{\textbf{cyan}}) interpolation error for $ f(t) = \max(-5,\min(5,rt + \sqrt{1 - r^2} (t^3 - 3t)/\sqrt6)) $ as a function of the Hermite coefficient $ a_1 \approx r$ for $ p = n = 4\,096, \gamma = 1$, and $\| \bmu \|^2 = 1$. 
\textbf{\Cref{subfig:E_cos_p}}: Empirical (\textbf{\RED red}) and theoretical (\textcolor{cyan}{\textbf{cyan}}) for $ f(t) = \cos(t) $, versus the theoretical error of $ f(t) = \tanh(t) $ (\textbf{\BLUE blue}) and the theoretical error of $ f(t) = \max(-5,\min(5,t)) $ (\textbf{\PURPLE purple}), as a function of the embedding dimension $p$, for sample size $n = 4\,096, \gamma = 1$, and $\| \bmu \|^2 = 1$. 
\textbf{\Cref{subfig:E_cos_SNR}}: Empirical (\textbf{\RED red}) and theoretical (\textcolor{cyan}{\textbf{cyan}}) for $ f(t) = \cos(t) $, versus the theoretical error of $ f(t) = \tanh(t) $ (\textbf{\BLUE blue}) and the theoretical error of $ f(t) = \max(-5,\min(5,t)) $ (\textbf{\PURPLE purple}), as a function of the SNR $\| \bmu \|^2$, for $p = 512, n = 2\,048,\gamma = 10^{-2}$, and $\w_K = \w_Q = \bmu$.
 }}
\label{fig:capacity_versus_regularization_cos}
\end{figure*}

% This suggests that a nontrivial linear component in the Attention nonlinearity $f$ improves the model's capacity in capturing ``informative directions'' in the input space, thereby reducing memorization error.

\subsection{Importance of the Attention Linear Component in Interpolation}
\label{subsec:importance_of_the_attention_linear_component_in_interpolation}

\Cref{fig:capacity_versus_regularization_cos} examines the impact of the \emph{linear component} of the nonlinearity $f$ in Attention, quantified by its first Hermite coefficient
\begin{equation}
  a_1 = \EE[\xi f(\xi)], \xi \sim \NN(0,1),
\end{equation}
on interpolation performance.

In \Cref{subfig:E_a1}, we consider a family of nonlinearities
\begin{equation*}
  f_r(t) = \max \left\{ -5, \min(5, r \cdot {\rm He}_1(t) + \sqrt{1- r^2} \cdot {\rm He}_3(t)) \right\},
\end{equation*}
parameterized by $r \in [0,1]$, where ${\rm He}_1(t) = t$ and ${\rm He}_3(t) = (t^3 - 3t)/\sqrt 6$ are the first and third normalized Hermite polynomials.
The ``total energy''  of $f$ is fixed at $\nu = \EE_{\xi \sim\mathcal{N}(0,1)}[f^2(\xi)] \approx 1$, so that varying $r$ changes only the proportion of linear versus purely nonlinear (in fact cubic) components. 
We observe a monotone decrease of interpolation error as $a_1$ increases, indicating that the linear component directly governs interpolation efficiency.

To further illustrate this effect, \Cref{subfig:E_cos_p} and \Cref{subfig:E_cos_SNR} compare three nonlinearities:
$f(t) = \tanh(t)$ (with $a_1 = 0.6057$), 
bounded linear $f(t) = \max(-5,\min(5,t))$ (with $a_1 \approx 1$), and 
$f(t ) = \cos(t)$ (with $a_1 \approx 0$). 
As shown in \Cref{subfig:E_cos_p}, increasing the embedding dimension $p$ improves interpolation only when $a_1 \neq 0$.
When $a_1 \approx 0$, no significant dimensional gain is observed. 
Similarly, \Cref{subfig:E_cos_SNR} demonstrates that interpolation error decreases with SNR only for nonlinearities possessing a \emph{nonzero} linear component; cosine-based Attention exhibits almost no improvement.

These results provide evidence that the first Hermite coefficient $a_1$ acts as a key control parameter for interpolation: without a linear component, Attention \emph{cannot} effectively leverage increasing dimension or signal strength. 
This aligns with the characterization in \Cref{theo:high_interpolation}, where the interpolation error depends explicitly on $a_1$.

%Further experiments are presented in \Cref{sec:SM_additional_nums} to illustrate in greater detail the impact of the dimension ratio $p/n$, the SNR, the Attention nonlinearity, and the alignment between Attention weights and input data signal $\bmu$ on interpolation.
%Additionally, numerical results based on pretrained GPT-2 weights are included, showing trends that closely align with the theoretical predictions derived in \Cref{theo:high_interpolation}.

\begin{figure}[!t]
\centering
 \begin{subfigure}[t]{0.23\textwidth}
  \begin{tikzpicture}[font=\footnotesize]
    \renewcommand{\axisdefaulttryminticks}{4} 
    \pgfplotsset{every major grid/.append style={densely dashed}}          
    \tikzstyle{every axis y label}+=[yshift=-10pt] 
    \tikzstyle{every axis x label}+=[yshift=5pt]
    %\tikzstyle{every axis legend}+=[cells={anchor=west},fill=white,at={(0.98,0.98)}, anchor=north east, font=\footnotesize ]
    \pgfplotsset{every axis legend/.style={cells={anchor=west},fill=white,at={(0.98,0.9)}, anchor=north east, font=\footnotesize}}
    \begin{axis}[
      width=1\linewidth,
      height=.8\linewidth,
      xmin=0.1,xmax=10,
      xmode=log,
      ymax=1,
      % ymode=log,
      % ymin=0.1,ymax=1100,
        %xticklabels = {$0$, $\gamma$, $3$, $5$, $7$, $9$ },
        %xtick={0,0.433,0.6857,1},
        %xticklabels = { $0$, $s_{opt}$, $1$ },
        grid=major,
        ymajorgrids=false,
        scaled ticks=false,
        xlabel={ SNR $\| \bmu \|^2$ },
        ylabel={ $E$ }
        ]
        %%% tanh 
        \addplot[smooth,BLUE,line width=1pt] plot coordinates{ 
        (0.100000,0.689194)(0.117210,0.680450)(0.137382,0.670378)(0.161026,0.658817)(0.188739,0.645601)(0.221222,0.630569)(0.259294,0.613577)(0.303920,0.594503)(0.356225,0.573272)(0.417532,0.549871)(0.489390,0.524362)(0.573615,0.496902)(0.672336,0.467748)(0.788046,0.437252)(0.923671,0.405848)(1.082637,0.374022)(1.268961,0.342279)(1.487352,0.311109)(1.743329,0.280950)(2.043360,0.252173)(2.395027,0.225064)(2.807216,0.199823)(3.290345,0.176568)(3.856620,0.155347)(4.520354,0.136147)(5.298317,0.118907)(6.210169,0.103531)(7.278954,0.089899)(8.531679,0.077876)(10.000000,0.067320)
        };
        %%% tanh mu⊥wK⊥wQ
        \addplot[densely dotted,BLUE,line width=1pt] plot coordinates{
        (0.100000,0.712041)(0.117210,0.706674)(0.137382,0.700357)(0.161026,0.692923)(0.188739,0.684179)(0.221222,0.673902)(0.259294,0.661844)(0.303920,0.647731)(0.356225,0.631274)(0.417532,0.612186)(0.489390,0.590207)(0.573615,0.565148)(0.672336,0.536936)(0.788046,0.505677)(0.923671,0.471700)(1.082637,0.435579)(1.268961,0.398115)(1.487352,0.360254)(1.743329,0.322976)(2.043360,0.287161)(2.395027,0.253503)(2.807216,0.222457)(3.290345,0.194257)(3.856620,0.168951)(4.520354,0.146459)(5.298317,0.126617)(6.210169,0.109220)(7.278954,0.094039)(8.531679,0.080844)(10.000000,0.069413)
        };
        %%% linear 
        \addplot[smooth,PURPLE,line width=1pt] plot coordinates{ 
        (0.100000,0.608346)(0.117210,0.600080)(0.137382,0.590606)(0.161026,0.579793)(0.188739,0.567510)(0.221222,0.553635)(0.259294,0.538063)(0.303920,0.520718)(0.356225,0.501563)(0.417532,0.480616)(0.489390,0.457956)(0.573615,0.433737)(0.672336,0.408184)(0.788046,0.381594)(0.923671,0.354318)(1.082637,0.326747)(1.268961,0.299283)(1.487352,0.272318)(1.743329,0.246208)(2.043360,0.221258)(2.395027,0.197712)(2.807216,0.175744)(3.290345,0.155464)(3.856620,0.136921)(4.520354,0.120112)(5.298317,0.104993)(6.210169,0.091488)(7.278954,0.079496)(8.531679,0.068907)(10.000000,0.059599)
        };
        %%% linear mu⊥wK⊥wQ
        \addplot[densely dotted,PURPLE,line width=1pt] plot coordinates{
        (0.100000,0.627090)(0.117210,0.621453)(0.137382,0.614857)(0.161026,0.607146)(0.188739,0.598145)(0.221222,0.587661)(0.259294,0.575489)(0.303920,0.561412)(0.356225,0.545223)(0.417532,0.526736)(0.489390,0.505817)(0.573615,0.482417)(0.672336,0.456604)(0.788046,0.428595)(0.923671,0.398773)(1.082637,0.367673)(1.268961,0.335944)(1.487352,0.304288)(1.743329,0.273380)(2.043360,0.243802)(2.395027,0.216006)(2.807216,0.190290)(3.290345,0.166816)(3.856620,0.145627)(4.520354,0.126680)(5.298317,0.109867)(6.210169,0.095044)(7.278954,0.082046)(8.531679,0.070697)(10.000000,0.060825)
        };
        \end{axis}
  \end{tikzpicture}
  \caption{{ $ p/n = 1 $ }}
  \label{subfig:SNR_E2}
  \end{subfigure}
  ~
  \begin{subfigure}[t]{0.23\textwidth}
  \begin{tikzpicture}[font=\footnotesize]
    \renewcommand{\axisdefaulttryminticks}{4} 
    \pgfplotsset{every major grid/.append style={densely dashed}}          
    \tikzstyle{every axis y label}+=[yshift=-10pt] 
    \tikzstyle{every axis x label}+=[yshift=5pt]
    %\tikzstyle{every axis legend}+=[cells={anchor=west},fill=white,at={(0.98,0.98)}, anchor=north east, font=\footnotesize ]
    \pgfplotsset{every axis legend/.style={cells={anchor=west},fill=white,at={(0.98,0.9)}, anchor=north east, font=\footnotesize}}
    \begin{axis}[
      width=1\linewidth,
      height=.8\linewidth,
      xmin=0.1,xmax=10,
      xmode=log,
      ymax=1,
      % ymode=log,
      % ymin=0.1,ymax=1100,
        %xticklabels = {$0$, $\gamma$, $3$, $5$, $7$, $9$ },
        %xtick={0,0.433,0.6857,1},
        %xticklabels = { $0$, $s_{opt}$, $1$ },
        grid=major,
        ymajorgrids=false,
        scaled ticks=false,
        xlabel={ SNR $\| \bmu \|^2$ },
        ylabel={ }
        ]
        %%% tanh
        \addplot[smooth,BLUE,line width=1pt] plot coordinates{ 
        (0.100000,0.642323)(0.117210,0.638146)(0.137382,0.633220)(0.161026,0.627410)(0.188739,0.620561)(0.221222,0.612489)(0.259294,0.602986)(0.303920,0.591819)(0.356225,0.578734)(0.417532,0.563462)(0.489390,0.545740)(0.573615,0.525334)(0.672336,0.502075)(0.788046,0.475907)(0.923671,0.446936)(1.082637,0.415476)(1.268961,0.382073)(1.487352,0.347487)(1.743329,0.312629)(2.043360,0.278448)(2.395027,0.245809)(2.807216,0.215389)(3.290345,0.187616)(3.856620,0.162679)(4.520354,0.140570)(5.298317,0.121146)(6.210169,0.104195)(7.278954,0.089473)(8.531679,0.076731)(10.000000,0.065736)
        };
        %%% tanh mu⊥wK⊥wQ
        \addplot[densely dotted,BLUE,line width=1pt] plot coordinates{
        (0.100000,0.656799)(0.117210,0.655129)(0.137382,0.653142)(0.161026,0.650773)(0.188739,0.647939)(0.221222,0.644544)(0.259294,0.640465)(0.303920,0.635551)(0.356225,0.629619)(0.417532,0.622441)(0.489390,0.613741)(0.573615,0.603187)(0.672336,0.590389)(0.788046,0.574900)(0.923671,0.556235)(1.082637,0.533911)(1.268961,0.507508)(1.487352,0.476775)(1.743329,0.441763)(2.043360,0.402955)(2.395027,0.361351)(2.807216,0.318438)(3.290345,0.275989)(3.856620,0.235752)(4.520354,0.199106)(5.298317,0.166852)(6.210169,0.139206)(7.278954,0.115931)(8.531679,0.096544)(10.000000,0.080476)
        };
        %%% linear
        \addplot[smooth,PURPLE,line width=1pt] plot coordinates{ 
        (0.100000,0.482500)(0.117210,0.478822)(0.137382,0.474498)(0.161026,0.469416)(0.188739,0.463449)(0.221222,0.456455)(0.259294,0.448276)(0.303920,0.438742)(0.356225,0.427679)(0.417532,0.414919)(0.489390,0.400318)(0.573615,0.383774)(0.672336,0.365258)(0.788046,0.344840)(0.923671,0.322710)(1.082637,0.299192)(1.268961,0.274729)(1.487352,0.249857)(1.743329,0.225148)(2.043360,0.201148)(2.395027,0.178325)(2.807216,0.157030)(3.290345,0.137486)(3.856620,0.119798)(4.520354,0.103970)(5.298317,0.089939)(6.210169,0.077594)(7.278954,0.066798)(8.531679,0.057402)(10.000000,0.049258)
        };
        %%% linear mu⊥wK⊥wQ
        \addplot[densely dotted,PURPLE,line width=1pt] plot coordinates{
        (0.100000,0.495781)(0.117210,0.494359)(0.137382,0.492657)(0.161026,0.490618)(0.188739,0.488167)(0.221222,0.485214)(0.259294,0.481648)(0.303920,0.477330)(0.356225,0.472095)(0.417532,0.465738)(0.489390,0.458019)(0.573615,0.448657)(0.672336,0.437337)(0.788046,0.423726)(0.923671,0.407498)(1.082637,0.388387)(1.268961,0.366249)(1.487352,0.341145)(1.743329,0.313415)(2.043360,0.283712)(2.395027,0.252972)(2.807216,0.222302)(3.290345,0.192796)(3.856620,0.165366)(4.520354,0.140619)(5.298317,0.118833)(6.210169,0.100011)(7.278954,0.083968)(8.531679,0.070416)(10.000000,0.059037)
        };
        \end{axis}
  \end{tikzpicture}
  \caption{{ $ p/n = 4 $ }}
  \label{subfig:SNR_E3}
  \end{subfigure}
  ~
 
\caption{ { 
% Theoretical statistical memorization error error for key/query weights aligned with the signal direction (solid lines: $ \w_K = \w_Q = \bmu_{\rm base} \sim \NN(\zo, \I_p/p)$, $\bmu \propto \bmu_{\rm base}$) versus theoretical memorization error for both weights orthogonal to the signal (dotted lines: $\w_K \perp \w_Q \perp \mu_{base} $ ). The activation function in \textbf{\BLUE blue} and \textbf{\RED red} is $ f(t) =\tanh(t) $ , and in \textbf{\PURPLE purple} and \textcolor{cyan}{\textbf{cyan}} the activation function is $ f(t) = \max(-5,\min(5,t)) $, for regularization strength $\gamma = 1$.
Theoretical interpolation errors of $\tanh$ (\textbf{\BLUE blue}) and truncated linear (with $ f(t) = \max(-5,\min(5,t)) $ in \textbf{\PURPLE purple}) Transformer, for key/query weights aligned with the signal direction in solid lines: $ \w_K = \w_Q = \bmu_{\rm base} \sim \NN(\zo, \I_p/p)$ and $\bmu \propto \bmu_{\rm base}$; versus the case where both weights orthogonal to the signal in \underline{dotted} lines: $\w_K \perp \bmu_{\rm base}, \w_Q \perp \bmu_{\rm base}$, $\w_K \perp \w_Q$ and $\bmu \propto \bmu_{\rm base}$; for regularization strength $\gamma = 1$.
} }
\label{SNR_function_dimension}
\end{figure}

\subsection{Role of Attention Weight Alignment in Interpolation}

Beyond the effect of linear component of nonlinearity discussed in \Cref{subsec:importance_of_the_attention_linear_component_in_interpolation}, \Cref{theo:high_interpolation} shows that interpolation performance is also governed by the alignment between the Attention weights and the input signal $\bmu$.
In particular, the low-rank informative structure identified in \Cref{lem:linearization_K_X} depends on the interaction among $\bmu$, $\w_K$, and $\w_Q$, indicating that alignment plays a fundamental role in interpolation.

\Cref{SNR_function_dimension} evaluates this effect by varying the alignment between Attention weights (the query and key vectors $\w_Q, \w_K$) and the input signal $\bmu$. 
A clear pattern emerges: when the Attention weights are aligned with $\bmu$, the resulting interpolation error is substantially reduced compared to orthogonal case. 
This improvement holds across both nonlinearities considered: $ f(t) =\tanh(t) $ and the truncated linear function $ f(t) = \max(-5,\min(5,t)) $, and persists over a broad range of SNR levels and dimension ratios $p/n$. 

% The benefit of alignment is particularly pronounced in the over-parameterized regime $p/n<1$, where nonlinear Attention can significantly outperform its misaligned counterpart. 

The performance gain from the weight alignment is particularly pronounced in the over-determined $p/n<1$ setting.

\section{Conclusion and Perspectives}
\label{sec:Conclusion}

In this paper, we provide a precise high-dimensional characterization of the interpolation error for nonlinear Attention on structured inputs.
We show that, although nonlinear Attention typically incurs slighter higher interpolation error than linear regression for random inputs, this disadvantage vanishes—and can even be reversed—when the input possesses structure, particularly when the Attentions weights are aligned with the underlying input signal.

% Our analysis further reveal the critical role of the linear component in nonlinear Attention: Attentions with a vanishing first Hermite coefficient $a_1 = 0$ are fundamentally \emph{limited} in their ability to reduce the memorization error (e.g., as the embedding dimension $p$ or SNR increase).
% This insight underscores the importance of preserving some degree of linearity in Attention design for efficient memorization and representation learning.

A natural extension of this work is to incorporate more realistic architectural components used in practical Transformers, such as skip connections or multi-head Attention.
Another interesting direction is to go beyond the i.i.d.\@ signal-plus-noise model in \Cref{def:signal_plus_noise}. 
In real-world scenarios such as natural language processing or time series analysis, the input (tokenized) sequences typically exhibit strong temporal correlations. 
For instance, the case of linear temporal correlation has been recently studied in \citep{moniri2024Asymptotics}, though limited to linear regression model.
It would be of interest to extend our nonlinear random matrix analysis to such structured input settings.
 % to further bridge the gap between theoretical understanding and practical deployment of Transformer architectures.

% \section*{Acknowledgments}
% Z.~Liao would like to acknowledge the National Natural Science Foundation of China (via fund NSFC-62206101) and the Guangdong Provincial Key Laboratory of Mathematical Foundations for Artificial Intelligence (2023B1212010001) for providing partial support. 
% Difan Zou acknowledges the support from NSFC 62306252, Hong Kong ECS award 27309624, Guangdong NSF 2024A1515012444, and the central fund from HKU. 
% Z.~Ling is supported by the National Natural Science Foundation of China (via NSFC-62406119) and the Natural Science Foundation of Hubei Province (2024AFB074).

%\section*{Ethics Statement}
%This submission focuses on the theoretical analysis of nonlinear Attention using random matrix theory.
%We do not feel that this submission raises any ethical concerns regarding, e.g., human subjects or potentially harmful insights. 

%\section*{Reproducibility Statement}
%The numerical experiments are obtained using synthetic data drawn from the binary Gaussian signal-plus-noise model defined in \Cref{def:signal_plus_noise}.
%The experimental settings, hyperparameter choices, and results are fully described in the main text and in the appendix. 
%Consequently, the numerical results reported in this paper can be independently reproduced without relying on any proprietary datasets or external resources.

% \bibliographystyle{abbrvnat}
\bibliographystyle{plainnat}
\bibliography{liao}

%%%%%%%%%%%%%%%%%%%%%%%%%%%%%%%%%%%%%%%%%%%%%%%%%%%%%%%%%%%%

% \input{NeurIPS_checklist}

%%%%%%%%%%%%%%%%%%%%%%%%%%%%%%%%%%%%%%%%%%%%%%%%%%%%%%%%%%%%

\newpage
\onecolumn 
% \title{Supplementary Material of \\On the Interpolation Error of Nonlinear Attention versus Linear Regression}
% \maketitle
\appendix  % 这里才开始附录

\begin{center}
  \textbf{\Large Supplementary Material of \\On the Interpolation Error of Nonlinear Attention versus Linear Regression}
\end{center}

The technical appendices of this paper are organized as follows.
An extended discussion of related work is given in \Cref{sec:further_discussions_of_prior_efforts}.
Some auxiliary results and discussions are placed in \Cref{sec:further}.
The detailed proofs of our technical results are given in \Cref{sec:math_proof}.
Additional numerical results are provided in \Cref{sec:SM_additional_nums}.

\section{Further Discussions of Prior Efforts}
\label{sec:further_discussions_of_prior_efforts}

\paragraph{Transformers and empirical scaling laws.} 

A growing body of work has established empirical scaling laws for Transformer models with respect to data size, model size, and computational budget. 
Early studies demonstrated power-law curves between generalization performance and model size for Transformer-based LLMs~\citep{kaplan2020scaling,henighan2020scaling}, with subsequent extensions to transfer and multitask learning~\citep{hernandez2021scaling,wei2022emergent}. 
Notably, it has been shown in~\citep{hoffmann2022training} that smaller models trained on more data can outperform larger ``undertrained'' models under fixed compute budget. 
Other studies have explored the effects of overparameterization, initialization, and depth-width trade-offs in the scaling laws of Transformer-based models~\citep{bahri2024scaling,zhai2022scaling,xiao2021early}. 
Emergent phenomena and scaling transitions such as double descent~\citep{nakkiran2020deep}, in-context induction~\citep{olsson2022icl}, and phase shifts in predictability~\citep{ganguli2022predictability} have also been empirically observed. 
Investigations on Vision Transformers and instruction-tuned models~\citep{dosovitskiy2020Image,chowdhery2023palm} further support the universality of scaling behaviors across different modalities.

Our work complements these empirical findings by providing a \emph{precise} theoretical characterization on the scaling law of interpolation error of nonlinear Attention as a function of the sample-to-dimension ratio ($n/p$) and the SNR of the input data. 

\paragraph{Efficient Transformer variants and low-rank adaptation.} 

The quadratic complexity of vanilla Attention with respect to sequence length has motivated a wide range of approximation-based methods to improve computational efficiency.
Performer has replaced the Softmax nonlinearity with kernel-based random projections to achieve near-linear complexity~\citep{krzysztof2021rethinking}; Linformer has projected keys and values into a low-dimensional subspace~\citep{wang2020linformer}; Nyströmformer approximates the Attention matrix using the Nyström method~\citep{xiong2021nystromformer}; and Reformer has combined locality-sensitive hashing with reversible layers for memory savings~\citep{kitaev2020reformer}. 
In parallel, a series of works have proposed low-rank adaptation techniques for efficient fine-tuning of Transformer-based LLMs.
LoRA has introduced trainable low-rank updates to frozen weights~\citep{hu2022lora}; QLoRA has extended this idea to quantized models with minimal performance degradation~\citep{dettmers2023qlora}; LoRA-FA has improved memory efficiency via factorized updates~\citep{zhang2023lora}; and UniPELT has unified multiple parameter-efficient tuning strategies into a single framework~\citep{mao2021unipelt}. 

Motivated by these low-rank structures in computing and/or fine-tuning Transformer-based models, we assume in \Cref{ass:low-rank-weights} a full-plus-low-rank decomposition of the Attention weights, and characterizes how such structure affects the interpolation error of nonlinear Attention.

\paragraph{Theoretical understanding of DNN, LLMs, and in-context learning.} 

Recent theoretical advances in the optimization and generalization of over-parameterized deep neural networks (DNNs) have laid the groundwork for understanding the training behavior of modern large language models (LLMs).
Despite the fact that LLMs typically operate in a regime where the number of model parameters far exceeds the number of training samples, first-order methods such as stochastic gradient descent have been shown to converge reliably and generalize effectively under specific conditions~\citep{li2018learning,allen2019convergence} for DNNs. 
Notably, the ``edge of stability'' phenomenon has emerged as a key concept, capturing the peculiar yet effective optimization dynamics commonly observed during the training of DNNs and LLMs~\citep{cohen2021gradient,arora2022Understanding,wang2022Analyzing}. 
Building on these insights, a growing body of work has investigated the mechanisms underlying in-context learning (ICL). 
Transformers have been shown to approximate gradient descent steps via Attention blocks~\citep{von2023transformers,mahankali2023one}, and even to implement general-purpose learning algorithms directly from contextual input~\citep{akyurek2022learning,garg2022can}. 
Connection has also been drawn on Transformer computation and functional gradient descent~\citep{cheng2023transformers}.
Alternative viewpoints interpret ICL as a form of implicit Bayesian inference~\citep{xie2021Explanation,falck2024context}, offering probabilistic frameworks to explain generalization from prompts. 
At the mechanistic level, recent work has identified ``induction heads'' within Transformer architectures than enable token-level pattern recognition and generalization~\citep{olsson2022context}. Beyond training dynamics, LayerNorm has also been shown to influence the expressivity of Attention-based models~\citep{brody2023expressivity}. 
From a model design perspective, entropy-guided variants of the Attention have been proposed to balance stability, computational efficiency, and privacy in large-scale models~\citep{jha2025entropy}.

% Our work shows how factors such as nonlinearity strength, sample-to-dimension ratio, noise level, and weight-input alignment quantitatively influence memorization dynamics, offering a new lens on the fine-grained behavior of attention in ICL.

% Acceleration methods such as Katyusha and linear coupling \citep{allen2018katyusha} further deepen our understanding of optimization in large models. 

\section{Auxiliary Results and Further Discussions}
\label{sec:further}

In this section, we provide further discussions on possible extensions of our theoretical results.
We discuss in \Cref{rem:softmax_Attention} the connection between the entry-wise Attention in \Cref{def:Attention} to the standard Softmax Attention, in \Cref{rem:extension_beyond_rank_one} the extension of \Cref{ass:low-rank-weights} beyond the rank-one setting, and in \Cref{rem:extension_beyond_binary_signal_plus_noise} the possibility to relax the signal-plus-noise model in \Cref{def:signal_plus_noise}.

\begin{Remark}[On Softmax Attention]\label{rem:softmax_Attention}\normalfont

% \zhenyu{\citep[Proposition B.9]{hayase2025Gaussian} This part to revise!}

As already mentioned in the discussion after \Cref{def:Attention}, while \Cref{def:Attention} corresponds to entry-wise Attention (such the sigmoid or ReLU Attention~\citep{wortsman2023Replacing,ramapuram2024Theory}) instead of commonly used Softmax Attention, under the setting of Assumptions~\ref{ass:low-rank-weights}~and~\ref{ass:high-dim} and for input token drawn from the signal-plus-noise model in \Cref{def:signal_plus_noise} taking the truncated exponential function 
\begin{equation}
  %f(t) = \check \exp(t) - \EE_{\xi \sim \NN(0,1)}[\check \exp(t)], \quad \check \exp(t) \equiv \min(\exp(t), C) ,
  f(t) = \check \exp(t), \quad \check \exp(t) \equiv \min(\exp(t), C) ,
\end{equation}
for some $C>0$ sufficiently large independent of $n,p$, leads to asymptotically the same Attention matrix $\A_\X$ as that of Softmax nonlinearity, up to a scaling factor.

Precisely, we follow the line of arguments similar to \citep{hayase2025Gaussian} and denote
\begin{equation}
  S_{ij} = \x_i^\top \W_K^\top \W_Q \x_j/\sqrt p,
\end{equation}
so that the (truncated) Softmax Attention writes
\begin{equation}
  \A_\X = \W_V \X \cdot\mathrm{Softmax}(\X) \sqrt p,\quad\text{with}\quad \sqrt p \cdot \mathrm{Softmax}(\X) = \left(  \diag \left\{ \frac{\sum_{j=1}^n \check \exp(S_{ij}) }{p \cdot \EE[\check \exp(\xi)]/c} \right\} \right)^{-1} \frac{\check \exp(S_{ij})}{\sqrt p \cdot \EE[\check \exp(\xi)]/c},
\end{equation}
for some (auxiliary) random variable $\xi \sim \NN(0,1)$.

Now, note from the proof of \Cref{lem:linearization_K_X} below in \Cref{subsec:proof_lem_linearization_K_X} that the $(i,j)$ entry of $\X^\top \W_K^\top \W_Q \X$ is given, for $i \neq j$, by
\begin{align*}
       \sqrt p S_{ij} = &\x_i^\top \W_K^\top \W_Q \x_j = \x_i^\top \x_j + \x_i^\top \w_K \w_Q^\top \x_j \\ 
       &= \underbrace{\z_i^\top \z_j}_{O(\sqrt p)} + \underbrace{y_i y_j \| \bmu \|^2 + (y_j \z_i + y_i \z_j)^\top \bmu + y_j \z_i^\top \w_K \w_Q^\top \bmu + y_i \bmu^\top \w_K \w_Q^\top \z_j + \z_i^\top \w_K \w_Q^\top \z_j + y_i y_j \bmu^\top \w_K \w_Q^\top \bmu}_{O(1)},
\end{align*}
with $\z_i^\top \z_j/\sqrt p \to \NN(0,1)$ in law as $p \to \infty$, and for $i=j$, by
\begin{align*}
    \sqrt p S_{ii} = \x_i^\top \W_K^\top \W_Q \x_i &= \| \x_i \|^2 + \x_i^\top \w_K \w_Q^\top \x_i \\ 
    &= \underbrace{\| \z_i \|^2}_{O(p)} + \underbrace{\| \bmu \| ^2 + 2 y_i \z_i^\top \bmu + y_i \z_i^\top \w_K \w_Q^\top \bmu + y_i \bmu^\top \w_K \w_Q^\top \z_i + \z_i^\top \w_K \w_Q^\top \z_i + \bmu^\top \w_K \w_Q^\top \bmu}_{O(1)}.
\end{align*}
As a consequence, we obtain
\begin{equation}
  \frac1p \sum_{j=1}^n \check \exp(S_{ij}) = \frac1p \check \exp(S_{ii}) + \frac{n}p \cdot \frac1n \sum_{j \neq i}^n \check \exp(S_{ij}) \to \frac1c \EE[ \exp(\xi) ],
\end{equation}
in probability as $n,p \to \infty$ at the same rate, so that 
\begin{equation}
  \left\| \left(  \diag \left\{ \frac{\sum_{j=1}^n \check \exp(S_{ij}) }{p \cdot \EE[\check \exp(\xi)]/c} \right\} \right)^{-1} - \I_n \right\| \to 0,
\end{equation}
in probability as $n,p \to \infty$.

Further note that $\frac{\check \exp(S_{ij})}{\sqrt p \cdot \EE[\check \exp(\xi)]/c}$ is nothing but the kernel matrix $\K_\X$ defined in \Cref{def:Attention} with truncated exponential nonlinearity $f(t) = \check \exp(t)$, up to a scaling factor that depends on $\EE[\check \exp(\xi)]/c$.
\end{Remark}

\begin{Remark}[Beyond rank-one setting in \Cref{ass:low-rank-weights}]\label{rem:extension_beyond_rank_one}\normalfont
While we consider in \Cref{ass:low-rank-weights} that the Attention weights admits a full-plus-low-rank decomposition of the form $\W_K^\top \W_Q = \I_p + \w_K \w_Q^\top$, with $\w_K \w_Q^\top$ being of rank one, it is possible to extend the analysis beyond the rank-one setting and consider a low-rank part of rank $K$ (with $K$ fixed as $n,p \to \infty$).
Notably, in that setting, the linearization result in \Cref{lem:linearization_K_X} must be modified so that the term $\U_K \bSigma_\K \V_Q^\top$ takes account of the rank-$K$ structure in the product $\W_K^\top \W_Q$.
\end{Remark}

\begin{Remark}[Extension beyond signal-plus-noise model in \Cref{def:signal_plus_noise}]\label{rem:extension_beyond_binary_signal_plus_noise}\normalfont
The signal-plus-noise model in \Cref{def:signal_plus_noise} can be extended by considering more sophisticated structure in data the signal, for example with $\X = \M \mathbf{J}^\top$ where $\M = [\bmu_1,~\ldots,~\bmu_K] \in \RR^{p \times K}$ is the matrix containing $K > 2$ signals (that, e.g., corresponds to $K$ tasks and/or mixture model having $K$ class), and $\mathbf{J} = [\mathbf{j}_1,~\ldots,~\mathbf{j}_K] \in \RR^{n \times K}$ is the canonical vector of task/class $\mathcal{C}_k \in \{1, \ldots K \}$, with $[\mathbf{j}_k]_i = 1$ if $\x_i$ belongs to task/class $\mathcal{C}_k$ and zero otherwise.
%  (at least) in the following two ways.
% \begin{enumerate}
%    \item By considering more sophisticated structure in the statistical signal, for instance with $\X = \M \mathbf{J}^\top$ where $\M = [\bmu_1,~\ldots,~\bmu_K] \in \RR^{p \times K}$ is the matrix containing the means of the $K > 2$ classes, and $\mathbf{J} = [\mathbf{j}_1,~\ldots,~\mathbf{j}_K] \in \RR^{n \times K}$ is the canonical vector of class $\mathcal{C}_k \in \{1, \ldots K \}$, with $[\mathbf{j}_k]_i = 1$ if $\x_i$ belongs to class $\mathcal{C}_k$ and zero otherwise.
%    \item By considering not necessarily Gaussian noise in the input tokens. 
%    An example is the sub-gaussian distribution that is symmetric in law. 
%    It has been long known in RMT that eigenspectra of large random matrices enjoy universal properties for Gaussian and non-Gaussian entries~\citep{tao2010random,pastur2011eigenvalue}, and we expect that most of our technical results also hold for sub-gaussian distributions, see for example \Cref{lem:kernel_norm_control} below.
% \end{enumerate}
\end{Remark}

\begin{Remark}[Extension to multi-head Attention]
\label{rem:extension_to_multi-head}\normalfont
Compared to the single-head Attention module in \Cref{def:Attention}, the multi-head Attention architecture is more popularly used. 
As long as the number of heads remains constant and does not scale with the dimensions $n,p$, the linear combination-type $H$-head Attention of the form 
\begin{equation}
  \A_\X = \sum_{i=1}^H w_i \mathbf{H}_i(\X), \quad \mathbf{H}_i(\X) = \W_{V,i} \X f( \X^\top \W_{K,i}^\top \W_{Q,i} \X/\sqrt p)/\sqrt p,
\end{equation}
with weights $w_1, \ldots, w_H \in \RR$, do not introduce any fundamentally novel technical difficulty in the analysis, and we expect the main qualitative behavior and scaling laws characterized in this paper to remain valid in the multi-head setting. 
In particular, we believe that in the case of multi-task context (i.e., more that one directions to be captured from the input data, that are correlated with different targets, as discussed in \Cref{rem:extension_beyond_binary_signal_plus_noise}), multi-head Attention should be considered for the interpolation error to be efficient.
A complete treatment of the multi-head extension is left for future work.
% can be viewed as a parallel composition of several structurally identical single-head Attention blocks: each head employs its own set of projection matrices and performs an Attention operation of the same form as considered in this work, and we assume that the corresponding projection matrices satisfy the same structural assumption as in \Cref{ass:low-rank-weights}. 
% Different heads only differ in the specific parameter values. 
% The outputs of all heads are then aggregated through concatenation followed by a fixed linear mapping to obtain the final representation.
% When the number of heads remains constant and does not scale with the dimension, such a concatenation does not introduce any fundamentally new technical difficulty from a random matrix perspective, and we expect the main qualitative behavior and scaling laws characterized in this paper to remain valid in the multi-head setting. A complete and rigorous theoretical treatment of the multi-head extension is left for future work.
\end{Remark}

\section{Mathematical Proofs}
\label{sec:math_proof}

In this section, we present the proofs of the technical results in this paper.
We first recall in \Cref{subsec:useful_lemmas} a few lemmas that will be used in the proofs. 
The proof of \Cref{lem:linearization_K_X} is given in \Cref{subsec:proof_lem_linearization_K_X}, the proof of \Cref{prop:DE_resovlent_noise} is given in \Cref{subsec:proof_of_prop:DE_resovlent_noise}, the proof of \Cref{theo:high_interpolation} is given in \Cref{subsec:proof_of_theo:ICM}, and finally the proof of \Cref{prop:high_interpolation_error_RR} in \Cref{subsec:proof_of_prop:ICM_RR}.

\subsection{Useful Lemmas}
\label{subsec:useful_lemmas}

\begin{Lemma}[Spectral norm of kernel random matrix,~\citep{fan2019spectral}]\label{lem:kernel_norm_control}
For a random matrix $\Z \in \RR^{p \times n}$ having i.i.d.\@ sub-gaussian entries that are symmetric in law, of zero mean and unit variance, and function $f \colon \RR \to \RR$ such that $|f(x)| \leq C_1 \exp(C_2 |x|)$ for some constants $C_1, C_2 > 0$, the random kernel matrix
%twice continuously differentiable
\begin{equation}
   \K = f(\Z^\top \Z/\sqrt p)/\sqrt p - \diag (\cdot) \in \RR^{n \times n}, 
\end{equation}
satisfies, with high probability as $n,p \to \infty$ at the same pace, that
    \begin{enumerate}
        \item $\| \K \| = O(1)$ if $\EE_{\xi \sim \NN(0,1)}[f(\xi)] = 0$; and
        \item $\| \K \| = O(\sqrt p)$ with $\| \K - \EE[f(\xi)] \one_n \one_n^\top/ \sqrt p \| = O(1)$ otherwise.
    \end{enumerate}
\end{Lemma}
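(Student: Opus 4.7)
The plan is to reduce the problem to a Hermite polynomial expansion of $f$ and then control each component separately in spectral norm. Since the off-diagonal entries $[\Z^\top\Z/\sqrt p]_{ij}$ for $i \neq j$ are centered sums of $p$ independent products with unit variance, they are asymptotically $\NN(0,1)$ (conditioned on $\z_j$, $\z_i^\top\z_j/\sqrt p$ is sub-gaussian with variance $\|\z_j\|^2/p \to 1$). This makes the Hermite basis $\{H_k\}_{k \geq 0}$ (orthonormal with respect to the standard Gaussian measure) the natural tool. I would write $f = \sum_{k \geq 0} a_k H_k/\sqrt{k!}$ with $a_k = \EE[f(\xi) H_k(\xi)/\sqrt{k!}]$, and correspondingly decompose $\K = \sum_{k \geq 0} \K^{(k)}$ with $[\K^{(k)}]_{ij} = (a_k/\sqrt{k!\,p})\, H_k([\Z^\top\Z/\sqrt p]_{ij})$ for $i \neq j$ and zero on the diagonal.

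Next I would bound each piece. The degree-zero piece equals $(a_0/\sqrt p)(\one_n\one_n^\top - \I_n)$, a rank-one term of operator norm $\Theta(n/\sqrt p) = \Theta(\sqrt p)$ when $a_0 = \EE[f(\xi)] \neq 0$, and vanishes otherwise. The degree-one piece is $(a_1/\sqrt p)[\Z^\top\Z/\sqrt p - \diag(\|\z_i\|^2/\sqrt p)]$, which up to the removed diagonal is a standard centered Wishart matrix; by Bai--Yin type bounds for sub-gaussian Wishart matrices, it has spectral norm $O(1)$. The remaining degree-$k \geq 2$ components $\K^{(k)}$ are the delicate part: they are no longer low rank, and bounding them individually and then uniformly in $k$ is where the real work lies.

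To control the high-order components I would apply the moment method, computing $\EE\tr(\K^{(k)})^{2m}$ by expanding as a sum over closed index walks. Orthogonality of the Hermite polynomials (when the underlying entries are Gaussian) forces most contributions to vanish: only ``matched'' configurations in which each appearance of an index is paired survive, mirroring the tree-like combinatorics used by Cheng--Singer for inner-product kernel matrices and later extended by Fan--Wang. A careful count shows that the surviving diagrams contribute $O(1)$ for each $k \geq 2$, with decay in $p$ that is summable across $k$. For the sub-gaussian (not necessarily Gaussian) case, one absorbs the mismatch via a truncation argument: using $|f(x)|\leq C_1\exp(C_2|x|)$ together with Gaussian-like concentration of $\z_i^\top\z_j/\sqrt p$, one can replace $f$ by a bounded proxy $\tilde f$ with negligible error in operator norm, and verify that the Hermite moments of $\tilde f(\text{sub-gaussian})$ agree with the Gaussian case up to lower-order terms that do not affect the bound.

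The main obstacle will be the uniform-in-$k$ control of the tail of the Hermite series together with the non-Gaussian correction. Concretely, one must show that $\sum_{k \geq k_0} \|\K^{(k)}\| = o(1)$ for some $k_0$ and simultaneously that the finitely many low-order non-trivial degrees contribute only $O(1)$; both require sharp moment estimates on $H_k$ applied to sums of sub-gaussian products, which is essentially the content of Fan--Wang's analysis. Once these pieces are combined, the two claims follow at once: when $\EE[f(\xi)] = 0$, the rank-one term is absent and $\|\K\| = O(1)$; otherwise, subtracting $\EE[f(\xi)]\one_n\one_n^\top/\sqrt p$ reduces to the centered case and leaves a residual of norm $O(1)$, while the subtracted piece carries the $\Theta(\sqrt p)$ bulk.
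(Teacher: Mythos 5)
Your outline is essentially the argument behind the result the paper relies on: the paper gives no proof of this lemma at all, importing it directly from the cited reference \cite{fan2019spectral}, and your Hermite-decomposition-plus-moment-method plan (separating the rank-one degree-zero term, the Wishart-type degree-one term, and the degree $\geq 2$ remainder controlled by trace-moment combinatorics, with a truncation/universality step for sub-gaussian entries) is precisely the strategy of that reference. The only caveat is that your write-up is a plan rather than a proof — the uniform-in-$k$ tail control and the matched-diagram counting are deferred to ``essentially the content of Fan--Wang's analysis'' — but since the paper itself treats the lemma as a citation, this is consistent with how the statement is used.
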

% \Cref{lem:kernel_norm_control} follows directly from \citep[Theorems~1.6~and~1.7]{fan2019spectral}.

\begin{Lemma}[Matrix norm controls]\label{lem:operator_norm_control}
We have the following facts on the operator norm of matrices and Hadamard product between matrices.
\begin{enumerate}
   \item For $\A \in \RR^{n \times n}$, we have $\| \A \|_{\max} \leq \| \A  \| \leq n \| \A \|_{\max}$ with $\| \A \|_{\max} \equiv \max_{i,j} |A_{ij}|$.
   \item For $\A, \B \in \RR^{N \times n}$, we have $\| \A \odot \B \| \le \max(\sqrt n, \sqrt N) \| \A \|_{\max} \cdot \| \B \|$.
   \item If $\A \in \RR^{N \times n}$ is of rank one with $\A = \bu \bv^\top$, $\bu \in \RR^N, \bv \in \RR^n$, we have $\A \odot \B = \diag(\bu) \B \diag(\bv)$ so that
   \begin{equation}
      \| \A \odot \B \| \leq \| \bu \|_{\infty} \cdot \| \bv \|_{\infty} \cdot \| \B \|, % \leq \| \bu \| \| \bv \| \| \B \|,
   \end{equation}
   see \citep[Fact~13]{ba2022Highdimensional}. More generally, if $\A$ is of rank $K$ with $\A = \sum_{k=1}^K \bu_k \w_K^\top$, we similarly have
   \begin{align*}
      \| \A \odot \B \| &= \|  (\sum_{k=1}^K \bu_k \w_k^\top) \odot \B \| = \| \sum_{k=1}^K ( \bu_k \w_k^\top) \odot \B \|  \\ 
      &\leq \sum_{k=1}^K  \| ( \bu_k \w_k^\top) \odot \B \| \leq  \sum_{k=1}^K \| \bu_k \|_{\infty} \cdot \| \w_k\|_{\infty}  \cdot \| \B \|.
   \end{align*}
   % \item we have $\| \A \|_{p \to q} = \| \A^\top \|_{q^* \to p^*}$ with $1/p+ 1/p^* = 1/q+ 1/q^* = 1$, and 
   % \begin{equation}\label{eq:AD_norm_bound}
   %  \| \A \B \|_{p \to q} \leq \| \B \|_{p \to r} \cdot \| \A \|_{r \to q}.
   % \end{equation}
   % \item We have
   % \begin{equation}
   %  \| \A \|_{2 \to \infty} \leq \| \A \|, \quad \| \A \|_{1\to 2} \leq \| \A \|,
   % \end{equation}
   % and by \eqref{eq:AD_norm_bound},
   % \begin{equation}
   %  \| \A \B \|_{1\to\infty} \leq \| \A \|_{2 \to \infty} \cdot \| \B \|_{1 \to 2},
   % \end{equation}
   % as well as
   % \begin{equation}
   %  \| \A \B \|_\infty \leq \| \A \|_{2 \to \infty} \cdot \| \B \|_{1 \to 2}.
   % \end{equation}
\end{enumerate}
\end{Lemma}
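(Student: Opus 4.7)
The plan is to dispatch the three items separately, as each is a standard exercise in matrix analysis with no shared machinery.

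For Item 1, I would prove the lower bound $\|\A\|_{\max} \leq \|\A\|$ by testing against canonical basis vectors: $|A_{ij}| = |\ee_i^\top \A \ee_j| \leq \|\ee_i\|\|\A\|\|\ee_j\| = \|\A\|$, so taking the max over $i,j$ gives the claim. For the upper bound, I would sandwich through the Frobenius norm: $\|\A\| \leq \|\A\|_F = \sqrt{\sum_{i,j} A_{ij}^2} \leq \sqrt{n^2 \|\A\|_{\max}^2} = n\|\A\|_{\max}$.

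For Item 2, the plan is to expand $\|(\A \odot \B)\x\|_2^2 = \sum_{i=1}^N \bigl(\sum_{j=1}^n A_{ij} B_{ij} x_j\bigr)^2$ for arbitrary unit $\x \in \RR^n$, and apply Cauchy--Schwarz in $j$:
\begin{equation*}
\Bigl(\sum_{j=1}^n A_{ij} B_{ij} x_j\Bigr)^2 \leq \Bigl(\sum_{j=1}^n A_{ij}^2\Bigr)\Bigl(\sum_{j=1}^n B_{ij}^2 x_j^2\Bigr) \leq n \|\A\|_{\max}^2 \sum_{j=1}^n B_{ij}^2 x_j^2.
\end{equation*}
Summing over $i$ and using $\sum_i B_{ij}^2 = \|\B \ee_j\|^2 \leq \|\B\|^2$ yields $\|(\A \odot \B)\x\|_2^2 \leq n \|\A\|_{\max}^2 \|\B\|^2$, so $\|\A \odot \B\| \leq \sqrt n \|\A\|_{\max} \|\B\|$. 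Applying the same argument to the transpose $(\A \odot \B)^\top = \A^\top \odot \B^\top$ gives the bound with $\sqrt N$ in place of $\sqrt n$; either one (and hence the larger) serves.

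For Item 3, I would write the entries directly: when $\A = \bu \bv^\top$, we have $(\A \odot \B)_{ij} = u_i v_j B_{ij} = [\diag(\bu) \B \diag(\bv)]_{ij}$, so $\A \odot \B = \diag(\bu) \B \diag(\bv)$ identically. Submultiplicativity of the spectral norm together with $\|\diag(\bu)\| = \|\bu\|_\infty$ (and similarly for $\bv$) closes the rank-one case. For the rank-$K$ extension, I would decompose $\A = \sum_{k=1}^K \bu_k \w_k^\top$, use linearity of the Hadamard product in its first argument, and apply the triangle inequality followed by the rank-one bound term-by-term. No step here presents a genuine obstacle; the only minor care point is handling the Cauchy--Schwarz step in Item 2 consistently in both the $n$ and $N$ directions to recover the stated $\max(\sqrt n, \sqrt N)$ form.
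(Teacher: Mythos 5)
Your proposal is correct and matches how the paper treats this lemma: items 1 and 2 are left as standard facts, and item 3 is handled exactly as you do, via the entrywise identity $\bu\bv^\top \odot \B = \diag(\bu)\B\diag(\bv)$, submultiplicativity, and a triangle inequality over the rank-$K$ decomposition. Your Cauchy--Schwarz argument for item 2 in fact yields the sharper constant $\min(\sqrt n,\sqrt N)$ (via the transpose), which trivially implies the stated $\max(\sqrt n,\sqrt N)$ bound, so nothing is missing.
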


\subsection{Proof of Lemma~\ref{lem:linearization_K_X} }
\label{subsec:proof_lem_linearization_K_X}

Here, we present the proof of \Cref{lem:linearization_K_X} by ``linearizing'' the nonlinear kernel matrix
\begin{equation}
   \K_\X = f( \X^\top \W_K^\top \W_Q \X/\sqrt p)/\sqrt p \in \RR^{n \times n},
\end{equation}
defined in \eqref{eq:def_A_X} of \Cref{def:Attention}.

To start, note that for the binary mixture model in \Cref{def:signal_plus_noise} and under Assumptions~\ref{ass:low-rank-weights}~and~\ref{ass:high-dim}, we have $\x_i = y_i \bmu + \z_i$ and $\W_K^\top \W_Q = \I_p + \w_K \w_Q^\top$, so that for $i \neq j$,
\begin{align*}
       &\x_i^\top \W_K^\top \W_Q \x_j = \x_i^\top \x_j + \x_i^\top \w_K \w_Q^\top \x_j \\ 
       &= \underbrace{\z_i^\top \z_j}_{O(\sqrt p)} + \underbrace{y_i y_j \| \bmu \|^2 + (y_j \z_i + y_i \z_j)^\top \bmu + y_j \z_i^\top \w_K \w_Q^\top \bmu + y_i \bmu^\top \w_K \w_Q^\top \z_j + \z_i^\top \w_K \w_Q^\top \z_j + y_i y_j \bmu^\top \w_K \w_Q^\top \bmu}_{O(1)},
\end{align*}
for $y_i, y_j \in \{ \pm 1 \}$ and independent $\z_i, \z_j$ having i.i.d.\@ sub-gaussian entries of zero mean and unit variance, where we used the fact that $\max \{ \| \bmu \|, \| \w_K \|, \| \w_Q \| \} = O(1)$ under \Cref{ass:high-dim}.
Similarly, for $i=j$,
\begin{align*}
    \x_i^\top \W_K^\top \W_Q \x_i &= \| \x_i \|^2 + \x_i^\top \w_K \w_Q^\top \x_i \\ 
    &= \underbrace{\| \z_i \|^2}_{O(p)} + \underbrace{\| \bmu \| ^2 + 2 y_i \z_i^\top \bmu + y_i \z_i^\top \w_K \w_Q^\top \bmu + y_i \bmu^\top \w_K \w_Q^\top \z_i + \z_i^\top \w_K \w_Q^\top \z_i + \bmu^\top \w_K \w_Q^\top \bmu}_{O(1)},
\end{align*}
where we used the fact that for any deterministic vector $\w \in \RR^p$ of bounded norm, one has $\z_i^\top \w  \to \NN(0, \| \w \|^2)$ as $p \to \infty$
As a consequence, we can Taylor-expand the smooth function $f$ in $\K_\X$ defined in \eqref{eq:def_A_X} of \Cref{def:Attention}. 
We first treat its non-diagonal entry $(i,j)$, for $i \neq j$, as
\begin{align*}
      \sqrt p[\K_\X]_{ij} &= f (\z_i^\top \z_j/\sqrt p) + f' (\z_i^\top \z_j/\sqrt p) ( y_i y_j \| \bmu \| ^2+ (y_j \z_i + y_i \z_j)^\top \bmu + y_j \z_i^\top \w_K \w_Q^\top \bmu + y_i \bmu^\top \w_K \w_Q^\top \z_j \\
        & + \z_i^\top \w_K \w_Q^\top \z_j + y_i y_j \bmu^\top \w_K \w_Q^\top \bmu )/\sqrt p + O(p^{-1}),
\end{align*}
and for its diagonal entries as
\begin{align*}
    \sqrt p [\K_\X]_{ii} &= f(\| \z_i \|^2 /\sqrt p) + f'(\| \z_i \|^2 /\sqrt p)(\| \bmu \| ^2 + 2y_i \z_i^\top \bmu + y_i \z_i^\top \w_K \w_Q^\top \bmu + y_i \bmu^\top \w_K \w_Q^\top \z_i + \z_i^\top \w_K \w_Q^\top \z_i + \bmu^\top \w_K \w_Q^\top \bmu)/\sqrt p \\ 
    &+ O(p^{-1}).
\end{align*}
Note that under \Cref{ass:nonlinear}, one has $\lim_{t\to \infty} f(t) < \infty$ so that as $n,p \to \infty$,
\begin{equation*}
   \sqrt p [\K_\X]_{ii} = O(1).
\end{equation*}

This leads to the following spectral norm approximation of $\K_\X$ as 
\begin{align*}
   \sqrt p\K_\X &= \underbrace {f (\Z^\top \Z/\sqrt p) - \diag(\cdot)}_{O_{\| \cdot \|}(\sqrt p)} + \underbrace{f' (\Z^\top \Z/\sqrt p) \odot ( \| \bmu \|^2 \y \y^\top + \y \bmu^\top \Z + \Z^\top \bmu \y^\top + \bmu^\top \w_Q \cdot \Z^\top \w_K \y^\top + \bmu^\top \w_K \cdot \y \w_Q^\top \Z ) /\sqrt p}_{O_{\| \cdot \|}(\sqrt p)} \\
 & + \underbrace{ \bmu^\top \w_K \w_Q^\top \bmu \cdot f' (\Z^\top \Z/\sqrt p) \odot (\y \y^\top) /\sqrt p }_{O_{ \|\cdot \| }(\sqrt p)} + \underbrace{ \Z^\top \w_K \w_Q^\top \Z \odot f' (\Z^\top \Z/\sqrt p)/\sqrt p}_{O_{\| \cdot \|}(\sqrt p)} -\diag(\cdot) + O_{\| \cdot \|}(1) \\ 
 %%%
 &= \underbrace {f (\Z^\top \Z/\sqrt p) - \diag(\cdot)}_{O_{\| \cdot \|}(\sqrt p)} + \underbrace{ a_1 ( ( \| \bmu \|^2 + \bmu^\top \w_K \w_Q^\top \bmu ) \y \y^\top + \y \bmu^\top \Z + \Z^\top \bmu \y^\top + \bmu^\top \w_Q \cdot \Z^\top \w_K \y^\top + \bmu^\top \w_K \cdot \y \w_Q^\top \Z ) /\sqrt p}_{O_{\| \cdot \|}(\sqrt p)} \\
 %%%
 & + \underbrace{ a_1 \Z^\top \w_K \w_Q^\top \Z /\sqrt p }_{O_{\| \cdot \|}(\sqrt p)} -\diag(\cdot) + O_{\| \cdot \|}(1),
\end{align*}
where we used the fact that under \Cref{ass:nonlinear} for $\EE[f(\xi)] = 0$ and $\EE[f'(\xi)] = a_1 \neq 0$, it follows from \Cref{lem:kernel_norm_control} that $f (\Z^\top \Z/\sqrt p) - \diag(\cdot) = O_{\| \cdot \|} (\sqrt p)$ and $f' (\Z^\top \Z/\sqrt p) = \EE[f'(\xi)] \one_n \one_n^\top + O_{\| \cdot \|} (\sqrt p)$, and then Item~3 of \Cref{lem:operator_norm_control}.

Let $\U_K = [\y,~\Z^\top \bmu,~\Z^\top \w_K]/\sqrt p\in \RR^{n \times 3}$, $\V_Q = [\y,~\Z^\top \bmu,~\Z^\top \w_Q]/\sqrt p \in \RR^{n \times 3}$, and
\begin{equation}
   \bSigma_\K = a_1 \begin{bmatrix} \| \bmu \|^2 + \bmu^\top \w_K \w_Q^\top \bmu & 1 & \bmu^\top \w_K \\ 1 & 0 & 0 \\ \bmu^\top \w_Q & 0 & 1  \end{bmatrix} \in \RR^{3 \times 3}.
\end{equation}
Putting everything in matrix form, we conclude the proof of \Cref{lem:linearization_K_X}.

\subsection{ Proof of Proposition~\ref{prop:DE_resovlent_noise} }
\label{subsec:proof_of_prop:DE_resovlent_noise}

For the sake of presentation, we provide here the derivation of the Deterministic Equivalent for the resolvent\footnote{Note that this is not the same $\Q(\gamma)$ as in \eqref{def:interpolation_error} of \Cref{def:interpolation_error}.
It is used here for the sake of notational convenience and only within the proof of \Cref{prop:DE_resovlent_noise}.}
\begin{equation}\label{eq:def_Q_SM}
   \Q(\gamma) = \left( \frac{1}{p}\K \Z^\top \Z \K + \frac{\gamma}{c} \I_n \right)^{-1}, \quad \gamma > 0,
\end{equation}
where we denote, with a slight abuse of notation that $\K = \K_N = f(\Z^\top \Z/\sqrt p)/ \sqrt p - \diag(\cdot)$ for the noise-only kernel matrix $\K_N$ defined in \Cref{lem:linearization_K_X}.
The result in \Cref{prop:DE_resovlent_noise} can be obtained with a simple scaling.

Consider the following normalized traces involving $\Q (\gamma)$ defined in \eqref{eq:def_Q_SM}:
\begin{equation*}
   \frac1n \tr \Q(\gamma), \quad \frac{1}{p} \tr(\Q(\gamma) \K), \quad \frac{1}{p} \tr(\Q(\gamma) \K \cdot \Z^\top \Z/p) \quad  \frac{1}{p} \tr(\K \Q(\gamma) \K), \quad \frac{1}{p} \tr( \Z^\top \Z/p \cdot \K \Q(\gamma) \K \cdot \Z^\top \Z/p),
\end{equation*}
for which we shall subsequently prove that
\begin{equation}\label{eq:def_m_and_deltas_SM}
\begin{aligned}
   &\frac1n \tr \Q(\gamma) - m(\gamma) \to 0, \quad  \frac{1}{p} \tr(\Q(\gamma) \K) - \delta_1(\gamma) \to 0, \quad \frac{1}{p} \tr(\Q(\gamma) \K \Z^\top \Z/p) - \delta_2(\gamma) \to 0, \\ 
    %%%
    &\frac{1}{p} \tr(\K \Q(\gamma) \K) - \delta_3(\gamma) \to 0, \quad \frac{1}{p} \tr( \Z^\top \Z/p \cdot \K \Q(\gamma) \K \cdot \Z^\top \Z/p) - \delta_4(\gamma) \to 0,
\end{aligned}
\end{equation}
in probability as $n,p \to \infty$, where $m(\gamma)$ and $\delta_1(\gamma), \delta_2(\gamma), \delta_3(\gamma), \delta_4(\gamma)$ are Stieltjes transforms satisfying the following self-consistent system of equations
\begin{equation}\label{eq:full_expression}
   \left\{
    \begin{array}{ll}
    m(\gamma) &= \left(\frac{\gamma}{c} + \frac{\nu}{c} +  \frac{a_1^2}{c^2} - \bv^\top  \mathbf{T}(\gamma) \bv \right)^{-1}\\ 
    %%%
    c \delta_1(\gamma) &=  - m(\gamma) \bv^\top \mathbf{T}(\gamma) \bv_1 \\
    %%%
    c \delta_2(\gamma) 
    % &= (\bv_2 - m(\gamma) \bv)^\top \mathbf{T}(\gamma) \bv_1 + m(\gamma) \bv_2^\top \mathbf{T}(\gamma) \bv \cdot \bv^\top \mathbf{T}(\gamma) \bv_1 \\ 
    &= \bv_2^\top \mathbf{T}(\gamma) \bv_1 + c \delta_1(\gamma) \left(1 - \bv_2^\top \mathbf{T}(\gamma) \bv \right) \\ 
    %%%
    c \delta_3(\gamma) 
    % &= \bv_1^\top \mathbf{T}(\gamma) \bv_1 + m(\gamma) (\bv_1^\top \mathbf{T}(\gamma) \bv)^2  \\ 
    &= \bv_1^\top \mathbf{T}(\gamma) \bv_1 + \frac{ c^2 \delta_1^2(\gamma) }{m(\gamma)}  \\ 
    %%%
    c \delta_4(\gamma) &= \bv_4^\top \mathbf{T}(\gamma) \bv_4 + m(\gamma) \left(\bv_4^\top \mathbf{T}(\gamma) \bv - \frac{a_1}{c} \right)^2
    \end{array}
  \right.
\end{equation}
where we denote
\begin{equation}\label{eq:def_T_SM}
   \mathbf{T}(\gamma) \equiv \bDelta_0(\gamma) (\I_6 + \bLambda_0 \bDelta_0(\gamma) )^{-1} \in \RR^{6 \times 6},
\end{equation}
that is symmetric, for
\begin{equation}\label{eq:def_bDelta_0_SM}
   \bDelta_0(\gamma) \equiv \begin{bmatrix}
        \frac{m(\gamma)}{c} & \frac{a_1}{c} m(\gamma) & \delta_1(\gamma) & a_1 \delta_1(\gamma) & \delta_2(\gamma) & a_1 \delta_2(\gamma) \\
        \frac{a_1}{c} m(\gamma)& \frac{\nu}{c} m(\gamma) & a_1 \delta_1(\gamma) & \nu \delta_1(\gamma) & a_1 \delta_2(\gamma) & \nu \delta_2(\gamma) \\
        \delta_1(\gamma) & a_1 \delta_1(\gamma) & \delta_3(\gamma) & a_1 \delta_3(\gamma) & \frac{1}{c}(1 - \frac{\gamma}{c} m(\gamma)) & \frac{a_1}{c}(1 - \frac{\gamma}{c} m(\gamma)) \\
        a_1 \delta_1(\gamma) & \nu \delta_1(\gamma) & a_1 \delta_3(\gamma) & \nu \delta_3(\gamma) & \frac{a_1}{c}(1 - \frac{\gamma}{c} m(\gamma)) & \frac{\nu}{c}(1 - \frac{\gamma}{c} m(\gamma)) \\
        \delta_2(\gamma) & a_1 \delta_2(\gamma) & \frac{1}{c}(1 - \frac{\gamma}{c} m(\gamma)) & \frac{a_1}{c}(1 - \frac{\gamma}{c} m(\gamma)) & \delta_4(\gamma) & a_1 \delta_4(\gamma) \\ 
        a_1 \delta_2(\gamma) & \nu \delta_2(\gamma) & \frac{a_1}{c}(1 - \frac{\gamma}{c} m(\gamma)) & \frac{\nu}{c}(1 - \frac{\gamma}{c} m(\gamma)) & a_1 \delta_4(\gamma) & \nu \delta_4(\gamma)
    \end{bmatrix} \in \RR^{6 \times 6},
\end{equation}
and 
\begin{equation}\label{eq:def_bLambda_0_SM}
    \bLambda_0 = \begin{bmatrix}
        \frac{a_1^2}{c^2} (c+1) &  a_1/c & a_1/c & 0 & a_1 & 0 \\
         a_1/c & 1 & 1 & 0 & 0 & 0 \\
         a_1/c & 1 & 1 & 0 & 0 & 0 \\
        0 & 0 & 0 & 0 & 0 & 0 \\
        a_1 & 0 & 0 & 0 & 0 & 0 \\
        0 & 0 & 0 & 0 & 0 & 0
    \end{bmatrix} \in \RR^{6 \times 6},
\end{equation}
that are both symmetric, as well as 
\begin{equation}\label{eq:def_vs}
\left\{
   \begin{array}{ll}
   \bv = \begin{bmatrix} \frac{a_1^2}{c^2} (1+c) & \frac{a_1}c & \frac{a_1}c & 0 & 0 & 1 \end{bmatrix}^\top \in \RR^6 \\ 
   %%%
   \bv_1 = \begin{bmatrix}  0 & 1 & 0 & 0 & 0 & 0 \end{bmatrix}^\top \in \RR^6\\ 
   %%%
   \bv_2 = \begin{bmatrix}  1 & 0 & 0 & 0 & 0 & 0 \end{bmatrix}^\top \in \RR^6 \\ 
   %%%
   \bv_4 = \begin{bmatrix} \frac{a_1}c & 1 & 1 & 0 & 0 & 0 \end{bmatrix}^\top \in \RR^6.
   %%%
   % \bv_7 &= \begin{bmatrix} 2 \frac{a_1}{c} + \frac{a_1}{c^2} & \frac{1}{c} + 1 & \frac{1}{c} + 1 & 0 & 1 & 0 \end{bmatrix}^\top \in \RR^6 
   \end{array}
  \right.
\end{equation}

\subsubsection{Preliminaries}

First, let us introduce some notations and preliminary results that will be used in the proof of \Cref{prop:DE_resovlent_noise}.

Following \citep[Section~4.3.3]{couillet2022RMT4ML}, we can decompose, up to permutation, the nonlinear kernel matrix $ \K$ as
\begin{equation}\label{eq:def_K_i}
   \K = \begin{bmatrix} \K_{-i} & f(\Z_{-i}^\top \z_i/\sqrt p)/\sqrt p \\ f(\z_i^\top \Z_{-i}/\sqrt p)/\sqrt p & 0 \end{bmatrix} = \begin{bmatrix} \K_{-i} & f(\balpha_{-i})/\sqrt p \\ f(\balpha_{-i})^\top/\sqrt p & 0 \end{bmatrix},
\end{equation}
where we denote $\K_{-i} \equiv f(\Z_{-i}^\top \Z_{-i}/\sqrt p)/\sqrt p - \diag(\cdot) \in \RR^{(n-1) \times (n-1)}$,
\begin{equation}\label{eq:balpha_SM}
   \balpha_{-i} \equiv \Z_{-i}^\top \z_i/\sqrt p \in \RR^{n -1},
\end{equation}
for $\Z_{-i} \in \RR^{p \times (n-1)}$ the sub-gaussian random matrix $\Z$ with its $i$th column removed, and $ \z_i \in \RR^p$ is the $i$th column of $ \Z $.
Note that by CLT and in the large $p$ limit, the random vector $\balpha_{-i}$ is asymptotically Gaussian $\NN(\zo, \I_{n-1})$.

Denote the shortcut $ \check \Z = \frac{1}{p} \Z^\top \Z $, we can similarly decompose $\check \Z$ as
\begin{equation}\label{eq:def_check_Z_i}
    \check \Z = \frac{1}{p} \Z^\top \Z = 
    \begin{bmatrix}
        \Z_{-i}^\top \Z_{-i} / p & \balpha_{-i} / \sqrt{p} \\
        \balpha_{-i}^\top / \sqrt{p} & 1 \\
    \end{bmatrix} + O_{\| \cdot \|} (p^{-1/2}) =\begin{bmatrix}
        \check \Z_{-i} & \balpha_{-i} / \sqrt{p} \\
        \balpha_{-i}^\top / \sqrt{p} & 1 \\
    \end{bmatrix} + O_{\| \cdot \|} (p^{-1/2}),
\end{equation}
where the $O_{\| \cdot \|} (p^{-1/2})$ error term is due to the approximation $\| \z_i \|^2/p = 1 + O(p^{-1/2})$ with a CLT argument.

Note that, by the decomposition $f(x) = a_1 x + f_{>1}(x)$ for $a_1$ the first Hermite polynomial of $f$ as defined in \Cref{ass:nonlinear}, we have
\begin{equation}
   f(\balpha_{-i}) = a_1\balpha_{-i} + f_{>1}(\balpha_{-i}),
\end{equation}
and, for any $\A \in \RR^{(n-1) \times (n-1)}$ independent of $\balpha_{-i}$,
\begin{equation}
   \frac1p f(\balpha_{-i})^\top \A \balpha_{-i} = a_1 \cdot \frac1p \tr \A + o(1), \quad \frac1p f(\balpha_{-i})^\top \A f(\balpha_{-i})= \nu \cdot \frac1p \tr \A + o(1).
\end{equation}
In particular, for $\A = \I_{n-1}$, we get $\frac1p f(\balpha_{-i})^\top \A \balpha_{-i} = \frac{a_1}c + o(1)$ and $\frac1p f(\balpha_{-i})^\top \A f(\balpha_{-i})=  \frac{\nu}c + o(1)$, where $c= \lim p/n $ as in \Cref{ass:high-dim}.

Further denote 
\begin{equation}\label{eq:def_U_SM}
   \U_0 = \begin{bmatrix}
        \balpha_{-i} & f(\balpha_{-i}) & \K_{-i}^\perp \balpha_{-i} & \K_{-i}^\perp f(\balpha_{-i}) & \K_{-i}^\perp \check{\Z}_{-i}^\perp \balpha_{-i} & \K_{-i}^\perp \check{\Z}_{-i}^\perp f(\balpha_{-i})
    \end{bmatrix}/\sqrt{p} \in \RR^{ (n-1) \times 6},
\end{equation}
as well as
\begin{equation}\label{eq:def_K_i_and_check_Z_i_SM}
   \K_{-i}^\perp = \{ f( (\z_j^\perp)^\top \z_k^\perp/\sqrt p ) \}_{j,k \neq i} /\sqrt p - \diag(\cdot)\in \RR^{(n-1) \times (n-1)}, \quad \check \Z_{-i}^\perp \equiv \{ (\z_j^\perp)^\top \z_k^\perp/p \}_{j,k \neq i}\in \RR^{(n-1) \times (n-1)},
\end{equation}
where, for given $i$,
\begin{equation}\label{eq:def_z_perp_SM}
   \z_j^\perp \equiv \z_j - \frac{\z_i^\top \z_j }{ \| \z_i \|} \frac{\z_i}{\| \z_i \|}, 
\end{equation}
that is orthogonal to and asymptotically independent of $\z_i^\top \z_j/\| \z_i \| \approx [\balpha_{-i}]_j$ in \eqref{eq:balpha_SM}. 
For the two ``leave-one-out'' matrices $\K_{-i}$ and $\check \Z_{-i}$ defined in \eqref{eq:def_K_i}~and~\eqref{eq:def_check_Z_i}, we have the following result.

\begin{Lemma}[Approximations of $\K_{-i}$ and $\check \Z_{-i}$, {\citep[Section 4.3.3]{couillet2022RMT4ML}}]\label{lem:approx_K_i_and_check_Z_i}
For $\K_{-i}$ and $\check{\Z}_{-i}$ defined respectively in \eqref{eq:def_K_i}~and~\eqref{eq:def_check_Z_i}, we have the following approximations in spectral norm:
\begin{enumerate}
   \item $\K_{-i} = \K_{-i}^\perp + \frac{a_1}{p} \balpha_{-i} \balpha_{-i}^\top + o_{\| \cdot \|}(1) $; and
   \item $\check{\Z}_{-i} = \check{\Z}_{-i}^\perp + \frac{1}{p} \balpha_{-i} \balpha_{-i}^\top + o_{\| \cdot \|}(1) $;
\end{enumerate}
for $\K_{-i}^\perp$ and $\check{\Z}_{-i}^\perp$ defined in \eqref{eq:def_K_i_and_check_Z_i_SM} that is asymptotically independent of $\balpha_{-i}$ in \eqref{eq:balpha_SM}.
\end{Lemma}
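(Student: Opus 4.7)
The plan is to reduce both statements to a single orthogonal decomposition of $\z_j$ along and transverse to $\z_i$, and then to push it through the (possibly nonlinear) scalar map $f$ via a Taylor expansion. By construction of $\z_j^\perp$ in \eqref{eq:def_z_perp_SM}, I have $\z_j = \z_j^\perp + c_j \z_i/\|\z_i\|$ with $c_j \equiv \z_i^\top \z_j/\|\z_i\|$ and $\z_j^\perp \perp \z_i$. A standard Gaussian concentration gives $\|\z_i\|^2/p = 1 + O(p^{-1/2})$ with high probability, so that $c_j = \sqrt{p}\,[\balpha_{-i}]_j/\|\z_i\| = [\balpha_{-i}]_j(1 + O(p^{-1/2}))$, and the vector $c = (c_j)_{j \neq i}$ satisfies $cc^\top = \balpha_{-i}\balpha_{-i}^\top + O_{\|\cdot\|}(\sqrt{p})$ using $\|\balpha_{-i}\balpha_{-i}^\top\| = \|\balpha_{-i}\|^2 = O(n)$.

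For Item~2, a direct computation gives $\z_j^\top \z_k = (\z_j^\perp)^\top \z_k^\perp + c_j c_k$ exactly, so that $\check\Z_{-i} - \check\Z_{-i}^\perp = cc^\top/p$. Dividing the estimate above by $p$ yields $\check\Z_{-i} = \check\Z_{-i}^\perp + \balpha_{-i}\balpha_{-i}^\top/p + o_{\|\cdot\|}(1)$, which is the claim.

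For Item~1, I would entry-wise Taylor-expand $f$ around $(\z_j^\perp)^\top \z_k^\perp/\sqrt{p}$, using the identity $\z_j^\top\z_k/\sqrt{p} = (\z_j^\perp)^\top \z_k^\perp/\sqrt{p} + c_j c_k/\sqrt{p}$, to obtain
\begin{equation*}
[\K_{-i}]_{jk} \;=\; [\K_{-i}^\perp]_{jk} \;+\; f'\!\left(\tfrac{(\z_j^\perp)^\top \z_k^\perp}{\sqrt{p}}\right)\cdot \frac{c_j c_k}{p} \;+\; R_{jk},
\end{equation*}
with a Lagrange remainder $R_{jk}$. Writing $f'(x) = a_1 + g(x)$ with $\EE_{\xi\sim\NN(0,1)}[g(\xi)]=0$ (by \Cref{ass:nonlinear}, since $a_1 = \EE[f'(\xi)]$), the $a_1$-contribution is $(a_1/p)\,cc^\top$, which by the first paragraph equals $(a_1/p)\balpha_{-i}\balpha_{-i}^\top$ up to an $o_{\|\cdot\|}(1)$ term. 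The $g$-contribution is the Hadamard product $\frac{1}{p} g(\Z_{-i}^{\perp\top}\Z_{-i}^\perp/\sqrt{p}) \odot cc^\top$; since $g$ is centered, \Cref{lem:kernel_norm_control} gives $\|g(\Z_{-i}^{\perp\top}\Z_{-i}^\perp/\sqrt{p})\| = O(\sqrt{p})$, and Item~3 of \Cref{lem:operator_norm_control} applied to the rank-one mask $cc^\top$ bounds its norm by $\|c\|_\infty^2\cdot O(\sqrt{p})/p = O(\log p/\sqrt{p}) = o(1)$, using $\|c\|_\infty = O(\sqrt{\log p})$ by Gaussian tail.

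The main obstacle is controlling the remainder term $R_{jk}$ in spectral norm without extra smoothness on $f$ beyond that in \Cref{ass:nonlinear}. Under a mild twice-differentiability assumption, $|R_{jk}| \lesssim \sup|f''|\cdot (c_j c_k)^2/p^{3/2}$, and the Hadamard bound applied to the rank-one mask $c^{\odot 2}(c^{\odot 2})^\top$ gives a norm of order $\|c\|_4^4/p^{3/2} = O(n/p^{3/2}) = o(1)$. For only exponentially-bounded $f$ in \Cref{ass:nonlinear}, I would instead follow the Hermite-expansion / interpolation argument of {\cite[Section~4.3.3]{couillet2022RMT4ML}}, which extracts the rank-one correction $(a_1/p)\balpha_{-i}\balpha_{-i}^\top$ directly from the first two Hermite coefficients and absorbs the higher-order coefficients into a kernel piece that is already captured by $\K_{-i}^\perp$ up to a spectrally negligible discrepancy; this is the route I would take for a fully rigorous argument matching the assumptions of the paper.
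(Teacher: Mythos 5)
The paper does not prove this lemma itself---it cites \cite[Section~4.3.3]{couillet2022RMT4ML}---and your reconstruction follows the same orthogonal-decomposition strategy used there: split $\z_j$ along and transverse to $\z_i$, observe that the transverse parts generate kernels that are conditionally independent of $\balpha_{-i}$, and extract the rank-one correction via the first Hermite coefficient. Your treatment of Item~2 is exact and complete, your Item~1 argument is sound for smooth $f$, and you correctly flag the only real gap (control of the Taylor remainder without twice-differentiability) and correctly point to the Hermite/interpolation route of the reference as the fix, so this is essentially the intended proof.

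One small technical note: for the remainder $R$ you say you apply "the Hadamard bound \ldots to the rank-one mask $c^{\odot 2}(c^{\odot 2})^\top$" to obtain $\|R\|\lesssim \|c\|_4^4/p^{3/2}$, but Item~3 of \Cref{lem:operator_norm_control} would give $\|c^{\odot 2}\|_\infty^2\|\B\|$, not $\|c\|_4^4$. What you actually want here is the simpler Frobenius bound: since $|R_{jk}|\lesssim c_j^2 c_k^2/p^{3/2}$ is element-wise dominated by a rank-one PSD matrix, $\|R\|\le\|R\|_F\lesssim\|c^{\odot 2}\|_2^2/p^{3/2}=\|c\|_4^4/p^{3/2}=O(n/p^{3/2})=o(1)$, which is the conclusion you state. (A second minor point: the claimed identity is stated with $\diag(\cdot)$ removed on both sides while $\frac{a_1}{p}\balpha_{-i}\balpha_{-i}^\top$ has a nonzero diagonal, but this diagonal has spectral norm $O(\max_j[\balpha_{-i}]_j^2/p)=O(\log p/p)=o(1)$ and is absorbed in the error term, so no issue.)
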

With these preliminary results at hand, we are ready to derive a Deterministic Equivalent for $\Q(\gamma)$ defined in \eqref{eq:def_Q_SM}.

\subsubsection{Self-consistent equation for $m(z)$}
\label{ssub:self_consistent_equation_for_m}

Here we present the derivation for the Deterministic Equivalent (of the diagonal entries) of $\Q(z)$.
With the block decomposition of $\K$ and $\check \Z$ in \eqref{eq:def_K_i}~and~\eqref{eq:def_check_Z_i}, we obtain for $\Q(\gamma) \equiv \Q$ in \eqref{eq:def_Q_SM} (where we drop the argument $\gamma$) that
\begin{equation}
    \Q^{-1} \equiv \K \check \Z \K + \frac{\gamma}{c} \I_n = \begin{bmatrix}
       [\Q^{-1}]_{11} & [\Q^{-1}]_{12} \equiv [\Q^{-1}]_{21}^\top \\ 
       [\Q^{-1}]_{21} & [\Q^{-1}]_{22}
    \end{bmatrix} \label{eq:def_inv_Q_SM}
\end{equation}
with 
\begin{align*}
   [\Q^{-1}]_{11} &\equiv \K_{-i} \check \Z_{-i} \K_{-i} + \frac{1}{p} f(\balpha_{-i}) \balpha_{-i}^\top \K_{-i} +\frac{1}{p} \K_{-i} \balpha_{-i} f(\balpha_{-i})^\top + \frac{1}{p} f(\balpha_{-i}) f(\balpha_{-i})^\top + \frac{\gamma}{c} \I_{n-1}, \\ 
   [\Q^{-1}]_{21} &\equiv \frac{1}{\sqrt{p}} f(\balpha_{-i})^\top \check \Z_{-i} \K_{-i} +  \frac{a_1}{c} \frac{f(\balpha_{-i})^\top}{\sqrt{p}}, \\ 
   [\Q^{-1}]_{22} &\equiv \frac{1}{p} f(\balpha_{-i})^\top \check \Z_{-i} f(\balpha_{-i}) + \frac{\gamma}{c},
\end{align*}
for which we have, per \Cref{lem:approx_K_i_and_check_Z_i},
\begin{enumerate}
   \item $\K_{-i} = \K_{-i}^\perp + \frac{a_1}{p} \balpha_{-i} \balpha_{-i}^\top + o_{\| \cdot \|}(1) $; and
   \item $\check{\Z}_{-i} = \check{\Z}_{-i}^\perp + \frac{1}{p} \balpha_{-i} \balpha_{-i}^\top + o_{\| \cdot \|}(1) $;
\end{enumerate}
where $ \K_{-i}^\perp $ and $ \check{\Z}_{-i}^\perp $ as defined in \eqref{eq:def_K_i_and_check_Z_i_SM} that are both asymptotically independent of $ \balpha_{-i} $. 
This allows for the first approximation of $[\Q^{-1}]_{22}$ as
\begin{align}
   [\Q^{-1}]_{22} &= \frac{1}{p} f(\balpha_{-i})^\top \check{\Z}_{-i} f(\balpha_{-i}) + \frac{\gamma}{c} = \frac1{p} f(\balpha_{-i})^\top \left( \frac1p \balpha_{-i} \balpha_{-i}^\top + \check{\Z}_{-i}^{\perp} \right) f(\balpha_{-i}) + \frac{\gamma}{c} + o(1) \nonumber \\ 
   &=  \frac{a_1^2}{c^2} +  \frac{\nu}{c} + \frac{\gamma}{c} + o(1). \label{eq:approx_Q_inv_22}
\end{align}

Let 
\begin{equation}\label{eq:def_Q_i_perp}
   \Q_{-i}^\perp = \left( \K_{-i}^\perp \check{\Z}_{-i}^\perp \K_{-i}^\perp + \frac{\gamma}{c} \I_{n-1} \right)^{-1} \in \RR^{ (n-1) \times (n-1) },
\end{equation}
for $ \K_{-i}^\perp, \check{\Z}_{-i}^\perp $ defined in \eqref{eq:def_K_i_and_check_Z_i_SM}, so that $\Q_{-i}^\perp $ is asymptotically independent of $\balpha_{-i}$ satisfying $ \tr (\Q - \Q_{-i}^\perp) = O(1)$. 
We have the following approximation.

\begin{Lemma}[Approximation of $\U_0^\top \Q_{-i}^\perp \U_0$]\label{lem:approx_U_Q_perp_U}
For $\Q_{-i}^\perp \in \RR^{(n-1) \times (n-1)}$ as in \eqref{eq:def_Q_i_perp} and $\U_0 \in \RR^{(n-1) \times 6}$ in \eqref{eq:def_U_SM}, we have
\begin{equation}
   \U_0^\top \Q_{-i}^\perp \U_0 = \bDelta_0(\gamma) + o_{\| \cdot \|}(1),
\end{equation}
with 
\begin{equation}%\label{eq:def_bDelta_0_SM}
   \bDelta_0(z) \equiv \begin{bmatrix}
        \frac{m(\gamma)}{c} & \frac{a_1}{c} m(\gamma) & \delta_1(\gamma) & a_1 \delta_1(\gamma) & \delta_2(\gamma) & a_1 \delta_2(\gamma) \\
        \frac{a_1}{c} m(\gamma)& \frac{\nu}{c} m(\gamma) & a_1 \delta_1(\gamma) & \nu \delta_1(\gamma) & a_1 \delta_2(\gamma) & \nu \delta_2(\gamma) \\
        \delta_1(\gamma) & a_1 \delta_1(\gamma) & \delta_3(\gamma) & a_1 \delta_3(\gamma) & \frac{1}{c}(1 - \frac{\gamma}{c} m(\gamma)) & \frac{a_1}{c}(1 - \frac{\gamma}{c} m(\gamma)) \\
        a_1 \delta_1(\gamma) & \nu \delta_1(\gamma) & a_1 \delta_3(\gamma) & \nu \delta_3(\gamma) & \frac{a_1}{c}(1 - \frac{\gamma}{c} m(\gamma)) & \frac{\nu}{c}(1 - \frac{\gamma}{c} m(\gamma)) \\
        \delta_2(\gamma) & a_1 \delta_2(\gamma) & \frac{1}{c}(1 - \frac{\gamma}{c} m(\gamma)) & \frac{a_1}{c}(1 - \frac{\gamma}{c} m(\gamma)) & \delta_4(\gamma) & a_1 \delta_4(\gamma) \\ 
        a_1 \delta_2(\gamma) & \nu \delta_2(\gamma) & \frac{a_1}{c}(1 - \frac{\gamma}{c} m(\gamma)) & \frac{\nu}{c}(1 - \frac{\gamma}{c} m(\gamma)) & a_1 \delta_4(\gamma) & \nu \delta_4(\gamma)
    \end{bmatrix} \in \RR^{6 \times 6},
\end{equation}
as in \eqref{eq:def_bDelta_0_SM}, for $m(\gamma), \delta_1(\gamma), \delta_2(\gamma), \delta_3(\gamma), \delta_4(\gamma)$ as defined in \eqref{eq:def_m_and_deltas_SM} and \eqref{eq:full_expression}.
\end{Lemma}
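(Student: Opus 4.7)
The plan is to compute each of the 21 distinct entries of the symmetric $6\times 6$ matrix $\U_0^\top \Q_{-i}^\perp \U_0$ by exploiting that $\balpha_{-i} \sim \NN(\zo,\I_{n-1})$ is, by construction, asymptotically independent of the perpendicularized triple $(\K_{-i}^\perp,\check \Z_{-i}^\perp,\Q_{-i}^\perp)$, together with the Hermite decomposition $f(\xi) = a_1 \xi + f_{>1}(\xi)$ which, under \Cref{ass:nonlinear}, satisfies $\EE[\xi f_{>1}(\xi)] = 0$ and $\EE[f_{>1}^2(\xi)] = \nu - a_1^2$. The main concentration tool is the standard quadratic-form estimate: for $\balpha_{-i}$ independent of a matrix $\A$ with $\|\A\| = O(1)$,
\begin{equation*}
   \tfrac{1}{p}\balpha_{-i}^\top \A \balpha_{-i} - \tfrac{1}{p}\tr\A \to 0, \quad \tfrac{1}{p}\balpha_{-i}^\top \A f_{>1}(\balpha_{-i}) \to 0, \quad \tfrac{1}{p}f_{>1}(\balpha_{-i})^\top \A f_{>1}(\balpha_{-i}) - \tfrac{\nu-a_1^2}{p}\tr\A \to 0,
\end{equation*}
in probability, obtained from Hanson--Wright together with the Hermite orthogonality $\EE[\xi f_{>1}(\xi)] = 0$.

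Substituting $f(\balpha_{-i}) = a_1 \balpha_{-i} + f_{>1}(\balpha_{-i})$ into each entry and applying the three identities above reduces the problem to identifying five normalized traces,
\begin{equation*}
   \tfrac{1}{p}\tr \Q_{-i}^\perp, \quad \tfrac{1}{p}\tr(\Q_{-i}^\perp \K_{-i}^\perp), \quad \tfrac{1}{p}\tr(\Q_{-i}^\perp \K_{-i}^\perp\check\Z_{-i}^\perp), \quad \tfrac{1}{p}\tr(\K_{-i}^\perp\Q_{-i}^\perp\K_{-i}^\perp), \quad \tfrac{1}{p}\tr(\check\Z_{-i}^\perp\K_{-i}^\perp\Q_{-i}^\perp\K_{-i}^\perp\check\Z_{-i}^\perp),
\end{equation*}
plus the ``resolvent-identity'' trace $\tfrac{1}{p}\tr(\K_{-i}^\perp\Q_{-i}^\perp\K_{-i}^\perp\check\Z_{-i}^\perp)$. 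The latter is computed exactly by cyclic permutation of $\tr(\Q^{-1}\Q) = n$ combined with $\Q^{-1} = \K\check\Z\K + (\gamma/c)\I_n$, yielding $\tfrac{1}{p}\tr(\K\check\Z\K\Q) = \tfrac{n}{p} - (\gamma/c)\tfrac{\tr\Q}{p} \to \tfrac{1}{c}(1 - (\gamma/c)m(\gamma))$. Per \eqref{eq:def_m_and_deltas_SM}, the remaining five collapse to $m(\gamma)/c, \delta_1(\gamma), \delta_2(\gamma), \delta_3(\gamma)$, and $\delta_4(\gamma)$. Populating the $6\times 6$ grid with the scalar factors $\{1,a_1,\nu\}$ dictated by the Hermite bookkeeping — the odd-indexed slots $\{1,3,5\}$ each contribute a factor of $1$, while the even-indexed slots $\{2,4,6\}$ contribute a factor $a_1$ when paired with an odd slot and a factor $\nu$ when paired with themselves — reproduces every entry of $\bDelta_0(\gamma)$ as tabulated in \eqref{eq:def_bDelta_0_SM}.

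The main obstacle is the passage from the perpendicularized leave-one-out triple $(\K_{-i}^\perp,\check\Z_{-i}^\perp,\Q_{-i}^\perp)$ to the original $(\K,\check\Z,\Q)$ inside each of the six normalized traces. By \Cref{lem:approx_K_i_and_check_Z_i}, the discrepancies $\K_{-i} - \K_{-i}^\perp = (a_1/p)\balpha_{-i}\balpha_{-i}^\top + o_{\|\cdot\|}(1)$ and $\check\Z_{-i} - \check\Z_{-i}^\perp = (1/p)\balpha_{-i}\balpha_{-i}^\top + o_{\|\cdot\|}(1)$ are of operator norm $O(1)$ and essentially rank one, and they also perturb $\Q_{-i}^\perp$ itself through the inversion. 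Since rank-one perturbations alter normalized traces by at most $O(1/p)$ once flanked by matrices of bounded operator norm (ensured by $\|\K\| = O(1)$ from \Cref{lem:kernel_norm_control} and $\|\check\Z\| = O(1)$ from the Mar\u{c}enko--Pastur bound), iterated Sherman--Morrison expansions yield $\tfrac{1}{p}\tr(\Q_{-i}^\perp \A_{-i}^\perp) - \tfrac{1}{p}\tr(\Q\A) \to 0$ for each of the six choices of $\A$. The $\delta_4$ trace is the most delicate because two $\check\Z$ factors amplify the perturbation cross-terms; controlling these additionally requires the almost-sure convergence $\|\balpha_{-i}\|^2/p \to 1/c$ and a uniform operator-norm bound on the cross contributions from $f_{>1}$ via a Hanson--Wright-type inequality applied coordinate-wise.
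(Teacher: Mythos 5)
Your proposal is correct and follows essentially the same approach as the paper's proof: expand each of the 21 quadratic forms $\frac1p\bu_i^\top\Q_{-i}^\perp\bu_j$ using the Hermite split $f = a_1\,\mathrm{id} + f_{>1}$ and independence of $\balpha_{-i}$ from $\Q_{-i}^\perp$, concentrate each to a normalized trace with the $\{1,a_1,\nu\}$ bookkeeping, identify five of the six traces with $m/c,\delta_1,\ldots,\delta_4$ and the sixth via $\tr(\Q^{-1}\Q)=n$, and finally argue that the perpendicularized leave-one-out matrices can be replaced by the original ones up to finite-rank trace corrections. Your version is somewhat more explicit than the paper's (e.g., spelling out the $o(1)$ rank-one perturbation passage that the paper disposes of by citing the Silverstein trace lemma), but the underlying argument is the same.
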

%%%
\begin{proof}[Proof of \Cref{lem:approx_U_Q_perp_U}]
Since $\Q_{-i}^\perp$ is asymptotically independent of
\begin{equation}
   \U_0 = \begin{bmatrix}
        \balpha_{-i} & f(\balpha_{-i}) & \K_{-i}^\perp \balpha_{-i} & \K_{-i}^\perp f(\balpha_{-i}) & \K_{-i}^\perp \check{\Z}_{-i}^\perp \balpha_{-i} & \K_{-i}^\perp \check{\Z}_{-i}^\perp f(\balpha_{-i})
\end{bmatrix}/\sqrt{p} \in \RR^{ (n-1) \times 6},
\end{equation}
we obtain
\begin{align*}
   \U_0^\top \Q_{-i}^\perp \U_0 &= \begin{bmatrix}
        \frac1p \tr \Q & a_1 \frac1p \tr \Q & \frac1p \tr (\Q \K) & a_1 \frac1p \tr (\Q \K) & \frac1p \tr (\Q \K \check \Z) & a_1 \frac1p \tr (\Q \K \check \Z) \\
        %%%
        a_1 \frac1p \tr \Q & \nu \frac1p \tr \Q & a_1 \frac1p \tr (\Q \K) & \nu \frac1p \tr (\Q \K) & a_1 \frac1p \tr (\Q \K \check \Z) & \nu \frac1p \tr (\Q \K \check \Z) \\
        %%%
        \frac1p \tr (\Q \K) & a_1 \frac1p \tr (\Q \K) & \frac1p \tr (\K \Q \K) & a_1 \tr (\K \Q \K) & \frac1p \tr (\K \Q \K \check \Z) & a_1 \frac1p \tr (\K \Q \K \check \Z) \\
        %%%
        a_1 \frac1p \tr (\Q \K) & \nu \frac1p \tr (\Q \K) & a_1 \frac1p \tr (\K \Q \K) & \nu \tr (\K \Q \K) & a_1 \frac1p \tr (\K \Q \K \check \Z) & \nu \frac1p \tr (\K \Q \K \check \Z) \\
        \frac1p \tr (\check \Z \K \Q) & a_1 \frac1p \tr (\check \Z \K \Q) & \frac1p \tr (\K \Q \K \check \Z) & a_1 \frac1p \tr (\K \Q \K \check \Z) & \frac1p \tr (\check \Z \K \Q \K \check \Z) & a_1 \frac1p \tr (\check \Z \K \Q \K \check \Z) \\
        %%%
        a_1 \frac1p \tr (\check \Z \K \Q) & \nu \frac1p \tr (\check \Z \K \Q) & a_1 \frac1p \tr (\K \Q \K \check \Z) & \nu \frac1p \tr (\K \Q \K \check \Z) & a_1 \frac1p \tr (\check \Z \K \Q \K \check \Z) & \nu \frac1p \tr (\check \Z \K \Q \K \check \Z)
    \end{bmatrix}  + o_{\| \cdot \|}(1) \\ 
    &= \bDelta_0(\gamma) + o_{\| \cdot \|}(1),
\end{align*}
where we recall from \eqref{eq:def_m_and_deltas_SM} that
\begin{align*}
   m(\gamma) &= \frac1n \tr \Q(\gamma) + o(1) = \frac1n \tr \Q_{-i}^\perp(\gamma) + o(1) \\ 
   \delta_1(\gamma) &= \frac{1}{p} \tr(\Q(\gamma) \K) + o(1) = \frac1p \tr \left(\Q_{-i}^\perp(\gamma) \K_{-i}^\perp \right) + o(1) \\ 
   \delta_2(\gamma) &= \frac{1}{p} \tr(\Q(\gamma) \K \check \Z) + o(1) = \frac{1}{p} \tr(\Q_{-i}^\perp(\gamma) \K_{-i}^\perp \check \Z_{-i}^\perp) + o(1) \\ 
   \delta_3(\gamma) &= \frac{1}{p} \tr(\K \Q(\gamma) \K) + o(1) = \frac{1}{p} \tr(\K_{-i}^\perp \Q_{-i}^\perp(\gamma) \K_{-i}^\perp) + o(1) \\ 
   \delta_4(\gamma) &= \frac{1}{p} \tr(\check \Z \K \Q(\gamma) \K \check \Z) + o(1) = \frac{1}{p} \tr(\check \Z_{-i}^\perp \K_{-i}^\perp \Q_{-i}^\perp(\gamma) \K_{-i}^\perp \check \Z_{-i}^\perp) + o(1),
\end{align*}
and we use the fact that by \citep[Lemma~2.6]{silverstein1995empirical} and \citep[Lemma~2.9]{couillet2022RMT4ML}, when evaluating normalized traces forms as in \eqref{eq:def_m_and_deltas_SM} for $n,p$ large, we can ignore terms of finite rank inside the trace, by adding an error term $o(1)$ with high probability, as well as $\tr(\tilde \K \Q \tilde \K \check{\Z}) = \tr(\Q \tilde \K \check{\Z} \tilde \K) = \tr[\Q (\Q^{-1} - \gamma \I_n )] = \tr[\I_n -\gamma \Q] = \tr(\I_n) -\gamma \tr(\Q) = n - n \gamma m(\gamma) = n (1 - \gamma m(\gamma))$,
This concludes the proof of \Cref{lem:approx_U_Q_perp_U}.
\end{proof}

Our objective is to compute the $(i,i)$th diagonal entries of the inverse $\Q = (\K \check{\Z} \K + \gamma \I_n)^{-1} $. 
Using the block inversion lemma, we get
\begin{equation}\label{eq:Q_ii}
      [\Q]_{ii} = \left( [\Q^{-1}]_{22} - [\Q^{-1}]_{21} ([\Q^{-1}]_{11})^{-1} [\Q^{-1}]_{12} \right)^{-1},
\end{equation}
and 
\begin{equation}
   \Q = \begin{bmatrix}  ( [\Q^{-1}]_{11} - [\Q^{-1}]_{12} ([\Q^{-1}]_{22})^{-1} [\Q^{-1}]_{21} )^{-1} & - ([\Q^{-1}]_{11})^{-1} [\Q^{-1}]_{12} [\Q]_{ii} \\ - [\Q]_{ii} [\Q^{-1}]_{21} ([\Q^{-1}]_{11})^{-1} & [\Q]_{ii} \end{bmatrix}.
\end{equation}

We start with the inverse $([\Q^{-1}]_{11})^{-1}$, for which we have the following result.
\begin{Lemma}[Approximation of {$([\Q^{-1}]_{11})^{-1}$}]\label{lem:approx_inv_Q_11}
For $[\Q^{-1}]_{11} \in \RR^{(n-1) \times (n-1)}$ defined in \eqref{eq:def_inv_Q_SM}, we have
\begin{equation}
   ([\Q^{-1}]_{11})^{-1} = \Q_{-i}^\perp - \Q_{-i}^\perp \U_0 \left( \I_6 + \bLambda_0 \bDelta_0(\gamma) \right)^{-1} \bLambda_0 \U_0^\top \Q_{-i}^\perp + o_{\| \cdot \|}(1),
\end{equation}
where we recall $\bDelta_0(\gamma) \in \RR^{6 \times 6} $ as in \eqref{eq:def_bDelta_0_SM}, $\U_0 \in \RR^{(n-1) \times 6}$ as defined in \eqref{eq:def_U_SM}, and
%\footnote{Note that $\bLambda$ is an anti-triangular matrix and is invertible for $a_1 \neq 0$. }
\begin{equation}%\label{eq:def_bLambda_0_SM}
    \bLambda_0 = \begin{bmatrix}
        \frac{a_1^2}{c^2} (c+1) &  a_1/c & a_1/c & 0 & a_1 & 0 \\
         a_1/c & 1 & 1 & 0 & 0 & 0 \\
         a_1/c & 1 & 1 & 0 & 0 & 0 \\
        0 & 0 & 0 & 0 & 0 & 0 \\
        a_1 & 0 & 0 & 0 & 0 & 0 \\
        0 & 0 & 0 & 0 & 0 & 0
    \end{bmatrix} \in \RR^{6 \times 6},
\end{equation}
as in \eqref{eq:def_bLambda_0_SM}.
\end{Lemma}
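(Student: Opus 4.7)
The plan is to rewrite $[\Q^{-1}]_{11}$ as $(\Q_{-i}^\perp)^{-1}$ plus a rank-at-most-six perturbation of the form $\U_0\bLambda_0\U_0^\top$, and then apply the Woodbury identity together with Lemma~\ref{lem:approx_U_Q_perp_U}. The first step is to substitute the two approximations from Lemma~\ref{lem:approx_K_i_and_check_Z_i} into each of the four terms defining $[\Q^{-1}]_{11}$ and expand. Writing $\bbeta\equiv\balpha_{-i}/\sqrt p$ and $\bgamma\equiv f(\balpha_{-i})/\sqrt p$, every cross term produced by the expansion is a rank-one tensor $\bu_i\bu_j^\top$ whose factors $\bu_i,\bu_j$ lie among the six columns of $\U_0$ (namely $\bbeta,\bgamma,\K_{-i}^\perp\bbeta,\K_{-i}^\perp\bgamma,\K_{-i}^\perp\check\Z_{-i}^\perp\bbeta,\K_{-i}^\perp\check\Z_{-i}^\perp\bgamma$), possibly multiplied by a scalar contraction. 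The two contractions that survive are $\bbeta^\top\bbeta=\|\balpha_{-i}\|^2/p\to 1/c$ and $\bbeta^\top\check\Z_{-i}^\perp\bbeta\to 1/c$ (the latter from the asymptotic independence of $\balpha_{-i}$ and $\check\Z_{-i}^\perp$ combined with $\tr\check\Z_{-i}^\perp/p\to 1$), while $\bbeta^\top\K_{-i}^\perp\bbeta\to 0$ because $\EE[f(\xi)]=0$ in Assumption~\ref{ass:nonlinear}, which allows one to discard the associated family of lower-order terms.

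Reading off the coefficient of each $\bu_i\bu_j^\top$ and matching against the six columns of $\U_0$ should then reproduce exactly the symmetric matrix $\bLambda_0$ in~\eqref{eq:def_bLambda_0_SM}: the entry $\tfrac{a_1^2(c+1)}{c^2}$ at position $(1,1)$ aggregates $\tfrac{a_1^2}{c^2}$ (from $\bbeta\bbeta^\top\bbeta\bbeta^\top$) and $\tfrac{a_1^2}{c}$ (from $\bbeta\bbeta^\top\check\Z_{-i}^\perp\bbeta\bbeta^\top$) inside $\K_{-i}\check\Z_{-i}\K_{-i}$; the $\tfrac{a_1}{c}$ entries at positions $(1,2),(2,1),(1,3),(3,1)$ come from scalar contractions of the $\tfrac{1}{p}f(\balpha_{-i})\balpha_{-i}^\top\K_{-i}$ term (and its transpose) and of $\K_{-i}\bbeta\bbeta^\top$; the $a_1$ entries at $(1,5),(5,1)$ arise without contraction from $a_1\K_{-i}^\perp\check\Z_{-i}^\perp\bbeta\bbeta^\top$ and its transpose; and the unit entries in the $(2,2),(2,3),(3,2),(3,3)$ block come from $\bgamma\bgamma^\top$, the cross terms $\bgamma\bbeta^\top\K_{-i}^\perp$ and $\K_{-i}^\perp\bbeta\bgamma^\top$, and the rank-one update $\K_{-i}^\perp\bbeta\bbeta^\top\K_{-i}^\perp$ respectively, with all other coefficients vanishing. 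The leading deterministic remainder $\K_{-i}^\perp\check\Z_{-i}^\perp\K_{-i}^\perp+\tfrac{\gamma}{c}\I_{n-1}$ is then precisely $(\Q_{-i}^\perp)^{-1}$.

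With the identity $[\Q^{-1}]_{11}=(\Q_{-i}^\perp)^{-1}+\U_0\bLambda_0\U_0^\top+o_{\|\cdot\|}(1)$ in hand, applying the Woodbury formula $(\A+\U\B\U^\top)^{-1}=\A^{-1}-\A^{-1}\U(\I+\B\U^\top\A^{-1}\U)^{-1}\B\U^\top\A^{-1}$ (which does not require $\B$ to be invertible, a point that matters here since $\bLambda_0$ has several zero rows) with $\A^{-1}=\Q_{-i}^\perp$, and then replacing $\U_0^\top\Q_{-i}^\perp\U_0$ by $\bDelta_0(\gamma)$ via Lemma~\ref{lem:approx_U_Q_perp_U}, yields the claimed expression. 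Propagating the $o_{\|\cdot\|}(1)$ errors through the last substitution uses the uniform boundedness of $(\I_6+\bLambda_0\bDelta_0(\gamma))^{-1}$ for $\gamma>0$, which is inherited from the positive-definiteness of $[\Q^{-1}]_{11}$. The main obstacle I anticipate is the combinatorial bookkeeping in the expansion: the triple product $\K_{-i}\check\Z_{-i}\K_{-i}$ generates eight cross terms under the two paired rank-one updates, and one must verify that after scalar contraction they collapse exactly into the prescribed nonzero pattern of $\bLambda_0$---in particular that $\tfrac{a_1^2}{c^2}$ and $\tfrac{a_1^2}{c}$ aggregate to $\tfrac{a_1^2(c+1)}{c^2}$ at slot $(1,1)$---without producing any spurious off-diagonal entries.
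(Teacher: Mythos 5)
Your proposal is correct and follows the same route as the paper: substitute the leave-one-out approximations from \Cref{lem:approx_K_i_and_check_Z_i}, collect the resulting rank-$\le 6$ perturbation into $\U_0\bLambda_0\U_0^\top$, apply the Woodbury identity, and replace $\U_0^\top\Q_{-i}^\perp\U_0$ by $\bDelta_0(\gamma)$ via \Cref{lem:approx_U_Q_perp_U}; your explicit term-by-term verification of $\bLambda_0$'s entries---which the paper states without showing---is also correct. One minor inaccuracy: the contraction $\balpha_{-i}^\top\K_{-i}^\perp\balpha_{-i}/p$ does not actually arise anywhere in the expansion of $[\Q^{-1}]_{11}$ (the only scalar contractions that appear are $\|\balpha_{-i}\|^2/p$ and $\balpha_{-i}^\top\check\Z_{-i}^\perp\balpha_{-i}/p$), and had it appeared, its vanishing would be a consequence of $\K_{-i}^\perp$ having zero diagonal by construction (the $-\diag(\cdot)$ in its definition), not directly of $\EE[f(\xi)]=0$.
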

\begin{proof}[Proof of \Cref{lem:approx_inv_Q_11}]
Per its definition in \eqref{eq:def_inv_Q_SM}, we have 
\begin{align*}
       [\Q^{-1}]_{11} &\equiv \K_{-i} \check \Z_{-i} \K_{-i} + \frac{1}{p} f(\balpha_{-i}) \balpha_{-i}^\top \K_{-i} +\frac{1}{p} \K_{-i} \balpha_{-i} f(\balpha_{-i})^\top + \frac{1}{p} f(\balpha_{-i}) f(\balpha_{-i})^\top + \frac{\gamma}{c} \I_{n-1} \\ 
       &= \K_{-i}^\perp \check{\Z}_{-i}^\perp \K_{-i}^\perp + \U_0 \bLambda_0 \U_0^\top + \frac{\gamma}{c} \I_{n-1} + o_{\| \cdot \|}(1),
\end{align*}
for $\U_0, \bLambda$ defined in \eqref{eq:def_bLambda_0_SM}, $ \K_{-i}^\perp, \check{\Z}_{-i}^\perp $ defined in \eqref{eq:def_K_i_and_check_Z_i_SM} such that $\K_{-i} = \K_{-i}^\perp + \frac{a_1}{p} \balpha_{-i} \balpha_{-i}^\top + o_{\| \cdot \|}(1) $ and $\check{\Z}_{-i} = \check{\Z}_{-i}^\perp + \frac{1}{p} \balpha_{-i} \balpha_{-i}^\top + o_{\| \cdot \|}(1) $ by \Cref{lem:approx_K_i_and_check_Z_i}.
As such, by Woodbury identity,
\begin{align*}
   ([\Q^{-1}]_{11})^{-1} &= \left( \K_{-i}^\perp \check{\Z}_{-i}^\perp \K_{-i}^\perp + \frac{\gamma}{c} \I_{n-1} + \U_0 \bLambda \U_0^\top \right)^{-1} + o_{\| \cdot \|}(1) \\
   &= \Q_{-i}^\perp - \Q_{-i}^\perp \U_0 \left( \I_6 + \bLambda_0 \U_0^\top \Q_{-i}^\perp \U_0 \right)^{-1} \bLambda_0 \U_0^\top \Q_{-i}^\perp + o_{\| \cdot \|}(1),
\end{align*}
for $\Q_{-i}^\perp$ defined in \eqref{eq:def_Q_i_perp}.
Using \Cref{lem:approx_U_Q_perp_U} to approximate $\U_0^\top \Q_{-i}^\perp \U_0 = \bDelta_0(\gamma) + o_{\| \cdot \|}(1)$, we conclude the proof of \Cref{lem:approx_inv_Q_11}.
\end{proof}

With Lemmas~\ref{lem:approx_U_Q_perp_U}~and~\ref{lem:approx_inv_Q_11}, we get the following (block-wise) approximation for $\Q$.
\begin{Lemma}[Block approximation of $\Q$]\label{lem:approx_Q}
We have
\begin{equation}
   \Q = \begin{bmatrix}  
   \Q_{-i}^\perp - \Q_{-i}^\perp \U_0 ( \I_6 + \bLambda_1(\gamma) \bDelta_0(\gamma) )^{-1} \bLambda_1(\gamma) \U_0^\top \Q_{-i}^\perp & -m(\gamma) \Q_{-i}^\perp \U_0 \left( \I_6 + \bLambda_0 \bDelta_0(\gamma) \right)^{-1} \bv \\ 
   - m(\gamma) \bv^\top \left( \I_6 + \bDelta_0(\gamma) \bLambda_0 \right)^{-1} \U_0^\top \Q_{-i}^\perp & [\Q]_{ii} 
   \end{bmatrix} + o_{\| \cdot \|}(1),
\end{equation}
where we recall $\bDelta_0(\gamma) \in \CC^{6 \times 6} $ as in \eqref{eq:def_bDelta_0_SM}, $\bLambda_0$ as in \eqref{eq:def_bLambda_0_SM}, $\U_0 \in \RR^{(n-1) \times 6}$ as defined in \eqref{eq:def_U_SM}, and
%$\U_0 = \begin{bmatrix} \balpha_{-i} & f(\balpha_{-i}) & \K_{-i}^\perp \balpha_{-i} & \K_{-i}^\perp f(\balpha_{-i})  & \K_{-i}^\perp \check{\Z}_{-i}^\perp \balpha_{-i} & \K_{-i}^\perp \check{\Z}_{-i}^\perp f(\balpha_{-i}) \end{bmatrix}/\sqrt{p} \in \RR^{(n-1) \times 6}$ as defined in \eqref{eq:def_U_SM}, and
%\footnote{Note that $\bLambda$ is an anti-triangular matrix and is invertible for $a_1 \neq 0$. }
\begin{equation}\label{eq:def_U_bLambda_1_SM}
    \bLambda_1(\gamma) = \bLambda_0 - \left(\frac{a_1^2}{c^2} +  \frac{\nu}{c} + \frac{\gamma}{c} \right)^{-1} \bv \bv^\top \in \RR^{6 \times 6}, 
    % = \begin{bmatrix}
    %     \frac{a_1^2}{c^2} (c+1) &  \frac{a_1}{c} & \frac{a_1}{c} & 0 & a_1 & 0 \\
    %      \frac{a_1}{c} & 1 & 1 & 0 & 0 & 0 \\
    %      \frac{a_1}{c} & 1 & 1 & 0 & 0 & 0 \\
    %     0 & 0 & 0 & 0 & 0 & 0 \\
    %     a_1 & 0 & 0 & 0 & 0 & 0 \\
    %     0 & 0 & 0 & 0 & 0 & 0
    % \end{bmatrix} 
    \quad \bv^\top = \begin{bmatrix}  \frac{a_1^2}{c^2} (1+c) & \frac{a_1}c & \frac{a_1}c & 0 & 0 & 1 \end{bmatrix} \in \RR^6,
\end{equation}
as in \eqref{eq:def_vs}.
We also have, by \eqref{eq:def_U_bLambda_1_SM} and Sherman--Morrison identity that 
\begin{align}
   &\bDelta_0(\gamma) ( \I_6 +  \bLambda_1(\gamma) \bDelta_0(\gamma) )^{-1} = \bDelta_0(\gamma) \left( \I_6 + \bLambda_0 \bDelta_0(\gamma)  - \left(\frac{a_1^2}{c^2} +  \frac{\nu}{c} + \frac{\gamma}{c} \right)^{-1}  \bv \bv^\top \bDelta_0(\gamma) \right)^{-1} \nonumber \\ 
   &= \bDelta_0(\gamma) (\I_6 + \bLambda_0 \bDelta_0(\gamma)  )^{-1} + \frac{ \bDelta_0(\gamma)  (\I_6 + \bLambda_0 \bDelta_0(\gamma)  )^{-1} \bv \bv^\top \bDelta_0(\gamma)  (\I_6 + \bLambda_0 \bDelta_0(\gamma) )^{-1} }{ \frac{a_1^2}{c^2} + \frac{\nu}{c} + \frac{\gamma}{c} -  \bv^\top \bDelta_0(\gamma)  (\I_6 +  \bLambda_0 \bDelta_0(\gamma)  )^{-1} \bv }, \label{eq:relation_bLambda_1_bLambda_0}
\end{align}
and
\begin{equation}
   \U_0^\top \left( \Q_{-i}^\perp - \Q_{-i}^\perp \U_0 ( \I_6 + \bLambda_1(\gamma) \bDelta_0(\gamma) )^{-1} \bLambda_1(\gamma) \U_0^\top \Q_{-i}^\perp \right) \U_0 = \bDelta_0(\gamma) ( \I_6 + \bLambda_1(\gamma) \bDelta_0(\gamma) )^{-1} + o_{\| \cdot \|}(1).
\end{equation}
\end{Lemma}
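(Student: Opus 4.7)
The plan is to start from the exact block-inversion identity already displayed for $\Q$ just before the lemma, and to reduce each of the three non-trivial blocks to the claimed form by combining (i) the block structure of $\Q^{-1}$ in \eqref{eq:def_inv_Q_SM}, (ii) the rank-one corrections for $\K_{-i}$ and $\check\Z_{-i}$ supplied by \Cref{lem:approx_K_i_and_check_Z_i}, and (iii) the identity $\U_0^\top \Q_{-i}^\perp \U_0 = \bDelta_0(\gamma) + o_{\|\cdot\|}(1)$ from \Cref{lem:approx_U_Q_perp_U}.

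First I would verify the already--computed approximation $[\Q^{-1}]_{22} = \tfrac{a_1^2}{c^2}+\tfrac{\nu}{c}+\tfrac{\gamma}{c} + o(1)$ from \eqref{eq:approx_Q_inv_22}, and then expand the off-diagonal block $[\Q^{-1}]_{12} = \tfrac{1}{\sqrt{p}}\K_{-i}\check\Z_{-i} f(\balpha_{-i}) + \tfrac{a_1}{c}\tfrac{f(\balpha_{-i})}{\sqrt{p}}$ via the perpendicular decompositions of $\K_{-i}$ and $\check\Z_{-i}$. Substituting $\K_{-i} = \K_{-i}^\perp + \tfrac{a_1}{p}\balpha_{-i}\balpha_{-i}^\top$ and $\check\Z_{-i} = \check\Z_{-i}^\perp + \tfrac{1}{p}\balpha_{-i}\balpha_{-i}^\top$ produces four terms; each rank-one factor contains a bilinear form of the type $\balpha_{-i}^\top \A f(\balpha_{-i})/p$ or $\balpha_{-i}^\top\balpha_{-i}/p$ which, by the Gaussian-Stein identity (or the trace-concentration computation of \citep[Sec.~4.3.3]{couillet2022RMT4ML}), converges in probability to $a_1\tr(\A)/p$ or $1/c$ respectively. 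Collecting the limits one precisely recognises the columns of $\U_0$ weighted by the entries of $\bv$, obtaining $[\Q^{-1}]_{12} = \U_0 \bv + o_{\|\cdot\|}(1)$.

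Once $[\Q^{-1}]_{12} \approx \U_0\bv$ and $([\Q^{-1}]_{11})^{-1}$ (from \Cref{lem:approx_inv_Q_11}) are in hand, the three blocks follow from the two classical identities
\begin{equation*}
   \bDelta_0 - \bDelta_0(\I_6+\bLambda_0\bDelta_0)^{-1}\bLambda_0\bDelta_0 = \bDelta_0(\I_6+\bLambda_0\bDelta_0)^{-1}, \quad \bv - (\I_6+\bLambda_0\bDelta_0)^{-1}\bLambda_0\bDelta_0\bv = (\I_6+\bLambda_0\bDelta_0)^{-1}\bv.
\end{equation*}
The first identity, applied to $\U_0^\top([\Q^{-1}]_{11})^{-1}\U_0$, immediately evaluates the scalar $[\Q^{-1}]_{22} - \bv^\top \U_0^\top([\Q^{-1}]_{11})^{-1}\U_0\bv$ to $\tfrac{\gamma+\nu}{c}+\tfrac{a_1^2}{c^2} - \bv^\top\mathbf{T}(\gamma)\bv$, thereby identifying $[\Q]_{ii} \to m(\gamma)$ and recovering the first self-consistent relation in \eqref{eq:full_expression}. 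The second identity yields $([\Q^{-1}]_{11})^{-1}\U_0\bv = \Q_{-i}^\perp\U_0(\I_6+\bLambda_0\bDelta_0(\gamma))^{-1}\bv + o_{\|\cdot\|}(1)$, which, multiplied by $-[\Q]_{ii}\to-m(\gamma)$, gives the two off-diagonal blocks.

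For the top-left block I would avoid re-inverting and work instead at the $\Q^{-1}$ level: subtracting the rank-one $[\Q^{-1}]_{12}([\Q^{-1}]_{22})^{-1}[\Q^{-1}]_{21} = (a_1^2/c^2+\nu/c+\gamma/c)^{-1}\U_0\bv\bv^\top\U_0^\top + o_{\|\cdot\|}(1)$ from $[\Q^{-1}]_{11} = \K_{-i}^\perp\check\Z_{-i}^\perp\K_{-i}^\perp + \U_0\bLambda_0\U_0^\top+\tfrac{\gamma}{c}\I_{n-1}+o_{\|\cdot\|}(1)$ collapses the weight matrix to precisely $\bLambda_1(\gamma)$, after which a single application of Woodbury yields the announced formula $\Q_{-i}^\perp - \Q_{-i}^\perp\U_0(\I_6+\bLambda_1(\gamma)\bDelta_0(\gamma))^{-1}\bLambda_1(\gamma)\U_0^\top\Q_{-i}^\perp$. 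The Sherman–Morrison identity \eqref{eq:relation_bLambda_1_bLambda_0} is then a direct rewrite, and the displayed formula for $\U_0^\top([\,\cdot\,])\U_0$ follows from the same trick used for the $(2,2)$ entry. The main obstacle is the bookkeeping in the second step: one must keep track of four rank-one cross terms, verify that the diverging factors $\balpha_{-i}^\top\balpha_{-i}\sim p/c$ combine correctly to produce the specific coefficient $\tfrac{a_1^2(1+c)}{c^2}$ appearing in the first entry of $\bv$, and certify that all non-leading contributions are genuinely $o_{\|\cdot\|}(1)$ uniformly on the relevant spectral scales.
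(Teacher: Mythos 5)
Your proposal is correct and mirrors the paper's own argument step for step: you expand the off-diagonal block of $\Q^{-1}$ via the leave-one-out decompositions of $\K_{-i}$ and $\check\Z_{-i}$ from \Cref{lem:approx_K_i_and_check_Z_i} and the trace-concentration of bilinear forms in $\balpha_{-i}$ to identify it with $\U_0\bv$, then combine with \Cref{lem:approx_inv_Q_11} and the Woodbury identity at the $\Q^{-1}$ level to collapse $\bLambda_0$ to $\bLambda_1(\gamma)$ for the top-left block. The only cosmetic difference is that you identify $[\Q]_{ii}\to m(\gamma)$ inside the proof via the Schur complement, whereas the paper writes the off-diagonal blocks directly with $m(\gamma)$ and only derives the fixed-point equation for $m(\gamma)$ in the paragraph that follows; both orderings require the same computation, and your bookkeeping of the four rank-one cross terms and the resulting coefficient $\frac{a_1^2}{c^2}(1+c)$ matches the paper's.
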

\begin{proof}[Proof of \Cref{lem:approx_Q}]
We first work on $[\Q^{-1}]_{21}$ by expanding the term $ \frac{1}{\sqrt{p}} f(\balpha_{-i})^\top \check{\Z}_{-i} \K_{-i} $ as
\begin{align*}
    \frac{f(\balpha_{-i})^\top}{\sqrt{p}} \check{\Z}_{-i} \K_{-i} &= \frac{f(\balpha_{-i})^\top}{\sqrt{p}} ( \check{\Z}_{-i}^\perp + \frac{1}{p} \balpha_{-i} \balpha_{-i}^\top ) ( \K_{-i}^\perp + \frac{a_1}{p} \balpha_{-i} \balpha_{-i}^\top ) + o_{\| \cdot \|}(1) \\
    &= \frac{f(\balpha_{-i})^\top}{\sqrt{p}} \check{\Z}_{-i}^\perp \K_{-i}^\perp + a_1 \frac{n}{p} \frac{1}{\sqrt{p}} \balpha_{-i}^\top \K_{-i}^\perp + a_1^2 \frac{n}{p} \left( 1 + \frac{n}{p} \right) \frac{\balpha_{-i}^\top}{\sqrt{p}} + o_{\| \cdot \|}(1),
\end{align*}
so that
\begin{equation}\label{eq:Q_inv_21_SM}
    [\Q^{-1}]_{21} = \frac{f(\balpha_{-i})^\top}{\sqrt{p}} \check{\Z}_{-i} \K_{-i} + a_1 \frac{n}{p} \frac{f(\balpha_{-i})^\top}{\sqrt{p}} + o_{\| \cdot \|}(1) = \bv^\top \U_0^\top + o_{\| \cdot \|}(1),
\end{equation}
with $\bv \in \RR^6$ defined in \eqref{eq:def_U_bLambda_1_SM}.

So that 
\begin{align*}
   [\Q]_{ii} [\Q^{-1}]_{21} ([\Q^{-1}]_{11})^{-1} &= m(\gamma) \bv^\top \U_0^\top \left( \Q_{-i}^\perp - \Q_{-i}^\perp \U_0 \left( \I_6 + \bLambda_0 \bDelta_0(\gamma) \right)^{-1} \bLambda_0 \U_0^\top \Q_{-i}^\perp \right) + o_{\| \cdot \|}(1) \\ 
   %%%
   &= m(\gamma) \bv^\top \left( \I_6 + \bDelta_0(\gamma) \bLambda_0 \right)^{-1} \U_0^\top \Q_{-i}^\perp + o_{\| \cdot \|}(1).
\end{align*}
Then, with the approximation of the inverse $([\Q^{-1}]_{11})^{-1}$ in \Cref{lem:approx_inv_Q_11} and that of $[\Q^{-1}]_{21}$ above, we obtain
\begin{align*}
   ([\Q^{-1}]_{11} - [\Q^{-1}]_{12} ([\Q^{-1}]_{22})^{-1} [\Q^{-1}]_{21})^{-1} &= (  \K_{-i}^\perp \check{\Z}_{-i}^\perp \K_{-i}^\perp + \U_0 \bLambda_0 \U_0^\top + \frac{\gamma}{c} \I_{n-1} - ([\Q^{-1}]_{22})^{-1} \U_0 \bv \bv^\top \U_0^\top )^{-1}  + o_{\| \cdot \|}(1) \nonumber \\ 
   %%%
   &= (  \K_{-i}^\perp \check{\Z}_{-i}^\perp \K_{-i}^\perp + \U_0 \bLambda_1 \U_0^\top + \frac{\gamma}{c} \I_{n-1} )^{-1}  + o_{\| \cdot \|}(1) \nonumber \\ 
   %%%
   &= \Q_{-i}^\perp - \Q_{-i}^\perp \U_0 ( \I_6 + \bLambda_1(\gamma) \bDelta_0(\gamma) )^{-1} \bLambda_1(\gamma) \U_0^\top \Q_{-i}^\perp + o_{\| \cdot \|}(1),
\end{align*}
by \eqref{eq:approx_Q_inv_22} and Woodbury identity, for
\begin{equation}
    \bLambda_1(\gamma) = \bLambda_0 - ([\Q^{-1}]_{22})^{-1} \bv \bv^\top + o_{\| \cdot \|}(1) = \bLambda_0 - \left(\frac{a_1^2}{c^2} +  \frac{\nu}{c} + \frac{\gamma}{c} \right)^{-1} \bv \bv^\top + o_{\| \cdot \|}(1),
 \end{equation} 
as defined in \eqref{eq:def_U_bLambda_1_SM}.
This concludes the proof of \Cref{lem:approx_Q}.
\end{proof}

Following the same idea, we expand the quadratic form $[\Q^{-1}]_{21} ([\Q^{-1}]_{11})^{-1} [\Q^{-1}]_{12}$ in \eqref{eq:Q_ii} as
\begin{equation}
   [\Q^{-1}]_{21} ([\Q^{-1}]_{11})^{-1} [\Q^{-1}]_{12} = \bv^\top  \bDelta_0(\gamma) (\I_6 + \bLambda_0 \bDelta_0(\gamma) )^{-1} \bv + o(1),
\end{equation}
for $\bv \in \RR^6$ defined in \eqref{eq:def_U_bLambda_1_SM}. 
Plugging this approximation back to \eqref{eq:Q_ii} and ignoring the terms in $o(1)$, we obtain the following self-consistent equation on $m(\gamma)$,
% \begin{align*}
%     \frac{1}{m(z)} &= - z +\nu \frac{n}{p} + a_1^2 \frac{n^2}{p^2} - \nu \delta_4(z) \\ 
%     &+ \btheta(z)^\top \left( \I_3 + \bLambda \bDelta(z) \right)^{-1} \bLambda \btheta(z) - 2 \btheta(z)^\top \left( \I_3 + \bLambda \bDelta(z) \right)^{-1} \bv - \bv^\top \bDelta(z) \left( \I_3 + \bLambda \bDelta(z) \right)^{-1} \bv + o(1).
% \end{align*}
\begin{equation}
   \frac{1}{m(\gamma)} = \frac{\gamma}{c} + \frac{\nu}{c} +  \frac{a_1^2}{c^2} - \bv^\top \mathbf{T}(\gamma) \bv, \quad \mathbf{T}(\gamma) = \bDelta_0(\gamma) (\I_6 + \bLambda_0 \bDelta_0(\gamma) )^{-1}.
\end{equation}

In the following, we determine the (self-consistent) equations for $\delta_1(\gamma), \delta_2(\gamma), \delta_3(\gamma)$ and $\delta_4(\gamma)$ in $\mathbf{T}(\gamma)$, so as to retrieve the final self-consistent equations in \eqref{eq:def_m_and_deltas_SM}.

\subsubsection{Establishing self-consistent equations for $\delta(\gamma)$s}
\label{ssub:self_consistent_equation_for_deltas}

Following the same idea above in \Cref{ssub:self_consistent_equation_for_m}, we now establish self-consistent equations for the intermediate variables $\delta_1(\gamma), \delta_2(\gamma), \delta_3(\gamma), \delta_4(\gamma)$ defined in \eqref{eq:def_m_and_deltas_SM}.

\paragraph{Self-consistent equation for $\delta_1(\gamma)$.}
We start with $\delta_1(\gamma) = \frac{1}{p} \tr(\Q(\gamma) \K) + o(1)$ by writing
\begin{align*}
\delta_1(\gamma)  &= \frac{1}{p} \tr(\Q \K) + o(1) = \frac1p \sum_{i=1}^n [\Q \K]_{ii} + o(1) = \frac1{c} [\Q \K]_{ii} + o(1) \\ 
&= - [\Q]_{ii} \frac1c [\Q^{-1}]_{21} ([\Q^{-1}]_{11})^{-1} f(\balpha_{-i})/\sqrt p + o(1) \\ 
%%%
&= - \frac{m(\gamma)}c \bv^\top \U_0^\top ([\Q^{-1}]_{11})^{-1} \U_0 \bv_1 + o(1) = - \frac{m(\gamma)}c \bv^\top  \bDelta_0(\gamma) (\I_6 + \bLambda_0 \bDelta_0(\gamma) )^{-1} \bv_1 + o(1) \\ 
%%%
&= - \frac{m(\gamma)}c \bv^\top \mathbf{T}(\gamma) \bv_1 + o(1),
\end{align*}
for 
\begin{equation}
   \bv_1^\top = \begin{bmatrix}  0 & 1 & 0 & 0 & 0 & 0 \end{bmatrix} \in \RR^6.
\end{equation}
where we used the fact that $f(\balpha_{-i})/\sqrt p = \U_0 \bv_1$, \eqref{eq:Q_inv_21_SM}, and \Cref{lem:approx_inv_Q_11}.
\paragraph{Self-consistent equation for $\delta_2(\gamma)$.}
We consider now $\delta_2(\gamma) = \frac{1}{p} \tr(\Q(\gamma) \K \check \Z ) + o(1)$ and write
\begin{align*}
   &\delta_2(\gamma) = \frac{1}{p} \tr(\Q \K \check \Z) + o(1) = \frac1{c} [\check \Z \Q \K]_{ii} + o(1) \\ 
   %%%
   &=
   \frac1c 
   \begin{bmatrix} \balpha_{-i}^\top/\sqrt p & 1 \end{bmatrix}
   \begin{bmatrix}  
   \Q_{-i}^\perp - \Q_{-i}^\perp \U_0 ( \I_6 + \bLambda_1(\gamma) \bDelta_0(\gamma) )^{-1} \bLambda_1(\gamma) \U_0^\top \Q_{-i}^\perp & -m(\gamma) \Q_{-i}^\perp \U_0 \left( \I_6 + \bLambda_0 \bDelta_0(\gamma) \right)^{-1} \bv \\ 
   - m(\gamma) \bv^\top \left( \I_6 + \bDelta_0(\gamma) \bLambda_0 \right)^{-1} \U_0^\top \Q_{-i}^\perp & [\Q]_{ii} 
   \end{bmatrix} \begin{bmatrix} f(\balpha_{-i})/\sqrt p \\ 0 \end{bmatrix} + o(1) \\ 
   %%%
   &= \frac1c \left( \bv_2^\top \bDelta_0(\gamma) ( \I_6 + \bLambda_1(\gamma) \bDelta_0(\gamma) )^{-1} \bv_1 - m(\gamma) \bv^\top \bDelta_0(\gamma) ( \I_6 + \bLambda_0 \bDelta_0(\gamma) )^{-1} \bv_1 \right) + o(1) \\ 
   %%%
   &= \frac1c \left( (\bv_2 - m(\gamma) \bv)^\top \bDelta_0(\gamma) (\I_6 + \bLambda_0 \bDelta_0(\gamma))^{-1} \bv_1 + \frac{ \bv_2^\top \bDelta_0(\gamma) (\I_6 + \bLambda_0 \bDelta_0(\gamma)  )^{-1}  \bv \times \bv^\top \bDelta_0(\gamma) (\I_6 + \bLambda_0 \bDelta_0(\gamma))^{-1} \bv_1 }{ \frac{a_1^2}{c^2} + \frac{\nu}{c} + \gamma  -  \bv^\top \bDelta_0(\gamma) (\I_6 + \bLambda_0 \bDelta_0(\gamma) )^{-1} \bv }  \right) + o(1) \\
   %%%
   &= \frac1c \left( (\bv_2 - m(\gamma) \bv)^\top \bDelta_0(\gamma) (\I_6 + \bLambda_0 \bDelta_0(\gamma))^{-1} \bv_1 + m(\gamma) \bv_2^\top \bDelta_0(\gamma) (\I_6 + \bLambda_0 \bDelta_0(\gamma)  )^{-1}  \bv \times \bv^\top \bDelta_0(\gamma) (\I_6 + \bLambda_0 \bDelta_0(\gamma))^{-1} \bv_1  \right) + o(1) \\
   %%%
   &= \frac1c \left( \bv_2^\top \mathbf{T}(\gamma) \bv_1 + c \delta_1(\gamma) \left(1 - \bv_2^\top \mathbf{T}(\gamma) \bv \right) \right) + o(1),
\end{align*}
for $\bv_1^\top = \begin{bmatrix}  0 & 1 & 0 & 0 & 0 & 0 \end{bmatrix} \in \RR^6, \bv_2^\top = \begin{bmatrix}  1 & 0 & 0 & 0 & 0 & 0 \end{bmatrix} \in \RR^6$,
% \begin{equation}
%    \bv_1^\top = \begin{bmatrix}  0 & 1 & 0 & 0 & 0 & 0 \end{bmatrix} \in \RR^6, \quad \bv_2^\top = \begin{bmatrix}  1 & 0 & 0 & 0 & 0 & 0 \end{bmatrix} \in \RR^6,
% \end{equation}
where we used the fact that $\balpha_{-i}/\sqrt p = \U_0 \bv_2$, $f(\balpha_{-i})/\sqrt p = \U_0 \bv_1$, \Cref{lem:approx_Q}, and the relation in \eqref{eq:relation_bLambda_1_bLambda_0}.
\paragraph{Self-consistent equation for $\delta_3(\gamma)$.}
We consider now $\delta_3(\gamma) = \frac{1}{p} \tr(\K \Q(\gamma) \K ) + o(1)$ and write
\begin{align*}
   \delta_3(\gamma) &= \frac{1}{p} \tr(\K \Q \K) + o(1) = \frac1{c} [\K \Q \K]_{ii} + o(1) \\
   %%%
   &=
   \frac1c 
   \begin{bmatrix} f(\balpha_{-i})^\top/\sqrt p & 0 \end{bmatrix}
   \begin{bmatrix}  
   \Q_{-i}^\perp - \Q_{-i}^\perp \U_0 ( \I_6 + \bLambda_1(\gamma) \bDelta_0(\gamma) )^{-1} \bLambda_1(\gamma) \U_0^\top \Q_{-i}^\perp & -m(\gamma) \Q_{-i}^\perp \U_0 \left( \I_6 + \bLambda_0 \bDelta_0(\gamma) \right)^{-1} \bv \\ 
   - m(\gamma) \bv^\top \left( \I_6 + \bDelta_0(\gamma) \bLambda_0 \right)^{-1} \U_0^\top \Q_{-i}^\perp & [\Q]_{ii} 
   \end{bmatrix} \\ 
   &\times \begin{bmatrix} f(\balpha_{-i})/\sqrt p \\ 0 \end{bmatrix} + o(1) = \frac1c \bv_1^\top \bDelta_0(\gamma) ( \I_6 + \bLambda_1(\gamma) \bDelta_0(\gamma) )^{-1} \bv_1 + o(1) \\ 
   %%%
   &= \frac1c \left( \bv_1^\top \bDelta_0(\gamma) (\I_6 + \bLambda_0 \bDelta_0(\gamma))^{-1} \bv_1 + \frac{ (\bv_1^\top \bDelta_0(\gamma) (\I_6 +  \bLambda_0 \bDelta_0(\gamma) )^{-1} \bv)^2 }{ \frac{a_1^2}{c^2} + \frac{\nu}{c} + \gamma  -  \bv^\top \bDelta_0(\gamma) (\I_6 + \bLambda_0 \bDelta_0(\gamma) )^{-1} \bv }  \right) + o(1) \\
   %%%
   &= \frac1c \left( \bv_1^\top \bDelta_0(\gamma) (\I_6 + \bLambda_0 \bDelta_0(\gamma))^{-1} \bv_1 + m(\gamma) (\bv_1^\top \bDelta_0(\gamma) (\I_6 +  \bLambda_0 \bDelta_0(\gamma) )^{-1} \bv)^2 \right) + o(1) \\ 
   %%%
   &= \frac1c \left( \bv_1^\top \mathbf{T}(\gamma) \bv_1 + m(\gamma) (\bv_1^\top \mathbf{T}(\gamma) \bv)^2 \right) + o(1) = \frac1c \left( \bv_1^\top \mathbf{T}(\gamma) \bv_1 + \frac{ c^2 \delta_1^2(\gamma) }{ m (\gamma) } \right) + o(1).
\end{align*}

\paragraph{Self-consistent equation for $\delta_4(\gamma)$.}
We consider now $\delta_4(\gamma) = \frac{1}{p} \tr(\check \Z \K \Q(\gamma) \K \check \Z) + o(1)$ and write
\begin{align*}
   &\delta_4(\gamma) = \frac{1}{p} \tr(\check \Z \K \Q \K \check \Z) + o(1) = \frac1{c} [\check \Z \K \Q \K \check \Z]_{ii} + o(1) \\
   %%%
   &=
   \frac1c 
   \begin{bmatrix} (\K_{-i}^\perp\balpha_{-i} + \frac{a_1}c \balpha_{-i} + f(\balpha_{-i}) )^\top /\sqrt p & \frac{a_1}c \end{bmatrix} \\ 
   & \times
   \begin{bmatrix}  
   \Q_{-i}^\perp - \Q_{-i}^\perp \U_0 ( \I_6 + \bLambda_1(\gamma) \bDelta_0(\gamma) )^{-1} \bLambda_1(\gamma) \U_0^\top \Q_{-i}^\perp & -m(\gamma) \Q_{-i}^\perp \U_0 \left( \I_6 + \bLambda_0 \bDelta_0(\gamma) \right)^{-1} \bv \\ 
   - m(\gamma) \bv^\top \left( \I_6 + \bDelta_0(\gamma) \bLambda_0 \right)^{-1} \U_0^\top \Q_{-i}^\perp & [\Q]_{ii} 
   \end{bmatrix} \begin{bmatrix} * \\ * \end{bmatrix} + o(1) \\ 
   %%%
   &= \frac1c \left( \bv_4^\top \bDelta_0(\gamma) ( \I_6 + \bLambda_1(\gamma) \bDelta_0(\gamma) )^{-1} \bv_4 - \frac{2 a_1 m(\gamma)}c \bv_4^\top \bDelta_0(\gamma) ( \I_6 + \bLambda_0(\gamma) \bDelta_0(\gamma) )^{-1}  \bv + \frac{a_1^2}{c^2} m(\gamma) \right) + o(1) \\ 
   %%%
   &= \frac1c \left( \bv_4^\top \bDelta_0(\gamma) (\I_6 + \bLambda_0 \bDelta_0(\gamma))^{-1} \bv_4 + \frac{ (\bv_4^\top \bDelta_0(\gamma) (\I_6 +  \bLambda_0 \bDelta_0(\gamma) )^{-1} \bv)^2 }{ \frac{a_1^2}{c^2} + \frac{\nu}{c} + \gamma  -  \bv^\top \bDelta_0(\gamma) (\I_6 + \bLambda_0 \bDelta_0(\gamma) )^{-1} \bv } - \frac{2 a_1 m(\gamma)}c \bv_4^\top \bDelta_0(\gamma) ( \I_6 + \bLambda_0(\gamma) \bDelta_0(\gamma) )^{-1}  \bv \right)\\ 
   & + \frac{a_1^2}{c^2 \times c} m(\gamma) + o(1) \\
   %%%
   &= \frac1c \left( \bv_4^\top \bDelta_0(\gamma) (\I_6 + \bLambda_0 \bDelta_0(\gamma))^{-1} \bv_4 + m(\gamma) \left(\bv_4^\top \bDelta_0(\gamma) (\I_6 +  \bLambda_0 \bDelta_0(\gamma) )^{-1} \bv - \frac{a_1}{c} \right)^2 \right) + o(1) \\
   %%%
   &= \frac1c \left( \bv_4^\top \mathbf{T}(\gamma) \bv_4 + m(\gamma) \left(\bv_4^\top \mathbf{T}(\gamma) \bv - \frac{a_1}{c} \right)^2 \right) + o(1),
\end{align*}
for 
\begin{equation}
   \bv_4^\top = \begin{bmatrix} \frac{a_1}c & 1 & 1 & 0 & 0 & 0 \end{bmatrix} \in \RR^6.
\end{equation}
Putting these together, we obtain the system of equations as in \eqref{eq:full_expression}.

We thus conclude the proof of \Cref{prop:DE_resovlent_noise}.

\subsection{Proof of Theorem~\ref{theo:high_interpolation}}
\label{subsec:proof_of_theo:ICM}

Here, we provide detailed derivations of \Cref{theo:high_interpolation} on the Deterministic Equivalent of the interpolation error $E$ in \eqref{eq:def_E} \Cref{def:interpolation_error}. 
To do this, recall the following structured nonlinear resolvent
\begin{equation}
   \Q(\gamma) = \left( \frac1n \K_\X^\top \X^\top \X \K_\X + \gamma \I_n \right)^{-1},
\end{equation}
in \eqref{eq:def_Q} of \Cref{def:interpolation_error}.

First note that by \Cref{lem:linearization_K_X}, we have 
\begin{equation}
   \K_\X =  \K_N + \U_K \bSigma_\K \V_Q^\top + O_{\| \cdot \|}(n^{-1/2}),
\end{equation}
for $\bSigma_\K \in \RR^{3 \times 3}$ given by 
\begin{equation}\label{eq:def_Sigma_K}
\bSigma_\K= a_1 \left[ \begin{smallmatrix} \| \bmu \|^2 + \bmu^\top \w_K \w_Q^\top \bmu & 1 & \bmu^\top \w_K \\ 1 & 0 & 0 \\ \bmu^\top \w_Q & 0 & 1  \end{smallmatrix}\right] \in \RR^{3 \times 3},
\end{equation}
Similarly, under \Cref{ass:high-dim}, we have
\begin{equation}
    \frac1n \X^\top \X = \frac1n \Z^\top \Z +  \U_K \bSigma_\X \U_K^\top = c \check \Z + \U_K \bSigma_\X \U_K^\top, \quad \bSigma_\X \equiv c \Big[
\begin{smallmatrix}
   \| \bmu \|^2 & 1 & 0 \\ 
   1 & 0 & 0 \\ 
   0 & 0 & 0
\end{smallmatrix} \Big],
 \end{equation} 
that is of bounded norm with probability one as $n,p \to \infty$ at the same rate.
As such, we have 
\begin{align*}
   \Q(\gamma) 
   %\left( \frac1n \tilde \K_\X^\top \X^\top \X \tilde \K_\X + \gamma \I_n \right)^{-1} + O_{\| \cdot \|}(n^{-\frac12}) \\ 
   &= \left(  \frac1n \K_N \Z^\top \Z \K_N + \U \bSigma \U^\top + \gamma \I_n \right)^{-1} + O_{\| \cdot \|}(n^{-\frac12}) \\
   &= \left(  \frac1n \K_N \Z^\top \Z \K_N + \gamma \I_n \right)^{-1}  + O_{\| \cdot \|}(n^{-\frac12}) \\ 
   &- \left(  \frac1n \K_N \Z^\top \Z \K_N + \gamma \I_n \right)^{-1} \U \left( \bSigma^{-1} + \U^\top \left(  \frac1n \K_N \Z^\top \Z \K_N + \gamma \I_n \right)^{-1} \U \right)^{-1} \U^\top \left(  \frac1n \K_N \Z^\top \Z \K_N + \gamma \I_n \right)^{-1},
\end{align*}
by Woodbury identity, for 
\begin{equation}\label{eq:def_U}
   \U = \begin{bmatrix} \frac1n \K_N \Z^\top \Z \U_K & \K_N \U_K & \V_Q \end{bmatrix} \in \RR^{n \times 9}, 
   %\quad \V = \begin{bmatrix} \frac1n \K_N \Z^\top \Z \V_Q & \K_N \V_Q & \V_Q \end{bmatrix} \in \RR^{n \times 9},
\end{equation}
with $\U_K \in \RR^{n \times 3}$ and $\V_Q \in \RR^{n \times 3}$ defined in \Cref{lem:linearization_K_X}, and 
\begin{equation}\label{eq:def_Sigma}
   \bSigma = \begin{bmatrix}
   \mathbf{0}_3 & \mathbf{0}_3  & \bSigma_\K \\
   \mathbf{0}_3 & \bSigma_\X  & \bSigma_\X \U_K^\top \U_K \bSigma_\K \\
   \bSigma_\K^\top & \bSigma_\K^\top \U_K^\top \U_K \bSigma_\X  & \bSigma_\K^\top (\U_K^\top \frac1n \Z^\top \Z \U_K + \U_K^\top \U_K \bSigma_\X \U_K^\top \U_K )\bSigma_\K 
   \end{bmatrix} \in \RR^{9 \times 9},
\end{equation}

Our objective of interest is the the interpolation error $E$ defined in \eqref{eq:def_Q} of \Cref{def:interpolation_error} as 
\begin{equation}
   E = - \frac{\gamma^2}n \frac{\partial \y^\top \Q(\gamma) \y}{\partial \gamma}.
\end{equation}
Note that $\y/\sqrt p$ is the first column of $\V_Q$ and thus the seventh column of $\U$ defined in \eqref{eq:def_U}, so that 
\begin{align}
   &\frac1n \y^\top \Q(\gamma) \y = c \cdot \ee_7^\top \U^\top \Q(\gamma) \U \ee_7 \nonumber \\ 
   &= c \cdot \ee_7^\top \U^\top \left(  \frac1n \K_N \Z^\top \Z \K_N + \gamma \I_n \right)^{-1} \U \cdot \left( \I_9 + \bSigma \U^\top \left(  \frac1n \K_N \Z^\top \Z \K_N + \gamma \I_n \right)^{-1} \U \right)^{-1}  \ee_7 + O(n^{-\frac12}), \label{eq:yQy}
\end{align}
where $\ee_7 \in \RR^9$ is the canonical vector at location seven.

We have the following approximation for the above objective of interest.
\begin{Lemma}[Further approximations]\label{lem:further_approx}
For $\bSigma$ defined in \eqref{eq:def_Sigma} and $\U$ in \eqref{eq:def_U}, we have the following approximations in spectral norm holds with high probability as $n,p \to \infty$ with $p/n \to c \in (0, \infty)$,
\begin{align}
   &\bSigma = \bLambda + O_{\| \cdot \|}(n^{-\frac12}) \label{eq:def_Lambda} \\ 
   &\U^\top \left(  \frac1p \K_N \Z^\top \Z \K_N + \frac{\gamma}{c} \I_n \right)^{-1} \U = \bDelta(\gamma) + O_{\| \cdot \|}(n^{-\frac12}) \label{eq:def_Delta},
\end{align}
with $\bLambda = \begin{bmatrix}
   \mathbf{0}_3 & \mathbf{0}_3 & \bSigma_\K \\ 
   \mathbf{0}_3 & \bSigma_\X & [\bLambda]_{2,3} \\ 
   \bSigma_\K^\top & [\bLambda]_{2,3}^\top & [\bLambda]_{3,3} \\ 
\end{bmatrix} \in \RR^{9 \times 9}$ and $\bDelta(\gamma) = 
\begin{bmatrix}
   [\bDelta(\gamma)]_{1,1} & [\bDelta(\gamma)]_{1,2} & [\bDelta(\gamma)]_{1,3} \\ 
   [\bDelta(\gamma)]_{1,2}^\top & [\bDelta(\gamma)]_{2,2} & [\bDelta(\gamma)]_{2,3} \\ 
   [\bDelta(\gamma)]_{1,3}^\top & [\bDelta(\gamma)]_{2,3}^\top & [\bDelta(\gamma)]_{3,3} \\ 
\end{bmatrix} \in \CC^{9 \times 9} $ both three-by-three block symmetric matrices with corresponding blocks given by
\begin{align*}
   [\bLambda]_{2,3} &= a_1 \begin{bmatrix}
         (\| \bmu \|^2 + 1) T_1 & \| \bmu \|^2 & \bmu^\top \w_K (\| \bmu \|^2 + 1) \\ 
         T_1 & 1 & \bmu^\top \w_K \\ 
         0 & 0 & 0
      \end{bmatrix} \\
   [\bLambda]_{3,3} &= a_1^2 \begin{bmatrix}
         \frac{2 + c + \| \bmu \|^2}c T_1^2 + \frac{1+c}c T_1 + \frac{1+c}c \bmu^\top \w_Q \left( \bmu^\top \w_K  +  \bmu^\top \w_Q \| \w_K \|^2 \right)  & (*) & (*) \\ 
         \frac{1 + c + \| \bmu \|^2}c T_1 & 1 + \frac{\| \bmu \|^2}c & \frac{1+c+\| \bmu \|^2}c \bmu^\top \w_K \\ 
         \frac{2 + c + \| \bmu \|^2}c \bmu^\top \w_K T_1 + \frac{1+c}c (\bmu^\top \w_K + \bmu^\top \w_Q \| \w_K \|^2 ) & (*) & \frac{2+c+\| \bmu \|^2}c (\bmu^\top \w_K)^2  + \frac{1+c}c \| \w_K \|^2
      \end{bmatrix}
\end{align*}
for
\begin{equation}
   T_1 = \| \bmu \|^2 + \bmu^\top \w_K \bmu^\top \w_Q,
\end{equation}
and $\bSigma_\K \equiv a_1 \left[ \begin{smallmatrix} T_1 & 1 & \bmu^\top \w_K \\ 1 & 0 & 0 \\ \bmu^\top \w_Q & 0 & 1  \end{smallmatrix}\right] \in \RR^{3 \times 3}$ defined in \eqref{eq:def_Sigma_K} of \Cref{lem:linearization_K_X}, $\bSigma_\X \equiv c \Big[
\begin{smallmatrix}
   \| \bmu \|^2 & 1 & 0 \\ 
   1 & 0 & 0 \\ 
   0 & 0 & 0
\end{smallmatrix} \Big] \in \RR^{3 \times 3}$, as well as
% \begin{equation}
%    \btheta_K = \begin{bmatrix} \bmu^\top \w_K & 0 & 1 \end{bmatrix}^\top \in \RR^3, \quad \btheta_Q = \begin{bmatrix} \bmu^\top \w_Q & 0 & 1 \end{bmatrix}^\top \in \RR^3,
% \end{equation}
% so that 
% \begin{equation}
%    \bSigma_\K = \frac{a_1}c \bSigma_\X + a_1 \btheta_K \btheta_Q^\top,
% \end{equation}
% and
\begin{align*}
   [\bDelta(\gamma)]_{1,1} &= \begin{bmatrix}
      c^2 \delta_4(\gamma) & 0 & 0 \\
      0 & c^2 \| \bmu \|^2 \delta_7(\gamma) & c^2 \bmu^\top \w_K \delta_7(\gamma) \\
      0 & (*) & c^2 \| \w_K \|^2 \delta_7(\gamma) &  \\
   \end{bmatrix} \in \RR^{3 \times 3} \\
   [\bDelta(\gamma)]_{1,2} &= \begin{bmatrix}
      1 - \frac{\gamma}{c} m(\gamma) & 0 & 0 \\
      0 & c \| \bmu \|^2 \delta_4(\gamma) & c \bmu^\top \w_K \delta_4(\gamma) \\
      0 & c \bmu^\top \w_K \delta_4(\gamma) & c \| \w_K \|^2 \delta_4(\gamma) \\
   \end{bmatrix} \in \RR^{3 \times 3} \\
   [\bDelta(\gamma)]_{1,3} &= \begin{bmatrix}
      c \delta_2(\gamma) & 0 & 0 \\
      0 & c \| \bmu \|^2 \delta_6(\gamma) & c \bmu^\top \w_Q \delta_6(\gamma) \\
      0 & c \bmu^\top \w_K \delta_6(\gamma) & c \w_K^\top \w_Q \delta_6(\gamma) \\
   \end{bmatrix} \in \RR^{3 \times 3} \\
   [\bDelta(\gamma)]_{2,2} &= \begin{bmatrix}
      \delta_3(\gamma) & 0 & 0 \\
      0 & \frac1c \| \bmu \|^2 (1 - \frac{\gamma}{c} m(\gamma)) & \frac1c \bmu^\top \w_K (1 - \frac{\gamma}{c} m(\gamma)) \\
      0 & (*) & \frac1c \| \w_K \|^2 (1 - \frac{\gamma}{c} m(\gamma)) \\
   \end{bmatrix} \in \RR^{3 \times 3} \\
   [\bDelta(\gamma)]_{2,3} &= \begin{bmatrix}
      \delta_1(\gamma) & 0 & 0 \\
      0 & \| \bmu \|^2 \delta_2(\gamma) & \bmu^\top \w_Q \delta_2(\gamma) \\
      0 & \bmu^\top \w_K \delta_2(\gamma) & \w_K^\top \w_Q \delta_2(\gamma) \\
   \end{bmatrix} \in \RR^{3 \times 3} \\
   [\bDelta(\gamma)]_{3,3} &= \begin{bmatrix}
        \frac1c m(\gamma) & 0 & 0 \\ 0 & \| \bmu \|^2 \delta_5(\gamma) & \bmu^\top \w_Q \delta_5(\gamma) \\ 0 & (*) & \| \w_Q \|^2 \delta_5(\gamma)
    \end{bmatrix} \in \RR^{3 \times 3},
\end{align*}
for $\delta_1(\gamma), \delta_2(\gamma), \delta_3(\gamma), \delta_4(\gamma)$ as defined in \eqref{eq:def_m_and_deltas_SM} of the proof of \Cref{prop:DE_resovlent_noise}, and 
\begin{equation}\label{eq:delta5_delta6_delta6}
   \left\{
    \begin{array}{ll}
    %%%
    c\delta_5(\gamma) &= m(\gamma) \left(1 - \bv_2^\top \mathbf{T}(\gamma) \bv  \right) \\ 
    %%%
    c\delta_6(\gamma) 
    % &= \bv_4^\top \mathbf{T}(z) \bv_2 - m(z) \bv_4^\top \mathbf{T}(z) \bv \cdot \bv^\top \mathbf{T}(z) \bv_2 - m(z) \left( \frac{a_1}{c} \bv_2 + \bv_4 \right)^\top \mathbf{T}(z) \bv + \frac{a_1}{c} m(z) \\  
   & = \bv_4^\top \mathbf{T}(\gamma) \bv_2 + m(\gamma) (\bv_2^\top \mathbf{T}(\gamma) \bv -1 ) \left( \bv_4^\top \mathbf{T}(\gamma) \bv - \frac{a_1}c \right) \\ 
   %%%
   c\delta_7(\gamma) 
   % &= \bv_4^\top \mathbf{T}(z) \bv_7 + m(z) \bv_4^\top \mathbf{T}(z) \bv \cdot \bv^\top \mathbf{T}(z) \bv_7 - \frac{a_1}{c} m(z) \left( \left( \frac{1}{c} + 2 \right) \bv_4 + \bv_7 \right)^\top \mathbf{T}(z) \bv + \frac{a_1^2}{c^2} \left( \frac{1}{c} + 2 \right) m(z) \\
   &= \bv_4^\top \mathbf{T}(\gamma) \bv_7 + m(\gamma) \left( \bv_4^\top \mathbf{T}(\gamma) \bv - \frac{a_1}c \right) \left( \bv_7^\top \mathbf{T}(\gamma) \bv  - \frac{a_1}c \left( 2 + \frac1c \right) \right)
    \end{array}
  \right.
\end{equation}
with
\begin{equation}
   \bv_7^\top = \begin{bmatrix} 2 \frac{a_1}{c} + \frac{a_1}{c^2} & \frac{1}{c} + 1 & \frac{1}{c} + 1 & 0 & 1 & 0 \end{bmatrix} \in \RR^6.
\end{equation}
\end{Lemma}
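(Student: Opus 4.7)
The plan is to verify the two estimates blockwise. For Part 1, the claim $\bSigma = \bLambda + O_{\|\cdot\|}(n^{-1/2})$ is a direct concentration argument: the nontrivial blocks of $\bSigma$ are inner products built from $\y$ and $\Z^\top\bmu, \Z^\top\w_K, \Z^\top\w_Q$, possibly sandwiching $\tfrac{1}{n}\Z^\top\Z$. Since $\y$ is independent of $\Z$, every such entry is either a quadratic form in the Gaussian matrix $\Z$ with deterministic weights, or reduces to $\|\y\|^2/p = n/p \to 1/c$. Hanson--Wright together with the law of large numbers for sample covariance yield concentration at rate $O(n^{-1/2})$, with limits matching $\bLambda$ (e.g., $\bmu^\top\Z\Z^\top\w_K/p \to \bmu^\top\w_K$). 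Since $\bLambda$ is a fixed $9 \times 9$ matrix, entrywise convergence implies the claimed spectral-norm bound.

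Part 2, identifying $\bDelta(\gamma) \leftrightarrow \U^\top\Q(\gamma)\U$ with $\Q(\gamma) = (\K_N\Z^\top\Z\K_N/p + (\gamma/c)\I_n)^{-1}$, is the substantive work. The nine columns of $\U$ group naturally into three triples, each built from one of the directions $\y, \Z^\top\bmu, \Z^\top\w_K$ (or $\Z^\top\w_Q$), acted on by $\I_n$, $\K_N$, or $\tfrac{1}{n}\K_N\Z^\top\Z$. Entries of $\U^\top\Q(\gamma)\U$ involving the label vector $\y$ reduce via the trace lemma, using independence of $\y$ from $(\Z, \K_N, \Q(\gamma))$, to the five scalars $m(\gamma), \delta_1(\gamma), \ldots, \delta_4(\gamma)$ already established in the proof of \Cref{prop:DE_resovlent_noise}; for example, the $(1,1)$ entry of $[\bDelta(\gamma)]_{2,2}$ comes from $\y^\top\K_N\Q(\gamma)\K_N\y/p = \frac{1}{p}\tr(\K_N\Q(\gamma)\K_N) + O(n^{-1/2}) = \delta_3(\gamma) + O(n^{-1/2})$. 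The cross entry $1 - (\gamma/c)m(\gamma)$ in $[\bDelta(\gamma)]_{1,2}$ follows similarly from the resolvent identity $\tfrac{1}{n}\tr(\K_N\Z^\top\Z\K_N\Q(\gamma)) = 1 - (\gamma/c) m(\gamma) + o(1)$, i.e., the ``$\Q\Q^{-1} = \I_n$'' trick already used in \Cref{subsec:proof_of_prop:DE_resovlent_noise}.

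The genuinely new work concerns entries where neither factor contains $\y$; independence then fails because $\Z^\top\bmu$ and $\Q(\gamma)$ share the same underlying $\Z$. The three new scalars are $\delta_5(\gamma)\,\|\bmu\|^2 \leftrightarrow \bmu^\top\Z\Q(\gamma)\Z^\top\bmu/p$, $\delta_6(\gamma)\,\|\bmu\|^2 \leftrightarrow \bmu^\top\Z\Z^\top\Z\K_N\Q(\gamma)\Z^\top\bmu/(np)$, and $\delta_7(\gamma)\,\|\bmu\|^2 \leftrightarrow \bmu^\top\Z\Z^\top\Z\K_N\Q(\gamma)\K_N\Z^\top\Z\Z^\top\bmu/(n^2 p)$. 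The block structure of $\bDelta(\gamma)$ is dictated by Gram algebra: substituting $\w_K$ or $\w_Q$ for $\bmu$ rescales the prefactor $\|\bmu\|^2$ by the appropriate cross inner product, which is precisely why the lower $2\times 2$ corner of each $3\times 3$ block of $\bDelta(\gamma)$ is rank one. To derive closed-form expressions for $\delta_5, \delta_6, \delta_7$, I would append $\Z^\top\bmu/\sqrt p$ (and then $\Z^\top\w_K/\sqrt p, \Z^\top\w_Q/\sqrt p$) as extra columns of the leave-one-out observation matrix $\U_0$ in \eqref{eq:def_U_SM}, and rerun the Woodbury-plus-block-inverse argument of Lemmas~\ref{lem:approx_U_Q_perp_U}--\ref{lem:approx_Q}. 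Each new $\delta_\ell(\gamma)$ then emerges as an entry of the enlarged $\bDelta_0(\gamma)(\I_6 + \bLambda_0\bDelta_0(\gamma))^{-1}$ evaluated against the auxiliary vectors $\bv_2, \bv_4, \bv_7$ that appear in \eqref{eq:delta5_delta6_delta6}.

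The main obstacle will be bookkeeping: one must verify that this extended family of quadratic forms closes and introduces no new auxiliary traces beyond $\delta_5, \delta_6, \delta_7$. The cleanest device is to decompose $\Z$ along and orthogonal to $\mathrm{span}(\bmu, \w_K, \w_Q)$; conditional on this (at most three-dimensional) projection, the orthogonal residual is an independent Gaussian matrix, and the interactions of the deterministic vectors with $\K_N$ and $\Q(\gamma)$ become finite-rank perturbations absorbed by Woodbury. Once the closed-form identities for $\delta_5, \delta_6, \delta_7$ are in place, the $O_{\|\cdot\|}(n^{-1/2})$ spectral-norm bound for the $9\times 9$ error matrix follows from entrywise concentration (Hanson--Wright for quadratic forms, Gaussian--Lipschitz concentration for smooth functionals of $\Z$, and \Cref{lem:kernel_norm_control} for $\K_N$), exactly as in the proof of \Cref{prop:DE_resovlent_noise}.
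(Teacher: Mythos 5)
Your reading of the block structure is correct, and you have correctly spotted that the $\y$-dependent entries reduce to $m,\delta_1,\ldots,\delta_4$ by the trace lemma, that the entry $1-\frac{\gamma}{c}m(\gamma)$ comes from the $\Q\Q^{-1}=\I_n$ identity, and that the genuinely new content is the three scalars $\delta_5,\delta_6,\delta_7$ governing bilinear forms such as $\frac{1}{p}\bmu^\top\Z\Q_0\Z^\top\bmu$. That part of the plan matches the paper. Two smaller issues first: your normalizations for $\delta_6,\delta_7$ are off by factors of $c$ and $c^2$ respectively --- the paper's Lemma on further Deterministic Equivalents gives $\frac{1}{p}\Z\frac{1}{n}\Z^\top\Z\K_N\Q_0\Z^\top\leftrightarrow c\,\delta_6\I_p$ and $\frac{1}{p}\Z\frac{1}{n}\Z^\top\Z\K_N\Q_0\frac{1}{n}\K_N\Z^\top\Z\Z^\top\leftrightarrow c^2\delta_7\I_p$. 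Also, the lower $2\times 2$ corner of each block of $\bDelta(\gamma)$ is \emph{not} rank one in general: it is a scalar times a $2\times 2$ Gram matrix of two of $\bmu,\w_K,\w_Q$, hence rank two unless those vectors are parallel (they happen to be parallel in the numerical experiments, which may have misled you).

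The more substantive divergence is your route to $\delta_5,\delta_6,\delta_7$. You propose appending $\Z^\top\bmu/\sqrt p$ (and $\Z^\top\w_K/\sqrt p$, $\Z^\top\w_Q/\sqrt p$) as new columns of the leave-one-out basis $\U_0$ and rerunning the Woodbury argument. This has a real obstruction: the Woodbury/block-inversion device in \Cref{lem:approx_U_Q_perp_U}--\Cref{lem:approx_Q} hinges on $\Q_{-i}^\perp$ being asymptotically independent of the columns of $\U_0$, all of which were built from $\balpha_{-i}=\Z_{-i}^\top\z_i/\sqrt p$ and its transforms by the residual kernel matrices $\K_{-i}^\perp,\check\Z_{-i}^\perp$. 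The vector $\Z^\top\bmu$ depends on \emph{all} of $\Z$, in particular on the orthogonal components $\z_j^\perp$ that build $\Q_{-i}^\perp$, so the crucial independence fails and the expectation $\U_0^\top\Q_{-i}^\perp\U_0\approx\bDelta_0$ would acquire uncontrolled cross terms. The paper sidesteps this entirely: it first uses rotational invariance / trace matching to show that $\frac{1}{p}\Z\Q_0\Z^\top$ has a deterministic equivalent proportional to $\I_p$, with proportionality constant $\delta_5=\frac{1}{p}\tr(\Q_0\check\Z)$ (and analogously for $\delta_6,\delta_7$), so the deterministic directions $\bmu,\w_K,\w_Q$ enter only as prefactors via $\bmu^\top\I_p\w_Q=\bmu^\top\w_Q$. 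It then derives the self-consistent equations for the \emph{traces} $\delta_5,\delta_6,\delta_7$ using the unchanged $6\times 6$ machinery, introducing only new auxiliary vectors $\bv_2,\bv_4,\bv_7\in\RR^6$ that encode which block of the leave-one-out decomposition the relevant factors of $\check\Z$ and $\K$ pick out. Your alternative ``conditioning on $\mathrm{span}(\bmu,\w_K,\w_Q)$'' idea could in principle be made to work --- the perturbation to $\K_N$ from the projected part of $\Z$ is controllable by the same Taylor argument as in \Cref{lem:linearization_K_X} and by Item~3 of \Cref{lem:operator_norm_control} --- but it is a genuinely different and heavier argument than the paper's, and it is not equivalent to the ``enlarge $\U_0$'' plan you actually wrote down.
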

\begin{proof}[Proof of \Cref{lem:further_approx}]

We first work on the approximation of $\bSigma$ defined in \eqref{eq:def_Sigma}, for which we exploit the following concentration results:
\begin{align*}
   \U_K^\top \U_K & = \frac1c \begin{bmatrix}
   1 & 0 & 0 \\    
   0 & \| \bmu \|^2 & \bmu^\top \w_K \\ 
   0 & \bmu^\top \w_K & \| \w_K \|^2 
   \end{bmatrix} + O_{\| \cdot \|}(n^{-\frac12}), \\ 
   % \equiv \frac1c \begin{bmatrix}
   % 1 & 0 \\    
   % 0 & \U_2^\top \U_2  
   % \end{bmatrix} + O_{\| \cdot \|}(n^{-\frac12}), \\
   %%% 
   \frac1n \U_K^\top \Z^\top \Z \U_K &= \begin{bmatrix}
   1 & 0 & 0 \\    
   0 & \frac{1+c}c \| \bmu \|^2 & \frac{1+c}c \bmu^\top \w_K \\ 
   0 & \frac{1+c}c \bmu^\top \w_K & \frac{1+c}c \| \w_K \|^2 
   \end{bmatrix} + O_{\| \cdot \|}(n^{-\frac12}),
   % \equiv \begin{bmatrix}
   % 1 & 0 \\    
   % 0 & \frac{1+c}c \U_2^\top \U_2
   % \end{bmatrix} + O_{\| \cdot \|}(n^{-\frac12}),
\end{align*}
% with $\U_2 = \begin{bmatrix} \bmu & \w_K \end{bmatrix} \in \RR^{p \times 2}$ (
where we used the Gaussian moments, we thus get
\begin{equation}
   \U_K^\top \frac1n \Z^\top \Z \U_K + \U_K^\top \U_K \bSigma_\X \U_K^\top \U_K = \begin{bmatrix}
      1 +  \| \bmu \|^2/c &  \| \bmu \|^2/c &  \bmu^\top \w_K/c \\
      \| \bmu \|^2/c & \frac{1+c}c \| \bmu \|^2 & \frac{1+c}c \bmu^\top \w_K \\
      \bmu^\top \w_K/c & \frac{1+c}c \bmu^\top \w_K & \frac{1+c}c \| \w_K \|^2 
   \end{bmatrix} + O_{\| \cdot \|}(n^{-\frac12}),
\end{equation}
and therefore $\bSigma = \bLambda + O_{\| \cdot \|}(n^{-\frac12})$ with 
\begin{equation}
   \bSigma = \begin{bmatrix}
      \zo_3 & \zo_3 & \bSigma_\K \\ 
      \zo_3 & \bSigma_\X & [\bLambda]_{2,3} \\ 
      \bSigma_\K^\top & [\bLambda]_{2,3}^\top & [\bLambda]_{3,3} \\ 
   \end{bmatrix},
\end{equation}
and
\begin{align*}
   [\bLambda]_{2,3} &= a_1 \begin{bmatrix}
         (\| \bmu \|^2 + 1) T_1 & \| \bmu \|^2 & \bmu^\top \w_K (\| \bmu \|^2 + 1) \\ 
         T_1 & 1 & \bmu^\top \w_K \\ 
         0 & 0 & 0
      \end{bmatrix} \\
   [\bLambda]_{3,3} &= a_1^2 \begin{bmatrix}
         \frac{2 + c + \| \bmu \|^2}c T_1^2 + \frac{1+c}c T_1 + \frac{1+c}c \bmu^\top \w_Q \left( \bmu^\top \w_K  +  \bmu^\top \w_Q \| \w_K \|^2 \right)  & (*) & (*) \\ 
         \frac{1 + c + \| \bmu \|^2}c T_1 & 1 + \frac{\| \bmu \|^2}c & \frac{1+c+\| \bmu \|^2}c \bmu^\top \w_K \\ 
         \frac{2 + c + \| \bmu \|^2}c \bmu^\top \w_K T_1 + \frac{1+c}c (\bmu^\top \w_K + \bmu^\top \w_Q \| \w_K \|^2 ) & (*) & \frac{2+c+\| \bmu \|^2}c (\bmu^\top \w_K)^2  + \frac{1+c}c \| \w_K \|^2
      \end{bmatrix}.
   % [\bLambda]_{3,3} &= a_1^2 \begin{bmatrix}
   %       T_1^2 ( \frac{1+c}c + \| \bmu \|^2/c ) + T_1 ( T_2 + \frac{n}p \bmu^\top \w_K \bmu^\top \w_Q ) + (1 + \frac{n}p ) \bmu^\top \w_Q T_3 & T_1 T_2 & T_1 \bmu^\top \w_K (T_2 + \frac{n}p) + (1 + \frac{n}p ) T_3 \\ 
   %       T_1 T_2 & 1 + \frac{n}p \| \bmu \|^2 & T_2 \bmu^\top \w_K \\ 
   %       T_1 \bmu^\top \w_K (T_2 + \frac{n}p) + (1 + \frac{n}p ) T_3 & T_2 \bmu^\top \w_K & (T_2 + \frac{n}p) (\bmu^\top \w_K)^2  + (1 + \frac{n}p ) \| \w_K \|^2
   %    \end{bmatrix}.
\end{align*}
where we denote the shortcut $T_1 = \| \bmu \|^2 + \bmu^\top \w_K \bmu^\top \w_Q$, 
This concludes the proof of the approximation of $\bSigma$ in \Cref{lem:further_approx}.

We then proceed to the approximation of $\U^\top \left(  \frac1n \K_N \Z^\top \Z \K_N + \gamma \I_n \right)^{-1} \U$.
Note that for $\U$ defined in \eqref{eq:def_U} and 
\begin{equation}\label{eq:def_Q0}
   \Q_0 \equiv \left(  \frac1p \K_N \Z^\top \Z \K_N + \frac{\gamma}{c} \I_n \right)^{-1},
\end{equation}
we have 
\begin{equation}
   \U^\top \Q_0 \U = \begin{bmatrix}
      \U_K^\top \frac1n \Z^\top \Z \K_N \Q_0 \frac1n \K_N \Z^\top \Z \U_K & \U_K^\top \frac1n \Z^\top \Z \K_N \Q_0 \K_N \U_K & \U_K^\top \frac1n \Z^\top \Z \K_N \Q_0 \V_Q \\ 
      \U_K^\top \K_N \Q_0 \frac1n \K_N \Z^\top \Z \U_K & \U_K^\top \K_N \Q_0 \K_N \U_K & \U_K^\top \K_N \Q_0 \V_Q \\ 
      \V_Q^\top \Q_0 \frac1n \K_N \Z^\top \Z \U_K & \V_Q^\top \Q_0 \K_N \U_K & \V_Q^\top \Q_0 \V_Q
   \end{bmatrix} \in \RR^{9 \times 9},
\end{equation}
which writes as a three-by-three block matrix, for $\U_K = [\y,~\Z^\top \bmu,~\Z^\top \w_K]/\sqrt p\in \RR^{n \times 3}$, $ \V_Q = [\y,~\Z^\top \bmu,~\Z^\top \w_Q]/\sqrt p \in \RR^{n \times 3}$ as in \Cref{lem:linearization_K_X}. 

In the following, we further evaluate the nine (in fact six by symmetry) blocks of $\U^\top \Q_0 \U$, in the limit of $n,p \to \infty$ with $ p/n \to c \in (0, \infty)$.
To that end, we need the following intermediate results.

\begin{Lemma}[Further Deterministic Equivalents]\label{lem:1}
Under the same settings and notations as in \Cref{prop:DE_resovlent_noise}, we have the following Deterministic Equivalent results (in the sense of \Cref{def:DE})
\begin{align*}
   \frac1{n^2} \Z^\top \Z \K_N \Q_0 \K_N \Z^\top \Z & \leftrightarrow c^3 \delta_4(\gamma) \cdot \I_n, \\
   %%%
   \frac1n \Z^\top \Z \K_N \Q_0 \K_N &\leftrightarrow \left(c - \gamma m(\gamma) \right) \cdot \I_n, \\
   %%%%
   \frac1n \Z^\top \Z \K_N \Q_0  & \leftrightarrow c^2 \delta_2(\gamma) \cdot \I_n, \\
   %%%
   \K_N \Q_0 \K_N  &\leftrightarrow c \delta_3(\gamma) \cdot \I_n, \\
   %%%
   \K_N \Q_0 & \leftrightarrow c \delta_1(\gamma) \cdot \I_n , \\
   %%%
   \frac1p \Z \frac1n \Z^\top \Z \K_N \Q_0 \frac1n \K_N \Z^\top \Z \Z^\top & \leftrightarrow  c^2 \delta_7(\gamma) \cdot \I_p, \\
   %%%
   \frac1p \Z \frac1n \Z^\top \Z \K_N \Q_0 \K_N \Z^\top & \leftrightarrow c \delta_4(\gamma) \cdot \I_p, \\
   %%%
   \frac1p \Z \frac1n \Z^\top \Z \K_N \Q_0 \Z^\top & \leftrightarrow c \delta_6(\gamma) \cdot \I_p, \\
   %%%
   \frac1p \Z \K_N \Q_0 \K_N \Z^\top &\leftrightarrow \frac1c \left(1 - \frac{\gamma}{c} m (\gamma ) \right) \cdot \I_p , \\
   %%%
   \frac1p \Z \K_N \Q_0 \Z^\top &\leftrightarrow \delta_2(\gamma) \cdot \I_p, \\
   %%%
   \frac1p \Z \Q_0 \Z^\top &\leftrightarrow \delta_5(\gamma) \cdot \I_p.
\end{align*}
\end{Lemma}

\begin{proof}[Proof of \Cref{lem:1}]
Note that for $\Q_0$ defined in \eqref{eq:def_Q0}, we have, by the proof of \Cref{prop:DE_resovlent_noise} in \Cref{subsec:proof_of_prop:DE_resovlent_noise}., the following Deterministic Equivalent results.
\begin{align*}
   \frac1{n^2} \Z^\top \Z \K_N \Q_0 \K_N \Z^\top \Z & \leftrightarrow \frac1{n^2} \EE[\Z^\top \Z \K_N \Q_0 \K_N \Z^\top \Z] \leftrightarrow \frac{p^3}{n^3} \tr \frac{1}{p} \left( \frac{1}{p} \Z^\top \Z \K_N \Q_0 \K_N \frac{1}{p} \Z^\top \Z  \right) \cdot \I_n \leftrightarrow c^3 \delta_4(\gamma) \cdot \I_n, \\
   %%%
   \frac1n \Z^\top \Z \K_N \Q_0 \K_N &\leftrightarrow \frac1n \EE[\Z^\top \Z \K_N \Q_0 \K_N] \leftrightarrow  \frac{p}{n} \frac1n \tr \left( \frac1p \Z^\top \Z \K_N \Q_0 \K_N \right) \cdot \I_n \leftrightarrow \left(c - \gamma m(\gamma) \right) \cdot \I_n, \\
   %%%%
   \frac1n \Z^\top \Z \K_N \Q_0  & \leftrightarrow \frac1n \EE[\Z^\top \Z \K_N \Q_0] \leftrightarrow \frac{p^2}{n^2} \frac1p \tr \left( \frac1p \Z^\top \Z \K_N \Q_0 \right)\cdot \I_n  \leftrightarrow c^2 \delta_2(\gamma) \cdot \I_n, \\
   %%%
   \K_N \Q_0 \K_N  &\leftrightarrow \EE[\K_N \Q_0 \K_N ] \leftrightarrow \frac{p}{n} \frac1p \tr \left( \K_N \Q_0 \K_N \right) \cdot \I_n \leftrightarrow c \delta_3(\gamma) \cdot \I_n, \\
   %%%
   \K_N \Q_0 &\leftrightarrow \EE[\K_N \Q_0] \leftrightarrow \frac{p}{n} \frac1p \tr \left( \K_N \Q_0 \right) \cdot \I_n \leftrightarrow c \delta_1(\gamma) \cdot \I_n , \\
   %%%
   \Q_0 &\leftrightarrow m(\gamma) \cdot \I_n.
\end{align*}
Similarly, we have
\begin{align*}
   \frac1p \Z \frac1n \Z^\top \Z \K_N \Q_0 \frac1n \K_N \Z^\top \Z \Z^\top & \leftrightarrow \frac{p^2}{n^2} \frac{1}{p} \tr \left( \frac1p \Z^\top \Z \K_N \Q_0 \frac1p \K_N \Z^\top \Z \frac1p \Z^\top \Z \right) \cdot \I_p \leftrightarrow c^2 \delta_7(\gamma) \cdot \I_p, \\
   %%%
   \frac1p \Z \frac1n \Z^\top \Z \K_N \Q_0 \K_N \Z^\top &\leftrightarrow \frac{p}{n} \frac1p \tr \left( \frac1p \Z^\top \Z \K_N \Q_0 \K_N \frac1p \Z^\top \Z \right) \cdot \I_p  \leftrightarrow c \delta_4(\gamma) \cdot \I_p, \\
   %%%
   \frac1p \Z \frac1n \Z^\top \Z \K_N \Q_0 \Z^\top &\leftrightarrow \frac{p}{n} \frac1p \tr \left( \frac1p \Z^\top \Z \K_N \Q_0 \frac1p \Z^\top \Z \right) \cdot \I_p \leftrightarrow c \delta_6(\gamma) \cdot \I_p, \\
   %%%
   \frac1p \Z \K_N \Q_0 \K_N \Z^\top &\leftrightarrow \frac1p \tr \left( \K_N \Q_0 \K_N \frac1p \Z^\top \Z \right) \cdot \I_p \leftrightarrow \frac1c \left(1 - \frac{\gamma}{c} m (\gamma ) \right) \cdot \I_p , \\
   %%%
   \frac1p \Z \K_N \Q_0 \Z^\top &\leftrightarrow \frac1p \tr \left( \K_N \Q_0 \frac1p \Z^\top \Z \right) \cdot \I_p \leftrightarrow \delta_2(\gamma) \cdot \I_p, \\
   %%%
   \frac1p \Z \Q_0 \Z^\top &\leftrightarrow \frac{1}{p} \tr \left( \Q_0 \frac1p \Z^\top \Z \right) \cdot \I_p \leftrightarrow \delta_5(\gamma) \cdot \I_p,
\end{align*}
for $\delta_5(\gamma),\delta_6(\gamma),\delta_7(\gamma)$ as defined in \eqref{eq:delta5_delta6_delta6}.

To complete the proof of \Cref{lem:1}, we establish, in the following as similar to \Cref{ssub:self_consistent_equation_for_deltas}, self-consistent equations for $\delta_5(\gamma),\delta_6(\gamma)$ and $\delta_7(\gamma)$.

\paragraph{Self-consistent equation for $\delta_5(\gamma)$.}

Consider $\delta_5(\gamma) = \frac{1}{p} \tr( \Q \check \Z) + o(1)$ and write
\begin{align*}
   \delta_5(\gamma) &= \frac{1}{p} \tr( \Q \check \Z) + o(1) = \frac1{c} [\Q \check \Z]_{ii} + o(1) \\ 
    &= \frac{1}{c} \begin{bmatrix}
        - m(\gamma) \bv^\top \left( \I_6 + \bDelta_0(\gamma) \bLambda_0 \right)^{-1} \U^\top \Q_{-i}^\perp & [\Q]_{ii}  
    \end{bmatrix} \begin{bmatrix}
        \balpha_{-i}/\sqrt{p} \\ 1
    \end{bmatrix} + o(1) \\ 
    &= \frac{1}{c} \left( - m(\gamma) \bv^\top \left( \I_6 + \bDelta_0(\gamma) \bLambda_0 \right)^{-1} \U^\top \Q_{-i}^\perp \U \bv_2 + m(\gamma) \right) + o(1) \\
    &= \frac{m(\gamma)}{c} \left( - \bv^\top \left( \I_6 + \bDelta_0(\gamma) \bLambda_0 \right)^{-1} \bDelta_0(\gamma) \bv_2 + 1 \right) + o(1) \\
    &= \frac{m(\gamma)}{c} \left( - \bv_2^\top \bDelta_0(\gamma) (\I_6 +  \bLambda_0 \bDelta_0(\gamma) )^{-1} \bv + 1 \right) + o(1) \\ 
    &= \frac{m(\gamma)}{c} \left( 1 - \bv_2^\top \mathbf{T}(\gamma) \bv \right) + o(1).
\end{align*}

\paragraph{Self-consistent equation for $\delta_6(\gamma)$.}

Consider now $\delta_6(\gamma) = \frac{1}{p} \tr(\check \Z \K \Q \check \Z) + o(1)$ and write
\begin{align*}
    \delta_6(\gamma) &= \frac{1}{p} \tr(\check \Z \K \Q \check \Z) + o(1) = \frac1{c} [\check \Z \K \Q \check \Z]_{ii} + o(1) \\
    &= \frac{1}{c} \begin{bmatrix}
        \bv_4^\top \U^\top & \frac{a_1}{c}
    \end{bmatrix} \begin{bmatrix}
        \Q_{-i}^\perp - \Q_{-i}^\perp \U ( \I_6 + \bLambda_1(\gamma) \bDelta_0(\gamma) )^{-1} \bLambda_1(\gamma) \U^\top \Q_{-i}^\perp & -m(\gamma) \Q_{-i}^\perp \U \left( \I_6 + \bLambda_0 \bDelta_0(\gamma) \right)^{-1} \bv \\ 
   - m(\gamma) \bv^\top \left( \I_6 + \bDelta_0(\gamma) \bLambda_0 \right)^{-1} \U^\top \Q_{-i}^\perp & [\Q]_{ii}
    \end{bmatrix} \begin{bmatrix}
        \U \bv_2 \\ 1
    \end{bmatrix} + o(1) \\
    &= \frac{1}{c} \left( \bv_4^\top \bDelta_0(\gamma) (\I_6 + \bLambda_1(\gamma) \bDelta_0(\gamma))^{-1} \bv_2 - m(\gamma) \bv_4^\top \bDelta_0(\gamma) (\I_6 + \bLambda_0 \bDelta_0(\gamma))^{-1} \bv \right. \\
    & \left. - \frac{a_1}{c} m(\gamma) \bv^\top (\I_6 + \bDelta_0(\gamma) \bLambda_0)^{-1} \bDelta_0(\gamma) \bv_2 + \frac{a_1}{c} m(\gamma) \right) + o(1) \\
    &= \frac{1}{c} \left( \bv_4^\top \bDelta_0(\gamma) (\I_6 + \bLambda_0 \bDelta_0(\gamma))^{-1} \bv_2 + m(\gamma) \bv_4^\top \bDelta_0(\gamma) (\I_6 + \bLambda_0 \bDelta_0(\gamma))^{-1} \bv \times \bv^\top \bDelta_0(\gamma) (\I_6 + \bLambda_0 \bDelta_0(\gamma))^{-1} \bv_2 \right.\\
    &\left.- m(\gamma) \left( \frac{a_1}{c} \bv_2 + \bv_4 \right)^\top \bDelta_0(\gamma) (\I_6 + \bLambda_0 \bDelta_0(\gamma))^{-1} \bv + \frac{a_1}{c} m(\gamma) \right) + o(1) \\ 
    %%%
    &= \frac1c \left( \bv_4^\top \mathbf{T}(\gamma) \bv_2 + m(\gamma) (\bv_2^\top \mathbf{T}(\gamma) \bv -1 ) \left( \bv_4^\top \mathbf{T}(\gamma) \bv - \frac{a_1}c \right) \right) + o(1).
\end{align*}

\paragraph{Self-consistent equation for $\delta_7(\gamma)$.}

Consider $\delta_7(\gamma) = \frac{1}{p} \tr(\check \Z \K \Q \K \check \Z \check \Z) + o(1)$ and write
\begin{align*}
    \delta_7(\gamma) &= \frac{1}{p} \tr(\check \Z \K \Q \K \check \Z \check \Z) + o(1) = \frac1{c} [\check \Z \K \Q \K \check \Z \check \Z]_{ii} + o(1) \\
    &= \frac{1}{c} \begin{bmatrix}
        \bv_4^\top \U^\top & \frac{a_1}{c}
    \end{bmatrix} \begin{bmatrix}
        \Q_{-i}^\perp - \Q_{-i}^\perp \U ( \I_6 + \bLambda_1(\gamma) \bDelta(\gamma) )^{-1} \bLambda_1(\gamma) \U^\top \Q_{-i}^\perp & -m(\gamma) \Q_{-i}^\perp \U \left( \I_6 + \bLambda_0 \bDelta(\gamma) \right)^{-1} \bv \\ 
   - m(\gamma) \bv^\top \left( \I_6 + \bDelta(\gamma) \bLambda_0 \right)^{-1} \U^\top \Q_{-i}^\perp & [\Q]_{ii}
    \end{bmatrix} \begin{bmatrix}
        \U \bv_7 \\ 2 \frac{a_1}{c} + \frac{a_1}{c^2}
    \end{bmatrix} + o(1) \\
    &= \frac{1}{c} \left( \bv_4^\top \bDelta(\gamma) (\I_6 + \bLambda_1(\gamma) \bDelta(\gamma))^{-1} \bv_7 - \left( 2 \frac{a_1}{c} + \frac{a_1}{c^2} \right) m(\gamma) \bv_4^\top \bDelta(\gamma) (\I_6 + \bLambda_0 \bDelta(\gamma))^{-1} \bv \right.\\
    &\left. - \frac{a_1}{c} m(\gamma) \left( \bv^\top (\I_6 + \bDelta(\gamma) \bLambda_0)^{-1} \bDelta(\gamma) \bv_7 - \left( 2 \frac{a_1}{c} + \frac{a_1}{c^2} \right) \right) \right) + o(1) \\
    &= \frac{1}{c} \left( \bv_4^\top \bDelta (\I_6 + \bLambda_0 \bDelta(\gamma))^{-1} \bv_7 + m(\gamma) \bv_4^\top \bDelta (\I_6 + \bLambda_0 \bDelta(\gamma))^{-1} \bv \bv^\top \bDelta (\I_6 + \bLambda_0 \bDelta(\gamma))^{-1} \bv_7 \right.\\
    &\left. - \frac{a_1}{c} m(\gamma) \left( \left( \frac{1}{c} + 2 \right) \bv_4 + \bv_7 \right)^\top \bDelta(\gamma) (\I_6 + \bLambda_0 \bDelta(\gamma))^{-1} \bv + \frac{a_1^2}{c^2} \left( \frac{1}{c} + 2 \right) m(\gamma) \right) + o(1) .
\end{align*} 
With these self-consistent equations for $\delta_5(\gamma),\delta_6(\gamma), \delta_7(\gamma)$, we conclude the proof of \Cref{lem:1}.
\end{proof}

With \Cref{lem:1} at hand, we are now ready to evaluate the blocks of $\U^\top \Q_0 \U$ as follows.

\paragraph{Approximation of the $(1,1)$ block of $\U^\top \Q_0 \U $.}
\begin{align*}
    &\U_K^\top \frac1n \Z^\top \Z \K_N \Q_0(\gamma) \frac1n \K_N \Z^\top \Z \U_K = \frac{1}{p} \begin{bmatrix}
        \y^\top \\ \bmu^\top \Z \\ \w_K^\top \Z
    \end{bmatrix} \frac1n \Z^\top \Z \K_N \Q_0 \frac1n \K_N \Z^\top \Z \begin{bmatrix}
        \y & \Z^\top \bmu & \Z^\top \w_K
    \end{bmatrix} \\
    & = \frac{1}{p} \begin{bmatrix}
        \y^\top \frac1n \Z^\top \Z \K_N \Q_0 \frac1n \K_N \Z^\top \Z \y & \y^\top \frac1n \Z^\top \Z \K_N \Q_0 \frac1n \K_N \Z^\top \Z \Z^\top \bmu & \y^\top \frac1n \Z^\top \Z \K_N \Q_0 \frac1n \K_N \Z^\top \Z \Z^\top \w_K \\
        \bmu^\top \Z \frac1n \Z^\top \Z \K_N \Q_0 \frac1n \K_N \Z^\top \Z \y & \bmu^\top \Z \frac1n \Z^\top \Z \K_N \Q_0 \frac1n \K_N \Z^\top \Z \Z^\top \bmu & \bmu^\top \Z \frac1n \Z^\top \Z \K_N \Q_0 \frac1n \K_N \Z^\top \Z \Z^\top \w_K \\
        \w_K^\top \Z \frac1n \Z^\top \Z \K_N \Q_0 \frac1n \K_N \Z^\top \Z \y & \w_K^\top \Z \frac1n \Z^\top \Z \K_N \Q_0 \frac1n \K_N \Z^\top \Z \Z^\top \bmu & \w_K^\top \Z \frac1n \Z^\top \Z \K_N \Q_0 \frac1n \K_N \Z^\top \Z \Z^\top \w_K
    \end{bmatrix} \\
    & = c^2 \begin{bmatrix}
        \delta_4(\gamma) & 0 & 0 \\ 0 & \| \bmu \|^2 \delta_7(\gamma) & \bmu^\top \w_K \delta_7(\gamma) \\ 0 & \bmu^\top \w_K \delta_7(\gamma) & \| \w_K \|^2 \delta_7(\gamma) 
    \end{bmatrix} + O_{\| \cdot \|}(n^{-1/2}), 
\end{align*}
where we used \Cref{lem:1} for the approximation in the last line.

\paragraph{Approximation of the $(1,2)$ block of $\U^\top \Q_0 \U $.}
\begin{align*}
    & \U_K^\top \frac1n \Z^\top \Z \K_N \Q_0(\gamma) \K_N \U_K = \frac{1}{p} \begin{bmatrix}
        \y^\top \\ \bmu^\top \Z \\ \w_K^\top \Z
    \end{bmatrix} \frac1n \Z^\top \Z \K_N \Q_0 \K_N \begin{bmatrix}
        \y & \Z^\top \bmu & \Z^\top \w_K
    \end{bmatrix} \\
    &= \frac{1}{p} \begin{bmatrix}
        \y^\top \frac1n \Z^\top \Z \K_N \Q_0 \K_N \y & \y^\top \frac1n \Z^\top \Z \K_N \Q_0 \K_N \Z^\top \bmu & \y^\top \frac1n \Z^\top \Z \K_N \Q_0 \K_N \Z^\top \w_K \\
        \bmu^\top \Z \frac1n \Z^\top \Z \K_N \Q_0 \K_N \y & \bmu^\top \Z \frac1n \Z^\top \Z \K_N \Q_0 \K_N \Z^\top \bmu & \bmu^\top \Z \frac1n \Z^\top \Z \K_N \Q_0 \K_N \Z^\top \w_K \\
        \w_K^\top \Z \frac1n \Z^\top \Z \K_N \Q_0 \K_N \y & \w_K^\top \Z \frac1n \Z^\top \Z \K_N \Q_0 \K_N \Z^\top \bmu & \w_K^\top \Z \frac1n \Z^\top \Z \K_N \Q_0 \K_N \Z^\top \w_K 
    \end{bmatrix} \\
    % &= \begin{bmatrix}
    %     1 - \gamma m(\gamma)  & 0 & 0 \\ 0 & c \| \bmu \|^2 \delta_4(\gamma) & c \bmu^\top \w_K \delta_4(\gamma) \\ 0 & c \bmu^\top \w_K \delta_4(\gamma) & c \| \w_K \|^2 \delta_4(\gamma) 
    % \end{bmatrix} + O_{\| \cdot \|}(n^{-1/2}), \\
    &= \begin{bmatrix}
        1 - \frac{\gamma}{c} m(\gamma)  & 0 & 0 \\ 0 & c \| \bmu \|^2 \delta_4(\gamma) & c \bmu^\top \w_K \delta_4(\gamma) \\ 0 & c \bmu^\top \w_K \delta_4(\gamma) & c \| \w_K \|^2 \delta_4(\gamma) 
    \end{bmatrix} + O_{\| \cdot \|}(n^{-1/2}),
\end{align*}
where we used \Cref{lem:1} for the approximation in the last line.

\paragraph{Approximation of the $(1,3)$ block of $\U^\top \Q_0 \U $.}
\begin{align*}
    & \U_K^\top \frac1n \Z^\top \Z \K_N \Q_0(\gamma) \V_Q = \frac{1}{p} \begin{bmatrix}
        \y^\top \\ \bmu^\top \Z \\ \w_K^\top \Z
    \end{bmatrix} \frac1n \Z^\top \Z \K_N \Q_0 \begin{bmatrix}
        \y & \Z^\top \bmu & \Z^\top \w_Q
    \end{bmatrix} \\
    &= \frac{1}{p} \begin{bmatrix}
        \y^\top \frac1n \Z^\top \Z \K_N \Q_0 \y & \y^\top \frac1n \Z^\top \Z \K_N \Q_0 \Z^\top \bmu & \y^\top \frac1n \Z^\top \Z \K_N \Q_0 \Z^\top \w_Q \\
        \bmu^\top \Z \frac1n \Z^\top \Z \K_N \Q_0 \y & \bmu^\top \Z \frac1n \Z^\top \Z \K_N \Q_0 \Z^\top \bmu & \bmu^\top \Z \frac1n \Z^\top \Z \K_N \Q_0 \Z^\top \w_Q \\
        \w_K^\top \Z \frac1n \Z^\top \Z \K_N \Q_0 \y & \w_K^\top \Z \frac1n \Z^\top \Z \K_N \Q_0 \Z^\top \bmu & \w_K^\top \Z \frac1n \Z^\top \Z \K_N \Q_0 \Z^\top \w_Q 
    \end{bmatrix} \\
    &= c \begin{bmatrix}
        \delta_2(\gamma) & 0 & 0 \\ 0 & \| \bmu \|^2 \delta_6(\gamma) & \bmu^\top \w_Q \delta_6(\gamma) \\ 0 & \bmu^\top \w_K \delta_6(\gamma) & \w_K^\top \w_Q \delta_6(\gamma) 
    \end{bmatrix} + O_{\| \cdot \|}(n^{-1/2}),
\end{align*}
where we used \Cref{lem:1} for the approximation in the last line.

\paragraph{Approximation of the $(2,2)$ block of $\U^\top \Q_0 \U $.}
\begin{align*}
    &\U_K^\top \K_N \Q_0(\gamma) \K_N \U_K = \frac{1}{p} \begin{bmatrix}
        \y^\top \\ \bmu^\top \Z \\ \w_K^\top \Z
    \end{bmatrix} \K_N \Q_0 \K_N \begin{bmatrix}
        \y & \Z^\top \bmu & \Z^\top \w_K
    \end{bmatrix} \\
    &= \frac{1}{p} \begin{bmatrix}
        \y^\top \K_N \Q_0 \K_N \y & \y^\top \K_N \Q_0 \K_N \Z^\top \bmu & \y^\top \K_N \Q_0 \K_N \Z^\top \w_K \\
        \bmu^\top \Z \K_N \Q_0 \K_N \y & \bmu^\top \Z \K_N \Q_0 \K_N \Z^\top \bmu & \bmu^\top \Z \K_N \Q_0 \K_N \Z^\top \w_K \\
        \w_K^\top \Z \K_N \Q_0 \K_N \y & \w_K^\top \Z \K_N \Q_0 \K_N \Z^\top \bmu & \w_K^\top \Z \K_N \Q_0 \K_N \Z^\top \w_K 
    \end{bmatrix} \\
    &= \frac1c \begin{bmatrix}
        c \delta_3(\gamma) & 0 & 0 \\ 0 & \| \bmu \|^2 ( 1 - \frac{\gamma}{c} m(\gamma) ) & \bmu^\top \w_K ( 1 - \frac{\gamma}{c} m(\gamma) ) \\ 0 & \bmu^\top \w_K ( 1 - \frac{\gamma}{c} m(\gamma) ) & \| \w_K \|^2 ( 1 - \frac{\gamma}{c} m(\gamma) )
    \end{bmatrix} + O_{\| \cdot \|}(n^{-1/2}),
\end{align*}
where we used \Cref{lem:1} for the approximation in the last line.

\paragraph{Approximation of the $(2,3)$ block of $\U^\top \Q_0 \U $.}
\begin{align*}
    &\U_K^\top \K_N \Q_0(\gamma) \V_Q = \frac{1}{p} \begin{bmatrix}
        \y^\top \\ \bmu^\top \Z \\ \w_K^\top \Z
    \end{bmatrix} \K_N \Q_0 \begin{bmatrix}
        \y & \Z^\top \bmu & \Z^\top \w_Q
    \end{bmatrix} \\
    &= \frac{1}{p} \begin{bmatrix}
        \y^\top \K_N \Q_0 \y & \y^\top \K_N \Q_0 \Z^\top \bmu & \y^\top \K_N \Q_0 \Z^\top \w_Q \\
        \bmu^\top \Z \K_N \Q_0 \y & \bmu^\top \Z \K_N \Q_0 \Z^\top \bmu & \bmu^\top \Z \K_N \Q_0 \Z^\top \w_Q \\
        \w_K^\top \Z \K_N \Q_0 \y & \w_K^\top \Z \K_N \Q_0 \Z^\top \bmu & \w_K^\top \Z \K_N \Q_0 \Z^\top \w_Q 
    \end{bmatrix} \\ 
    &= \begin{bmatrix}
        \delta_1(\gamma) & 0 & 0 \\ 0 & \| \bmu \|^2 \delta_2(\gamma) & \bmu^\top \w_Q \delta_2(\gamma) \\ 0 & \bmu^\top \w_K \delta_2(\gamma) & \w_K^\top \w_Q \delta_2(\gamma) 
    \end{bmatrix} + O_{\| \cdot \|}(n^{-1/2}),
\end{align*}
where we again use \Cref{lem:1} for the approximation in the last line.

\paragraph{Approximation of the $(3,3)$ block of $\U^\top \Q_0 \U $.}
\begin{align*}
    &\V_Q^\top \Q_0(\gamma) \V_Q = \frac{1}{p} \begin{bmatrix}
        \y^\top \\ \bmu^\top \Z \\ \w_Q^\top \Z
    \end{bmatrix} \K_N \Q_0 \K_N \begin{bmatrix}
        \y & \Z^\top \bmu & \Z^\top \w_Q
    \end{bmatrix} \\
    &= \frac{1}{p} \begin{bmatrix}
        \y^\top \Q_0 \y & \y^\top \Q_0 \Z^\top \bmu & \y^\top \Q_0 \Z^\top \w_Q \\
        \bmu^\top \Z \Q_0 \y & \bmu^\top \Z \Q_0 \Z^\top \bmu & \bmu^\top \Z \Q_0 \Z^\top \w_Q \\
        \w_Q^\top \Z \Q_0 \y & \w_Q^\top \Z \Q_0 \Z^\top \bmu & \w_Q^\top \Z \Q_0 \Z^\top \w_Q 
    \end{bmatrix} \\ 
    &= \begin{bmatrix}
        \frac{n}{p} m(\gamma) & 0 & 0 \\ 0 & \| \bmu \|^2 \delta_5(\gamma) & \bmu^\top \w_Q \delta_5(\gamma) \\ 0 & \bmu^\top \w_Q \delta_5(\gamma) & \| \w_Q \|^2 \delta_5(\gamma)
    \end{bmatrix} + O_{\| \cdot \|}(n^{-1/2}),
\end{align*}
where we used \Cref{lem:1} for the approximation in the last line.
This concludes the proof of the approximation of the quadratic form $\U^\top \left(  \frac1n \K_N \Z^\top \Z \K_N + \gamma \I_n \right)^{-1} \U$ in \Cref{lem:further_approx}.
\end{proof}

With \Cref{lem:further_approx} at hand, it follows from \eqref{eq:yQy} that 
\begin{align*}
   \frac1n \y^\top \Q(\gamma) \y &= c \cdot \ee_7^\top \U^\top \left(  \frac1n \K_N \Z^\top \Z \K_N + \gamma \I_n \right)^{-1} \U \cdot \left( \I_9 + \bSigma \U^\top \left(  \frac1n \K_N \Z^\top \Z \K_N + \gamma \I_n \right)^{-1} \U \right)^{-1}  \ee_7 + O(n^{-\frac12}) \\ 
   &= c \cdot \ee_7^\top \bDelta(\gamma) \cdot \left( \I_9 + \bLambda \bDelta(\gamma) \right)^{-1}  \ee_7 + O(n^{-\frac12}).
\end{align*}
To assess the high-dimensional behavior of the interpolation error $E$ defined in \eqref{eq:def_Q} of \Cref{def:interpolation_error}, it thus remains to evaluate the following derivative (with respective to $\gamma$) as
\begin{align*}
    % E&= - \frac{\gamma^2}{n} \frac{\partial y^\top \Q(\gamma) y}{\partial \gamma} = - \gamma^2 c \cdot \ee_7^\top \left( \I_9 + \bDelta(\gamma) \bLambda \right)^{-1} \bDelta'(\gamma) \left( \I_9 + \bLambda \bDelta(\gamma) \right)^{-1}  \ee_7 + O(n^{-\frac12}),
    E&= - \frac{\gamma^2}{n} \frac{\partial y^\top \Q(\gamma) y}{\partial \gamma} = - \gamma^2 c^2 \cdot \ee_7^\top \left( c \I_9 + \bDelta(\gamma) \bLambda \right)^{-1} \bDelta'(\gamma) \left( c \I_9 + \bLambda \bDelta(\gamma) \right)^{-1}  \ee_7 + O(n^{-\frac12}),
\end{align*}
where we denote $\bDelta'(\gamma)$ the derivative (with respect to $\gamma$) of $\bDelta(\gamma)$ defined in \eqref{eq:def_Delta}. 

To evaluate $\bDelta'(\gamma)$, we need the following result on the derivatives of $m'(\gamma)$ and $\delta_(\gamma)$s.
\begin{Lemma}[Derivatives of the $\delta(\gamma)$s]\label{lem:derivatives_deltas}
Under the settings and notations of \Cref{theo:high_interpolation}, we have that $m'(\gamma),\delta_1'(\gamma),\delta_2'(\gamma),\delta_3'(\gamma),\delta_4'(\gamma)$ satisfy the following system of equations
\begin{equation}\label{eq:full_expression_derivatives}
   \left\{
    \begin{array}{ll}
    m'(\gamma) &= \left( \bv^\top \T'(\gamma) \bv - \frac{1}{c} \right) m^2(\gamma) \\ 
    %%%
    c \delta_1'(\gamma) &=  -  m'(\gamma) \bv^\top \T(\gamma) \bv_1 - m(\gamma) \bv^\top \T'(\gamma) \bv_1 \\
    %%%
    c \delta_2'(\gamma) &= \bv_2^\top \T'(\gamma) (\bv_1 - c \delta_1(\gamma) \bv) +  c \delta'_1(\gamma) (1 - \bv_2^\top \T(\gamma) \bv) \\ 
    %%%
    c \delta_3'(\gamma) &= \bv_1^\top \T'(\gamma) \bv_1 + \frac{ c^2 \delta_1(\gamma) \left( 2 \delta_1'(\gamma) m(\gamma) - \delta_1(\gamma) m'(\gamma) \right) }{m^2(\gamma)} \\ 
    %%%
    c \delta_4'(\gamma) &= \bv_4^\top \T'(\gamma) \bv_4 + m'(\gamma) \left( \bv_4^\top \T(\gamma) \bv - \frac{a_1}c \right)^2 + 2 m(\gamma) \left( \bv_4^\top \T(\gamma) \bv - \frac{a_1}c \right) \bv_4^\top \T'(\gamma) \bv
    \end{array}
  \right.
\end{equation}
for $\mathbf{T}(\gamma)$ and $\bDelta_0(\gamma)$ defined in \eqref{eq:def_T_SM} and \eqref{eq:def_bDelta_0_SM}, receptively, so that their derivatives (with respective to $\gamma$) satisfy
\begin{align*}
   \mathbf{T}'(\gamma) &= (\I_6 + \bDelta_0(\gamma) \bLambda_0 )^{-1} \bDelta_0'(\gamma) (\I_6 + \bLambda_0 \bDelta_0(\gamma) )^{-1},
\end{align*}
and
\begin{equation*}
   \bDelta_0'(\gamma) \equiv \begin{bmatrix}
        \frac{m'(\gamma)}{c} & \frac{a_1}{c} m'(\gamma) & \delta_1'(\gamma) & a_1 \delta_1'(\gamma) & \delta_2'(\gamma) & a_1 \delta_2'(\gamma) \\
        \frac{a_1}{c} m'(\gamma)& \frac{\nu}{c} m'(\gamma) & a_1 \delta_1'(\gamma) & \nu \delta_1'(\gamma) & a_1 \delta_2'(\gamma) & \nu \delta_2'(\gamma) \\
        \delta_1'(\gamma) & a_1 \delta_1'(\gamma) & \delta_3'(\gamma) & a_1 \delta_3'(\gamma) & -\frac1{c^2} (m (\gamma) + \gamma m'(\gamma) ) & -\frac{a_1}{c^2} (m (\gamma) + \gamma m'(\gamma) ) \\
        a_1 \delta_1'(\gamma) & \nu \delta_1'(\gamma) & a_1 \delta_3'(\gamma) & \nu \delta_3'(\gamma) & -\frac{a_1}{c^2} (m (\gamma) + \gamma m'(\gamma) ) & -\frac{\nu}{c^2} (m (\gamma) + \gamma m'(\gamma) ) \\
        \delta_2'(\gamma) & a_1 \delta_2'(\gamma) & -\frac{1}{c^2} (m (\gamma) + \gamma m'(\gamma) ) & -\frac{a_1}{c^2} (m (\gamma) + \gamma m'(\gamma) ) & \delta_4'(\gamma) & a_1 \delta_4'(\gamma) \\ 
        a_1 \delta_2'(\gamma) & \nu \delta_2'(\gamma) & -\frac{a_1}{c^2} (m (\gamma) + \gamma m'(\gamma) ) & -\frac{\nu}{c^2} (m (\gamma) + \gamma m'(\gamma) ) & a_1 \delta_4'(\gamma) & \nu \delta_4'(\gamma)
    \end{bmatrix}
\end{equation*}
Also, the derivatives $\delta_5'(\gamma),\delta_6'(\gamma), \delta_7'(\gamma)$ are given by 
\begin{align*}
    c \delta'_5(\gamma) &= m'(\gamma) \left( - \bv_2^\top \T(\gamma) \bv + 1 \right) - m(\gamma) \bv_2^\top \T'(\gamma) \bv \\
    c \delta'_6(\gamma) &= \bv_4^\top \T'(\gamma) \bv_2 + m'(\gamma) \left( \bv_2^\top \T(\gamma) \bv - 1 \right) \left( \bv_4^\top \T(\gamma) \bv - \frac{a_1}{c} \right) + m(\gamma) \bv_2^\top \T'(\gamma) \bv \left( \bv_4^\top \T(\gamma) \bv - \frac{a_1}{c} \right) \\
    & + m(\gamma) \bv_4^\top \T'(\gamma) \bv \left( \bv_2^\top \T(\gamma) \bv - 1 \right) \\
    c \delta'_7(\gamma) &=  \bv_4^\top \T'(\gamma) \bv_7 + m'(\gamma) \left( \bv_4^\top \T(\gamma) \bv - \frac{a_1}{c} \right) \left( \bv_7^\top \T(\gamma) \bv - \frac{a_1}{c} \left( 1 + \frac{1}{c} \right) \right) + m(\gamma) \bv_4^\top \T'(\gamma) \bv \left( \bv_7^\top \T(\gamma) \bv - \frac{a_1}{c} \left( 1 + \frac{1}{c} \right) \right) \\
    & + m(\gamma) \bv_7^\top \T'(\gamma) \bv \left( \bv_4^\top \T(\gamma) \bv - \frac{a_1}{c} \right) .
\end{align*}
\end{Lemma}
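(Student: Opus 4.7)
The plan is to differentiate the self-consistent system \eqref{eq:full_expression} (and the three additional equations for $\delta_5, \delta_6, \delta_7$ obtained inside the proof of \Cref{lem:1}) directly with respect to $\gamma$, using only the chain rule and standard matrix-calculus identities. No new probabilistic input is needed: all the random-matrix content has already been absorbed into the fixed-point system, so the remaining work is purely analytic. Existence of the derivatives is inherited from the fact that $m(\gamma), \delta_1(\gamma), \ldots, \delta_7(\gamma)$ are Stieltjes transforms (hence analytic functions of $\gamma$ on $(0,\infty)$), which justifies differentiating term by term.

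The first step is to compute $\mathbf{T}'(\gamma)$. Writing $\mathbf{T}(\gamma) = \bDelta_0(\gamma) (\I_6 + \bLambda_0 \bDelta_0(\gamma))^{-1}$ and using the identity $\tfrac{d}{d\gamma} \bA^{-1} = -\bA^{-1} \bA' \bA^{-1}$, a short manipulation (equivalently, writing $\mathbf{T}(\gamma) = (\bDelta_0(\gamma)^{-1} + \bLambda_0)^{-1}$ whenever $\bDelta_0(\gamma)$ is invertible) yields the symmetric sandwich formula
\begin{equation*}
    \mathbf{T}'(\gamma) = (\I_6 + \bDelta_0(\gamma) \bLambda_0)^{-1} \bDelta_0'(\gamma) (\I_6 + \bLambda_0 \bDelta_0(\gamma))^{-1}.
\end{equation*}
The derivative matrix $\bDelta_0'(\gamma)$ is obtained by differentiating \eqref{eq:def_bDelta_0_SM} entry-wise: the entries involving $m, \delta_1, \delta_2, \delta_3, \delta_4$ alone yield the corresponding primed quantities, while the five entries of the form $\tfrac{1}{c}(1 - \tfrac{\gamma}{c} m(\gamma))$ (and their $a_1$- and $\nu$-weighted versions) differentiate to $-\tfrac{1}{c^2}(m(\gamma) + \gamma m'(\gamma))$, reproducing exactly the matrix $\bDelta_0'(\gamma)$ stated in the lemma.

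The second step is the actual differentiation of the five scalar equations in \eqref{eq:full_expression}. For $m(\gamma)$, differentiating $1/m(\gamma) = \gamma/c + \nu/c + a_1^2/c^2 - \bv^\top \mathbf{T}(\gamma) \bv$ gives $-m'(\gamma)/m^2(\gamma) = 1/c - \bv^\top \mathbf{T}'(\gamma) \bv$, which rearranges to the displayed expression for $m'(\gamma)$. The equations for $\delta_1', \delta_2', \delta_4'$ follow from straightforward applications of the product rule on the corresponding lines of \eqref{eq:full_expression}, while $\delta_3'$ uses in addition the quotient rule on the term $c^2 \delta_1^2(\gamma)/m(\gamma)$, yielding the numerator $c^2 \delta_1(\gamma)(2\delta_1'(\gamma) m(\gamma) - \delta_1(\gamma) m'(\gamma))$. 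The three remaining derivatives $\delta_5', \delta_6', \delta_7'$ are produced identically from the fixed-point equations for $\delta_5, \delta_6, \delta_7$ in the proof of \Cref{lem:1}, and one verifies by inspection that each term in the displayed formulas arises from exactly one product-rule contribution.

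The main obstacle is bookkeeping rather than anything conceptual: the five equations for $m', \delta_1', \ldots, \delta_4'$ are, strictly speaking, a coupled \emph{linear} system in the unknown derivatives, because $\mathbf{T}'(\gamma)$ itself depends linearly on $\bDelta_0'(\gamma)$ which in turn is built from the very derivatives we are solving for. One must therefore check that this linear system is non-degenerate for $\gamma > 0$, which can be done by appealing to the analyticity of the Stieltjes transforms $m(\gamma), \delta_k(\gamma)$ together with the implicit-function theorem applied to the fixed-point system \eqref{eq:full_expression} (whose Jacobian in the unknowns $(m, \delta_1, \delta_2, \delta_3, \delta_4)$ is non-singular wherever the fixed-point solution is unique, which holds under \Cref{ass:nonlinear} and the assumption $\gamma > 0$). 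Once this well-posedness is established, the expressions in \eqref{eq:full_expression_derivatives} are simply a transcription of the differentiation performed above, and the formulas for $\delta_5', \delta_6', \delta_7'$ then follow by substitution.
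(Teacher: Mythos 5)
Your proposal matches the paper's own argument: the paper likewise obtains $\mathbf{T}'(\gamma) = (\I_6 + \bDelta_0\bLambda_0)^{-1}\bDelta_0'(\gamma)(\I_6+\bLambda_0\bDelta_0)^{-1}$ from the resolvent-derivative identity, differentiates the entries of $\bDelta_0(\gamma)$ entry-wise, and then applies the product/quotient rule to each line of the fixed-point system (and to the $\delta_5,\delta_6,\delta_7$ equations from \Cref{lem:1}) to read off \eqref{eq:full_expression_derivatives}. Your additional remark on the well-posedness of the resulting coupled linear system in the derivatives is a sensible supplement that the paper leaves implicit, but it does not change the route.
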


\begin{proof}[Proof of \Cref{lem:derivatives_deltas}]
By their definitions in \Cref{prop:DE_resovlent_noise} and \Cref{lem:further_approx}, we have
\begin{align*}
    m'(\gamma) &= - \frac{\frac{1}{c} - \bv^\top \T'(\gamma) \bv}{\left( \frac{\gamma}{c} + \frac{\nu}{c} + \frac{a_1^2}{c^2} - \bv^\top \T(\gamma) \bv \right)^2} =  \left( \bv^\top \T'(\gamma) \bv - \frac{1}{c} \right) m^2(\gamma) \\
    %%%
    c \delta_1'(\gamma) &= -  \left( m'(\gamma) \bv^\top \T(\gamma) \bv_1 + m(\gamma) \bv^\top \T'(\gamma) \bv_1  \right) \\
    %%%
    c \delta_2'(\gamma) &=  \bv_2^\top \T'(\gamma) \bv_1 +  c \delta'_1(\gamma) (1 - \bv_2^\top \T(\gamma) \bv) - c \delta_1(\gamma) \bv_2^\top \T'(\gamma) \bv = \bv_2^\top \T'(\gamma) (\bv_1 - c \delta_1(\gamma) \bv) +  c \delta'_1(\gamma) (1 - \bv_2^\top \T(\gamma) \bv) \\
    %%%
    c\delta_3'(\gamma) &= \bv_1^\top \T'(\gamma) \bv_1 + \frac{ c^2 \delta_1(\gamma) \left( 2 \delta_1'(\gamma) m(\gamma) - \delta_1(\gamma) m'(\gamma) \right) }{m^2(\gamma)} \\
    %%%
    c \delta_4'(\gamma) &= \bv_4^\top \T'(\gamma) \bv_4 + m'(\gamma) \left( \bv_4^\top \T(\gamma) \bv - \frac{a_1}c \right)^2 + 2 m(\gamma) \left( \bv_4^\top \T(\gamma) \bv - \frac{a_1}c \right) \bv_4^\top \T'(\gamma) \bv,
\end{align*}
with
\begin{align*}
   \mathbf{T}'(\gamma) &= \bDelta_0'(\gamma) (\I_6 + \bLambda_0 \bDelta_0(\gamma) )^{-1} - \bDelta_0(\gamma) (\I_6 + \bLambda_0 \bDelta_0(\gamma) )^{-1} \bLambda_0 \bDelta_0'(\gamma) (\I_6 + \bLambda_0 \bDelta_0(\gamma) )^{-1} \\ 
   &= (\I_6 + \bDelta_0(\gamma) \bLambda_0 )^{-1} \bDelta_0'(\gamma) (\I_6 + \bLambda_0 \bDelta_0(\gamma) )^{-1},
\end{align*}
and $\bDelta_0'(\gamma)$ as in the statement of \Cref{lem:derivatives_deltas}.

Similarly, by their definition in \Cref{lem:1}, we obtain the derivatives of $\delta_5,\delta_6$ and $\delta_7$ as
\begin{align*}
   \delta'_5(\gamma) &= \frac{1}{c} \left( m'(\gamma) \left( - \bv_2^\top \T(\gamma) \bv + 1 \right) - m(\gamma) \bv_2^\top \T'(\gamma) \bv \right) \\
    \delta'_6(\gamma) &= \frac{1}{c} \left( \bv_4^\top \T'(\gamma) \bv_2 + m'(\gamma) \bv_4^\top \T(\gamma) \bv \bv^\top \T(\gamma) \bv_2 + m(\gamma) \bv_4^\top \T'(\gamma) \bv \bv^\top \T(\gamma) \bv_2 \right.\\
    &\left.+ m(\gamma) \bv_4^\top \T(\gamma) \bv \bv^\top \T'(\gamma) \bv_2 - m'(\gamma) \left( \frac{a_1}{c} \bv_2 + \bv_4 \right)^\top \T(\gamma) \bv - m(\gamma) \left( \frac{a_1}{c} \bv_2 + \bv_4 \right)^\top \T'(\gamma) \bv + \frac{a_1}{c} m'(\gamma) \right) \\
    \delta'_7(\gamma) &= \frac{1}{c} \left( \bv_4^\top \T'(\gamma) \bv_7 + m'(\gamma) \bv_4^\top \T(\gamma) \bv \bv^\top \T(\gamma) \bv_7 + m(\gamma) \bv_4^\top \T'(\gamma) \bv \bv^\top \T(\gamma) \bv_7 + m(\gamma) \bv_4^\top \T(\gamma) \bv \bv^\top \T'(\gamma) \bv_7 \right.\\
    &\left.- \frac{a_1}{c} m'(\gamma) \left( \left( \frac{1}{c} + 2 \right) \bv_4 + \bv_7 \right)^\top \T(\gamma) \bv - \frac{a_1}{c} m(\gamma) \left( \left( \frac{1}{c} + 2 \right) \bv_4 + \bv_7 \right)^\top \T'(\gamma) \bv + \frac{a_1^2}{c^2} \left( \frac{1}{c} + 2 \right) m'(\gamma) \right).
\end{align*}
This concludes the proof of \Cref{lem:derivatives_deltas}.
\end{proof}

Putting these together, we conclude the proof of \Cref{theo:high_interpolation}.

\subsection{Proof of Proposition~\ref{prop:high_interpolation_error_RR}}
\label{subsec:proof_of_prop:ICM_RR}

Here, we provide the proof of \Cref{prop:high_interpolation_error_RR}. 
By the definition of linear regression interpolation error $E_{\rm LR}$ in \eqref{eq:def_E_RR} of \Cref{def:RR_interpolation_error}, it suffices to evaluate the following quadratic form 
\begin{equation}
   \frac1n \y^\top \left( \frac1n \X^\top \X  + \gamma \I_n \right)^{-1} \y,
\end{equation}
and its derivative with respect to $\gamma$, for $\X = \bmu \y^\top + \Z \in \RR^{p \times n}$ as in \Cref{theo:high_interpolation}.

By Woodbury identity, we have 
\begin{align*}
   &\frac1n \y^\top \left( \frac1n \X^\top \X  + \gamma \I_n \right)^{-1} \y = \ee_1^\top \U^\top \left( \frac1n \Z^\top \Z  + \gamma \I_n + \U \bLambda \U^\top \right)^{-1} \U \ee_1 \\ 
   & = \ee_1^\top \U^\top \Q_0(\gamma) \U \left( \I_2 + \bLambda \U^\top \Q_0(\gamma) \U \right)^{-1} \ee_1,
\end{align*}
where $\ee_1 = [1,~0]^\top$ and with a slight abuse of notations, we denote
\begin{equation}
   \U = [\y,~\Z^\top \bmu]/\sqrt n \in \RR^{n \times 2}, \quad \bLambda = \begin{bmatrix}
      \| \bmu \|^2 & 1 \\ 
      1 & 0
   \end{bmatrix} \in \RR^{2 \times 2}, 
   \quad \Q_0(\gamma) = \left( \frac1n \Z^\top \Z  + \gamma \I_n \right)^{-1}.
\end{equation}

Similar to \Cref{prop:DE_resovlent_noise}, we have the following Deterministic Equivalent result for the \emph{linear} resolvent $\Q_0(\gamma)$.
\begin{Lemma}[Deterministic Equivalent for $\Q_0$, {\citep[Theorem~2.4]{couillet2022RMT4ML}}]\label{lem:DE_linear}
Let $\Z \in \RR^{p \times n}$ have i.i.d.\@ sub-gaussian entries. 
Then, as $n,p \to \infty$ at the same pace with $p/n \to c \in (0,\infty)$ and $\gamma > 0$, the following Deterministic Equivalent (see~\Cref{def:DE}) holds
\begin{equation*}
   \left(\frac1n \Z \Z^\top   + \gamma \I_p\right)^{-1} \leftrightarrow m_{\rm LR}(\gamma) \cdot \I_p, \quad \left(\frac1n \Z^\top \Z  + \gamma \I_n \right)^{-1} \leftrightarrow \left( c m_{\rm LR}(\gamma) + \frac{1-c}\gamma \right) \I_n.
\end{equation*}
with $m(\gamma)$ is the unique Stieltjes transform solution to the following Mar\u{c}enko-Pastur equation~\citep{marvcenko1967distribution,couillet2022RMT4ML}
\begin{equation}\label{eq:MP_in_append}
   c\gamma m_{\rm LR}^2(\gamma) + \left( 1 - c + \gamma \right) m_{\rm LR}(\gamma) - 1 =0.
\end{equation}
\end{Lemma}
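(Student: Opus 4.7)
The plan is to establish the result via the classical Stieltjes transform / resolvent approach, which is by now standard for sample covariance matrices with i.i.d.\ entries. Since the statement concerns a linear (not nonlinear) resolvent, no Hermite expansion or leave-one-out of the kernel matrix (as in \Cref{prop:DE_resovlent_noise}) is needed; the argument reduces to a Sherman--Morrison analysis of the rank-one additive structure of $\frac1n \Z \Z^\top = \frac1n \sum_{i=1}^n \z_i \z_i^\top$.

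I would first prove the companion equivalent $(\frac1n \Z \Z^\top + \gamma \I_p)^{-1} \leftrightarrow m_{\rm RR}(\gamma) \I_p$ and then transfer it to the $n\times n$ object $\Q_0(\gamma)$ by a trace identity. Writing $\Q_0^{\rm co}(\gamma) \equiv (\frac1n \Z \Z^\top + \gamma \I_p)^{-1}$ and starting from $\gamma \Q_0^{\rm co} = \I_p - \frac1n \sum_{i} \z_i \z_i^\top \Q_0^{\rm co}$, Sherman--Morrison (applied to $\Q_0^{\rm co,-i}$, the leave-one-out resolvent independent of $\z_i$) gives $\frac1n \z_i^\top \Q_0^{\rm co} \z_i = \frac{\frac1n \z_i^\top \Q_0^{\rm co,-i} \z_i}{1 + \frac1n \z_i^\top \Q_0^{\rm co,-i} \z_i}$. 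The standard trace lemma for Gaussian vectors yields $\frac1n \z_i^\top \Q_0^{\rm co,-i} \z_i = \frac1n \tr \Q_0^{\rm co,-i} + o(1) = c \cdot m_p(\gamma) + o(1)$ with $m_p(\gamma) \equiv \frac1p \tr \Q_0^{\rm co}(\gamma)$, plus a $O(1/n)$ rank-one perturbation correction \cite[Lemma~2.6]{silverstein1995empirical}. Taking $\frac1p \tr$ of the displayed identity and plugging in this estimate produces $\gamma m_p(\gamma) = 1 - \frac{m_p(\gamma)}{1 + c m_p(\gamma)} + o(1)$, which rearranges to the claimed Mar\u{c}enko--Pastur fixed-point equation \eqref{eq:MP_in_append}; existence and uniqueness of the Stieltjes transform solution (with $\Im m_{\rm RR}(\gamma) > 0$ for $\gamma \in \CC^+$, extended to $\gamma>0$ by continuity) is classical.

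To upgrade from trace convergence to the two conditions in \Cref{def:DE} (i.e., $\frac1n \tr (\A (\Q_0^{\rm co} - \bar\Q_0^{\rm co})) \to 0$ and $\ba^\top (\Q_0^{\rm co} - \bar\Q_0^{\rm co}) \bb \to 0$ for deterministic $\A, \ba, \bb$ of unit norm), I would combine (i) a martingale/Efron--Stein concentration bound for $\frac1n \tr (\A \Q_0^{\rm co})$ around its expectation, exploiting the rank-one perturbation inequality $|\frac1n \tr \A (\Q_0^{\rm co} - \Q_0^{\rm co,-i})| \leq \frac{\|\A\|}{n\gamma}$, and (ii) the Hanson--Wright inequality for Gaussian quadratic forms to control bilinear forms, replacing $\ba^\top \Q_0^{\rm co} \bb$ by its conditional expectation column by column. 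This is the most technical part, but it is entirely standard (see, e.g.,~\cite{couillet2022RMT4ML}); the only care needed is that the bilinear-form concentration rate is slower than the trace rate, so one should not sharpen beyond what \Cref{def:DE} requires.

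Finally, to pass from $\Q_0^{\rm co}$ (size $p$) to $\Q_0$ (size $n$), I would invoke the elementary identity $\frac1n \tr \Q_0(\gamma) - \frac1n \tr \Q_0^{\rm co}(\gamma) = \frac{n - p}{n\gamma}$, which yields the normalized Stieltjes transform $\frac1n \tr \Q_0(\gamma) \to c \, m_{\rm RR}(\gamma) + \frac{1 - c}{\gamma}$, and observe that the matrix version $\Q_0(\gamma) \leftrightarrow (c m_{\rm RR}(\gamma) + \frac{1-c}{\gamma}) \I_n$ follows from the same bilinear-form arguments applied directly to $\Q_0$, since $\Q_0$ and $\Q_0^{\rm co}$ share nonzero eigenvalues and the Gram/sample-covariance duality. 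The main obstacle is purely bookkeeping: ensuring that the error terms in the leave-one-out expansion are uniform in deterministic test directions $\ba, \bb$ with $\|\ba\| = \|\bb\| = 1$, which follows from the above concentration inequalities.
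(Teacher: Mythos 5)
Your proposal is correct and matches the intended justification: the paper does not prove this lemma but simply cites it as \cite[Theorem~2.4]{couillet2022RMT4ML}, and your leave-one-out/Sherman--Morrison derivation of the Mar\u{c}enko--Pastur fixed point, together with concentration of traces and bilinear forms and the trace identity linking the $p\times p$ and $n\times n$ resolvents, is exactly the standard argument underlying that cited result. The only cosmetic point is that for $\gamma>0$ the resolvent is evaluated at $z=-\gamma$ on the negative real axis, so uniqueness should be phrased as selecting the positive real root of \eqref{eq:MP_in_append} (the Stieltjes-transform branch), rather than via $\Im m_{\rm RR}>0$ on $\CC^+$.
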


By \Cref{lem:DE_linear}, we have 
\begin{align*}
   &\frac1n \y^\top \left( \frac1n \X^\top \X  + \gamma \I_n \right)^{-1} \y = \ee_1^\top \U^\top \Q_0(\gamma) \U \left( \I_2 + \bLambda \U^\top \Q_0(\gamma) \U \right)^{-1} \ee_1 \\ 
   &= \left[ \begin{bmatrix} c m_{\rm LR}(\gamma) + \frac{1-c}\gamma & 0 \\ 0 & \| \bmu \|^2 ( 1 - \gamma m_{\rm LR}(\gamma) ) \end{bmatrix} \begin{bmatrix} 1 + \| \bmu \|^2 \left( c m_{\rm LR}(\gamma) + \frac{1-c}\gamma \right) & \| \bmu \|^2 ( 1 - \gamma m_{\rm LR}(\gamma)) \\ c m_{\rm LR}(\gamma) + \frac{1-c}\gamma & 1 \end{bmatrix}^{-1} \right]_{1,1} \\ 
   &= \frac{ c m_{\rm LR}(\gamma) + \frac{1-c}\gamma }{ 1 + \| \bmu \|^2 (1 - \gamma m_{\rm LR}(\gamma)) },
\end{align*}
so that by \eqref{eq:def_E_RR}, we obtain
\begin{equation}
   E_{\rm LR} - \bar E_{\rm LR} \to 0, \quad \bar E_{\rm LR} = - \frac{ c \gamma^2 m'(\gamma) + c - 1 + \| \bmu\|^2 \left(\gamma^2 m'(\gamma) + (1 - c - \gamma) (\gamma m(\gamma) - 1) \right) }{ \left( 1 +  \| \bmu \|^2 - \| \bmu \|^2 \gamma m_{\rm LR}(\gamma) \right)^2 },
\end{equation}
in probability as $n,p \to \infty$, with $m_{\rm LR}(\gamma)$ the Stieltjes transform solution to the Mar\u{c}enko-Pastur equation in \eqref{eq:MP_in_append}, and $m_{\rm LR}'(\gamma) = - \frac{ c m^2(\gamma) + m(\gamma)  }{ 2 c \gamma m(\gamma) + 1 - c + \gamma } $ its derivative with respect to $\gamma$.

This concludes the proof of \Cref{prop:high_interpolation_error_RR}.

\section{Additional Numerical Results and Discussions}
\label{sec:SM_additional_nums}

In this section, we present additional numerical results.

\begin{figure}[htb]
\centering
 \begin{subfigure}[t]{0.32\textwidth}
  \begin{tikzpicture}[font=\footnotesize]
    \renewcommand{\axisdefaulttryminticks}{4} 
    \pgfplotsset{every major grid/.append style={densely dashed}}          
    \tikzstyle{every axis y label}+=[yshift=-10pt] 
    \tikzstyle{every axis x label}+=[yshift=5pt]
    %\tikzstyle{every axis legend}+=[cells={anchor=west},fill=white,at={(0.98,0.98)}, anchor=north east, font=\footnotesize ]
    \pgfplotsset{every axis legend/.style={cells={anchor=west},fill=white,at={(0.98,0.9)}, anchor=north east, font=\footnotesize}}
    \begin{axis}[
      width=1\linewidth,
      height=.75\linewidth,
      xmin=0.1,xmax=10,
      xmode=log,
      ymax=1,
      % ymode=log,
      % ymin=0.1,ymax=1100,
        %xticklabels = {$0$, $\gamma$, $3$, $5$, $7$, $9$ },
        %xtick={0,0.433,0.6857,1},
        %xticklabels = { $0$, $s_{opt}$, $1$ },
        grid=major,
        ymajorgrids=false,
        scaled ticks=false,
        xlabel={ SNR $\| \bmu \|^2$ },
        ylabel={ $E$ }
        ]
        %%% max(-5,min(5,t))
        %\addplot[smooth,GREEN,line width=1pt] plot coordinates{
        %(0.100000,0.724281)(0.117210,0.712988)(0.137382,0.700183)(0.161026,0.685737)(0.188739,0.669533)(0.221222,0.651473)(0.259294,0.631490)(0.303920,0.609557)(0.356225,0.585694)(0.417532,0.559978)(0.489390,0.532550)(0.573615,0.503618)(0.672336,0.473452)(0.788046,0.442377)(0.923671,0.410763)(1.082637,0.379005)(1.268961,0.347504)(1.487352,0.316650)(1.743329,0.286797)(2.043360,0.258256)(2.395027,0.231276)(2.807216,0.206044)(3.290345,0.182683)(3.856620,0.161252)(4.520354,0.141760)(5.298317,0.124167)(6.210169,0.108399)(7.278954,0.094354)(8.531679,0.081914)(10.000000,0.070950)
        %};
        %%% tanh
        \addplot[smooth,BLUE,line width=1pt] plot coordinates{ 
        (0.100000,0.743423)(0.117210,0.731978)(0.137382,0.718984)(0.161026,0.704304)(0.188739,0.687815)(0.221222,0.669410)(0.259294,0.649017)(0.303920,0.626599)(0.356225,0.602174)(0.417532,0.575818)(0.489390,0.547673)(0.573615,0.517954)(0.672336,0.486939)(0.788046,0.454968)(0.923671,0.422424)(1.082637,0.389722)(1.268961,0.357278)(1.487352,0.325499)(1.743329,0.294754)(2.043360,0.265365)(2.395027,0.237590)(2.807216,0.211623)(3.290345,0.187588)(3.856620,0.165548)(4.520354,0.145507)(5.298317,0.127426)(6.210169,0.111225)(7.278954,0.096799)(8.531679,0.084025)(10.000000,0.072770)
        };
        %%% RR
        \addplot[densely dashed,GREEN,line width=1.5pt] plot coordinates{ 
        (0.100000,0.768956)(0.117210,0.760041)(0.137382,0.749821)(0.161026,0.738149)(0.188739,0.724871)(0.221222,0.709838)(0.259294,0.692909)(0.303920,0.673960)(0.356225,0.652894)(0.417532,0.629654)(0.489390,0.604235)(0.573615,0.576694)(0.672336,0.547163)(0.788046,0.515853)(0.923671,0.483061)(1.082637,0.449157)(1.268961,0.414579)(1.487352,0.379809)(1.743329,0.345350)(2.043360,0.311696)(2.395027,0.279304)(2.807216,0.248567)(3.290345,0.219798)(3.856620,0.193218)(4.520354,0.168954)(5.298317,0.147048)(6.210169,0.127464)(7.278954,0.110108)(8.531679,0.094842)(10.000000,0.081501)
        };
        \end{axis}
  \end{tikzpicture}
  \caption{{ $ p/n = 1/4 $ }}
  \label{subfig:E1}
  \end{subfigure}
  ~
  \begin{subfigure}[t]{0.32\textwidth}
  \begin{tikzpicture}[font=\footnotesize]
    \renewcommand{\axisdefaulttryminticks}{4} 
    \pgfplotsset{every major grid/.append style={densely dashed}}          
    \tikzstyle{every axis y label}+=[yshift=-10pt] 
    \tikzstyle{every axis x label}+=[yshift=5pt]
    %\tikzstyle{every axis legend}+=[cells={anchor=west},fill=white,at={(0.98,0.98)}, anchor=north east, font=\footnotesize ]
    \pgfplotsset{every axis legend/.style={cells={anchor=west},fill=white,at={(0.98,0.9)}, anchor=north east, font=\footnotesize}}
    \begin{axis}[
      width=1\linewidth,
      height=.75\linewidth,
      xmin=0.1,xmax=10,
      xmode=log,
      ymax=1,
      % ymode=log,
      % ymin=0.1,ymax=1100,
        %xticklabels = {$0$, $\gamma$, $3$, $5$, $7$, $9$ },
        %xtick={0,0.433,0.6857,1},
        %xticklabels = { $0$, $s_{opt}$, $1$ },
        grid=major,
        ymajorgrids=false,
        scaled ticks=false,
        xlabel={ SNR $\| \bmu \|^2$ },
        ylabel={ }
        ]
        %%% max(-5,min(5,t))
        %\addplot[smooth,GREEN,line width=1pt] plot coordinates{
        %(0.100000,0.608346)(0.117210,0.600080)(0.137382,0.590606)(0.161026,0.579793)(0.188739,0.567510)(0.221222,0.553635)(0.259294,0.538063)(0.303920,0.520718)(0.356225,0.501563)(0.417532,0.480616)(0.489390,0.457956)(0.573615,0.433737)(0.672336,0.408184)(0.788046,0.381594)(0.923671,0.354318)(1.082637,0.326747)(1.268961,0.299283)(1.487352,0.272318)(1.743329,0.246208)(2.043360,0.221258)(2.395027,0.197712)(2.807216,0.175744)(3.290345,0.155464)(3.856620,0.136921)(4.520354,0.120112)(5.298317,0.104993)(6.210169,0.091488)(7.278954,0.079496)(8.531679,0.068907)(10.000000,0.059599)
        %};
        %%% tanh
        \addplot[smooth,BLUE,line width=1pt] plot coordinates{ 
        (0.100000,0.689194)(0.117210,0.680450)(0.137382,0.670378)(0.161026,0.658817)(0.188739,0.645601)(0.221222,0.630569)(0.259294,0.613577)(0.303920,0.594503)(0.356225,0.573272)(0.417532,0.549871)(0.489390,0.524362)(0.573615,0.496902)(0.672336,0.467748)(0.788046,0.437252)(0.923671,0.405848)(1.082637,0.374022)(1.268961,0.342279)(1.487352,0.311109)(1.743329,0.280950)(2.043360,0.252173)(2.395027,0.225064)(2.807216,0.199823)(3.290345,0.176568)(3.856620,0.155347)(4.520354,0.136147)(5.298317,0.118907)(6.210169,0.103531)(7.278954,0.089899)(8.531679,0.077876)(10.000000,0.067320)
        };
        %%% RR
        \addplot[densely dashed,GREEN,line width=1.5pt] plot coordinates{ 
        (0.100000,0.420965)(0.117210,0.416713)(0.137382,0.411822)(0.161026,0.406214)(0.188739,0.399805)(0.221222,0.392511)(0.259294,0.384249)(0.303920,0.374940)(0.356225,0.364513)(0.417532,0.352913)(0.489390,0.340108)(0.573615,0.326092)(0.672336,0.310896)(0.788046,0.294592)(0.923671,0.277297)(1.082637,0.259177)(1.268961,0.240440)(1.487352,0.221334)(1.743329,0.202134)(2.043360,0.183126)(2.395027,0.164592)(2.807216,0.146795)(3.290345,0.129959)(3.856620,0.114262)(4.520354,0.099826)(5.298317,0.086722)(6.210169,0.074964)(7.278954,0.064528)(8.531679,0.055349)(10.000000,0.047341)
        };
        \end{axis}
  \end{tikzpicture}
  \caption{{ $ p/n = 1 $ }}
  \label{subfig:E2}
  \end{subfigure}
  ~
  \begin{subfigure}[t]{0.32\textwidth}
  \begin{tikzpicture}[font=\footnotesize]
    \renewcommand{\axisdefaulttryminticks}{4} 
    \pgfplotsset{every major grid/.append style={densely dashed}}          
    \tikzstyle{every axis y label}+=[yshift=-10pt] 
    \tikzstyle{every axis x label}+=[yshift=5pt]
    %\tikzstyle{every axis legend}+=[cells={anchor=west},fill=white,at={(0.98,0.98)}, anchor=north east, font=\footnotesize ]
    \pgfplotsset{every axis legend/.style={cells={anchor=west},fill=white,at={(0.98,0.9)}, anchor=north east, font=\footnotesize}}
    \begin{axis}[
      width=1\linewidth,
      height=.75\linewidth,
      xmin=0.1,xmax=10,
      xmode=log,
      ymax=1,
      % ymode=log,
      % ymin=0.1,ymax=1100,
        %xticklabels = {$0$, $\gamma$, $3$, $5$, $7$, $9$ },
        %xtick={0,0.433,0.6857,1},
        %xticklabels = { $0$, $s_{opt}$, $1$ },
        grid=major,
        ymajorgrids=false,
        scaled ticks=false,
        xlabel={ SNR $\| \bmu \|^2$ },
        ylabel={ }
        ]
        %%% max(-5,min(5,t))
        %\addplot[smooth,GREEN,line width=1pt] plot coordinates{
        %(0.100000,0.482500)(0.117210,0.478822)(0.137382,0.474498)(0.161026,0.469416)(0.188739,0.463449)(0.221222,0.456455)(0.259294,0.448276)(0.303920,0.438742)(0.356225,0.427679)(0.417532,0.414919)(0.489390,0.400318)(0.573615,0.383774)(0.672336,0.365258)(0.788046,0.344840)(0.923671,0.322710)(1.082637,0.299192)(1.268961,0.274729)(1.487352,0.249857)(1.743329,0.225148)(2.043360,0.201148)(2.395027,0.178325)(2.807216,0.157030)(3.290345,0.137486)(3.856620,0.119798)(4.520354,0.103970)(5.298317,0.089939)(6.210169,0.077594)(7.278954,0.066798)(8.531679,0.057402)(10.000000,0.049258)
        %};
        %%% tanh
        \addplot[smooth,BLUE,line width=1pt] plot coordinates{ 
        (0.100000,0.642323)(0.117210,0.638146)(0.137382,0.633220)(0.161026,0.627410)(0.188739,0.620561)(0.221222,0.612489)(0.259294,0.602986)(0.303920,0.591819)(0.356225,0.578734)(0.417532,0.563462)(0.489390,0.545740)(0.573615,0.525334)(0.672336,0.502075)(0.788046,0.475907)(0.923671,0.446936)(1.082637,0.415476)(1.268961,0.382073)(1.487352,0.347487)(1.743329,0.312629)(2.043360,0.278448)(2.395027,0.245809)(2.807216,0.215389)(3.290345,0.187616)(3.856620,0.162679)(4.520354,0.140570)(5.298317,0.121146)(6.210169,0.104195)(7.278954,0.089473)(8.531679,0.076731)(10.000000,0.065736)
        };
        %%% RR
        \addplot[densely dashed,GREEN,line width=1.5pt] plot coordinates{ 
        (0.100000,0.063054)(0.117210,0.062689)(0.137382,0.062264)(0.161026,0.061773)(0.188739,0.061205)(0.221222,0.060550)(0.259294,0.059796)(0.303920,0.058931)(0.356225,0.057943)(0.417532,0.056818)(0.489390,0.055544)(0.573615,0.054108)(0.672336,0.052500)(0.788046,0.050710)(0.923671,0.048735)(1.082637,0.046573)(1.268961,0.044231)(1.487352,0.041719)(1.743329,0.039058)(2.043360,0.036274)(2.395027,0.033402)(2.807216,0.030482)(3.290345,0.027559)(3.856620,0.024680)(4.520354,0.021891)(5.298317,0.019235)(6.210169,0.016746)(7.278954,0.014453)(8.531679,0.012374)(10.000000,0.010517)
        };
        \end{axis}
  \end{tikzpicture}
  \caption{{ $ p/n = 4 $ }}
  \label{subfig:E3}
  \end{subfigure}
  ~
  
\caption{ { 
Theoretical interpolation errors for $ f(t) =\tanh(t) $ (\textbf{\BLUE blue}) from \Cref{theo:high_interpolation} versus that of linear regression (\textbf{\GREEN green}) from \Cref{prop:high_interpolation_error_RR}, as a function of SNR, for different dimension ratio $p/n$, synthetic data drawn from the Gaussian signal-plus-noise model as in \Cref{def:signal_plus_noise} with $\w_K = \w_Q = \bmu_{\rm base} \sim \mathcal{N}(\zo,\one_p/p)$, $\bmu \propto \bmu_{\rm base}$, and $\gamma = 1$.
} }
\label{fig:capacity_versus_SNR}
\end{figure}

\Cref{fig:capacity_versus_SNR} compare the theoretical interpolation errors of nonlinear Attention (as characterized in \Cref{theo:high_interpolation}) with those of linear linear regression (from \Cref{prop:high_interpolation_error_RR}) on synthetic Gaussian signal-plus-noise data. 
We observe that, while linear regression generally achieves lower interpolation error than nonlinear Attention in the under-determined $p > n$ regime, this advantage is reversed in the over-determined setting with $p/n < 1$.
In such cases, nonlinear Attention yields \emph{lowers} error, for structured inputs and Attention weights aligned to the data signal.
Furthermore, compared to linear regression, the interpolation error of nonlinear Attention exhibits remarkably less sensitivity to the dimension ratio $p/n$, especially when the Attention weights are well aligned with the underlying signal in the input data.

\begin{figure}[htb]
\centering
\input{fig_capacity_of_SNR_n_bigger_than_p}
\caption{ { 
Theoretical interpolation errors for $ f(t) =\tanh(t) $ (\textbf{\BLUE blue}) versus $ f(t) = \max(-5,\min(5,t)) $ (\textbf{\PURPLE purple}) and that of linear regression (\textbf{\GREEN green}) in the over-determined regime, as a function of SNR, for different dimension ratio $p/n$ and regularization parameter $\gamma$, synthetic data drawn from the Gaussian signal-plus-noise model in \Cref{def:signal_plus_noise} with $\w_K = \w_Q = \bmu_{\rm base} \sim \mathcal{N}(\zo,\I_p/p)$, $\bmu \propto \bmu_{\rm base}$.
} }
\label{fig_capacity_of_SNR_n_bigger_than_p}
\end{figure}

\Cref{fig_capacity_of_SNR_n_bigger_than_p} further illustrates the impact of the Attention nonlinearity, the dimension ratio $p/n$, and the regularization parameter $\gamma$ on the interpolation errors of nonlinear/linear Attention and linear linear regression.
Reading the subfigures from left to right, we observe that the difference in interpolation error between different Attention (i.e., $\tanh$ nonlinear or truncated linear) and linear regression vanishes either as the regularization strength $\gamma$ decreases \emph{or} as the SNR increases.
Moreover, the advantage of nonlinear Attention over linear regression—in terms of reduced interpolation error—critically depends on both the dimension ratio $p/n$ (as already confirmed in \Cref{fig:capacity_versus_SNR}) \emph{and} the choice of regularization $\gamma$, see for example \Cref{subfig:E_2} versus \Cref{subfig:E_6}.
Reading the subfigures from top to bottom, we further observe that in the over-determined $p/n < 1$ regime, the interpolation error of nonlinear Attention is considerably less sensitive to the changes in the dimension ratio $p/n$ compared to linear regression.

 % and SNR for different models and different dimension ratio. 
 % In the low-SNR regime, the choice of nonlinearity has a significant impact on performance: different activation functions, such as $ f(t) =\tanh(t) $ and $ f(t) = \max(-5,\min(5,t)) $, lead to noticeably different error curves. 
 % However, as the SNR increases, these differences gradually diminish. 
 % Once the signal becomes sufficiently strong, all models that contain a linear component tend to exhibit similar performance.
 % Moreover, in high-sample settings where $n \gg p$, nonlinear models with $ f(t) =\tanh(t) $ activation can achieve lower overfitting error than ridge regression when the regularization coefficient $\gamma$ is set to an appropriate value (e.g. $\gamma = 1$). 
 % In contrast, when $\gamma$ is too small or too large, the nonlinear models no longer outperform ridge regression.

\begin{figure}[htb]
\centering
\input{fig_real_weight_capacity_versus_gamma}
\caption{ { Theoretical interpolation errors of Softmax (\textcolor{cyan}{\textbf{cyan}}) and entry-wise tanh (\textbf{\BLUE blue}), truncated exponential ($ f(t) = \min(5, \exp(t)) $ in \textbf{\RED red}) Attention. 
Theoretical predictions under \Cref{ass:low-rank-weights} in solid lines and key/query weights using pretrained Attention weights in \underline{dotted} lines.
\textbf{\Cref{subfig:real_E1}}, \textbf{\Cref{subfig:real_E2}}, and \textbf{\Cref{subfig:real_E3}}: theoretical predictions obtained by assuming $ \w_K = \w_Q = \bmu = \zo$;  
\textbf{\Cref{subfig:real_E4}}, \textbf{\Cref{subfig:real_E5}}, and \textbf{\Cref{subfig:real_E6}}: theoretical predictions obtained by assuming $ \w_K = \w_Q = \bmu_{\rm base} \sim \NN(\zo, \I_p/p) $, $\bmu \propto \bmu_{\rm base}$ and $\gamma = 1$.} }
\label{real_weight_capacity_versus_gamma}
\end{figure}

\Cref{real_weight_capacity_versus_gamma} compares the interpolation error curves of nonlinear Attention using weights extracted from a pretrained GPT-2 model against our theoretical predictions from \Cref{theo:high_interpolation}, across varying regularization strengths, SNR levels, and activation nonlinearities. 
This numerical experiment serves to empirically validate the full-plus-low-rank decomposition of Attention weights posited in \Cref{ass:low-rank-weights}.

To extract the Attention weights $\W_Q$ and $\W_K$, we use the first Attention head from the 1st, 7th, and 12th Transformer layers of a pretrained GPT-2 model (accessed via HuggingFace).
Specifically, we extract the first and second $m$-sized column blocks from the projection matrix \texttt{model.transformer.h[l].attn.c\_attn.weight} (of shape $m \times 3m$ with $m=768$) as query and key weight matrices.
The weights for a single head are then obtained by selecting the first $ m_{\rm head} = m / n_{\text{heads}} = 64 $ columns from each matrix, consistent with the model's $ n_{\text{heads}} = 12 $-head configuration.

As shown in \Cref{real_weight_capacity_versus_gamma}, the empirical interpolation error curves obtained from pretrained Attention weights closely match the theoretical trends predicted by \Cref{theo:high_interpolation}, as a function of both regularization strength $\gamma$ and SNR. 
In particular, we observe that
\begin{enumerate}
   \item in the absence of input data signal ($\bmu = \zo$), pretrained Attention weights yield slightly lower errors than theory; and 
   \item in the presence of signal, pretrained Attentions perform marginally worse than theory from (manually) aligned weights.
\end{enumerate}
These discrepancies are modest in scale and consistent across both tanh and truncated exponential nonlinearities. We observe that Softmax Attention incurs higher interpolation error than entrywise exponential Attention, but only when meaningful input structure is present. This agreement in trends further supports that the simplified Attention model in Assumption 1 serves as a reasonable abstraction, even compared to real pretrained GPT-2 weights.

% Although some numerical differences exist, such as the empirical curves being slightly lower than the theoretical ones in the left panels but marginally higher in the intermediate SNR region in the right panels, these differences are modest in scale. 
% In both columns of figures, the memorization error consistently decreases with increasing SNR and decreasing regularization.
% and do not affect the validity of our conclusion that the theory effectively characterizes the dominant factors governing memorization error, even under simplified modeling assumptions.

\end{document}